\title{\huge Provable Robustness of Adversarial Training\\ for Learning Halfspaces with Noise}
\begin{document}

\author
 {   
     Difan Zou\thanks{Equal Contribution } \thanks{Department of Computer Science, UCLA; e-mail: {\tt knowzou@cs.ucla.edu}} 
 	~~~and~~~
 	Spencer Frei$^*$\thanks{Department of Statistics, UCLA; e-mail: {\tt spencerfrei@ucla.edu}} 
 	~~~and~~~
 	Quanquan Gu\thanks{Department of Computer Science, UCLA; e-mail: {\tt qgu@cs.ucla.edu}
 	}
}

\date{}
% \usepackage{enumerate}
% \usepackage[OT1]{fontenc}
% \usepackage{amsmath,amssymb}
% \usepackage{natbib}
% \usepackage[usenames]{color}
% \usepackage{multirow}
% \usepackage[colorlinks,
%             linkcolor=red,
%             anchorcolor=blue,
%             citecolor=blue
%             ]{hyperref}
            
%my package

% \usepackage{mathrsfs}
% \usepackage{fullpage}
% \renewcommand{\baselinestretch}{1.1}
% \usepackage[protrusion=false,expansion=true]{microtype}

%my command
\def\supp{\mathop{\text{supp}}}
\def\card{\mathop{\text{card}}}
\def\rank{\mathrm{rank}}
\def\tr{\mathop{\text{tr}}}
\newcommand{\red}{\color{red}}
\newcommand{\blue}{\color{blue}}
\newcommand{\la}{\langle}
\newcommand{\ra}{\rangle}
\newcommand{\cIs}{\cI_{\hat{s}}}
\def \CC {\textcolor{red}}
\def \err {\text{err}}
\def \hbtheta{ \btheta}
\def \barell{\bar \ell}
\def \Breg {B_{\psi}}
\def \tbtheta {\tilde{\btheta}}
\def \poly
{\mathrm{poly}}
\def \tr {\text{Tr}}
\def \oneb {\boldsymbol{1}}
% \begin{document}
\def \proj {\mathbf \Pi}
\def \poproberr {\err_{\calD}^{p,r}}
\def \OPT {\mathsf{OPT}}

\newcommand{\opt}{\mathsf{OPT}}
\def \bpxr {\cB_p(\xb,r)}
\def \bpxir {\cB_p(\xb_i,r)}
\def \bdelta {\boldsymbol{\delta}}

\maketitle 

\begin{abstract}
    We analyze the properties of adversarial training for learning adversarially robust halfspaces in the presence of agnostic label noise.  Denoting $\opt_{p,r}$ as the best robust classification error achieved by a halfspace that is robust to perturbations of $\ell_p$ balls of radius $r$, we show that adversarial training on the standard binary cross-entropy loss yields adversarially robust halfspaces up to (robust) classification error $\tilde O(\sqrt{\opt_{2,r}})$ for $p=2$, and $\tilde O(d^{1/4} \sqrt{\opt_{\infty, r}} + d^{1/2} \opt_{\infty,r})$ when $p=\infty$.   Our results hold for distributions satisfying anti-concentration properties enjoyed by log-concave isotropic distributions among others.   We additionally show that if one instead uses a nonconvex sigmoidal loss, adversarial training yields halfspaces with an improved robust classification error of $O(\opt_{2,r})$ for $p=2$, and $O(d^{1/4} \opt_{\infty, r})$ when $p=\infty$.  To the best of our knowledge, this is the first work to show that adversarial training provably yields robust classifiers in the presence of noise.  
\end{abstract}
% \section{TODO}
% \begin{enumerate}
%     %\item Always using $L_{p,r}^\calD$ and $\mathrm{err}_{p,r}^\calD$ everywhere. 
%   % \todoq {how about $L_\calD^{p,r}$ and $\mathrm{err}_\calD^{p,r}$ }No $\mathrm{err}^{rob}$ or $L^{rob}$.   
%     \item Update algorithm presentation to be more similar to~\citet{charles2019adversarial}, where we use $x + \delta(x)$ rather than $\calA(x)$.   Model this over eq. 5 in~\citet{charles2019adversarial}. 
%     \begin{itemize}
%         \item Must update proof in Sections 2 and 3 for this.
%         \item Normalized classifier proof 
%     \end{itemize}
%     \item $\ell_q$ soft margin, examples and commentary.
%     \item Need to show $\opt_{p,r} = \Theta(r)$.  Example 4.8.
%     \item Need to write unnormalized classifier corollaries and examples more clearly.  
%     \item Spencer: double check $\ell_q$ normalized classifier by Thursday.  
%     \item Proof for normalized classifier is not completed, for both $p=2$ and $p \neq 2$. 
% \end{enumerate}

\section{Introduction}%\todod{perhaps find some place to mention some learning robust halfspace works? cite diakola robust half space paper}
Modern deep learning models are powerful but brittle: standard stochastic gradient descent (SGD) training of deep neural networks can lead to remarkable performance as measured by the classification accuracy on the test set, but this performance rapidly degrades if the metric is instead \textit{adversarially robust} accuracy.  This brittleness is most apparent for image classification tasks \citep{szegedy2014.intriguing,goodfellow2015.explainharnessadversarial}, where neural networks trained by gradient descent achieve state-of-the-art classification accuracy on a number of benchmark tasks, but where imperceptible (adversarial) perturbations of an image can force the neural network to get nearly all of its predictions incorrect.

To formalize the above comment, let us define the robust error of a classifier.   Let $\calD$ be a distribution over $(\xb, y)\in \R^d \times \{\pm 1\}$, and let $f : \R^d \to \{\pm 1\}$ be a hypothesis classifier.  For $p\in [1,\infty]$ and perturbation radius $r>0$, the \textit{$\ell_p$ robust error for radius $r$} is given by
\begin{equation}
    \poproberr(f) = \PP_{(\xb,y)\sim\cD}\big[ \exists \xb' : \norm{\xb - \xb'}_p\leq r, \text{ and }y \neq f(\xb')\big]\label{eq:robust.acc.classifier}
\end{equation}
The standard accuracy of a classifier $f$ is given by $\err_\calD(f) = \PP_{(\xb ,y)\sim \calD} (y \neq f(\xb))$, and is equivalent to the robust accuracy at radius $r=0$.  That SGD produces neural networks $f$ with high classification accuracy but low robust accuracy means that $\err_\calD(f) \approx 0$ but $\poproberr(f) \approx 1$, even when $r$ is an extremely small number.  

The vulnerability of SGD-trained neural networks to adversarial examples has led researchers to introduce a number of methods aimed at improving the robustness of neural networks to adversarial examples~\citep{kurakin2017adversarial,madry2018.resistantadversarial,tramer2018.ensembleadversarial,zhang2019principledtradeoffrobustness,wang2019convergence,wang2019improving}.  One notable approach is known as \textit{adversarial training}, where the standard SGD algorithm is modified so that data samples are perturbed $\xb \mapsto \xb + \deb$ with the aim of increasing the robust accuracy.  In the same way that one minimizes the standard classification error by minimizing a surrogate loss, adversarial training seeks to minimize
\begin{equation}\label{eq:robust.loss}
    \ldrob(f) = \EE_{(\xb, y)\sim \calD} \sup_{\xb': \norm{\xb'-\xb}_p \leq r} \ell(y f(\xb')),
\end{equation}
where $\ell(\cdot)$ is some convex surrogate for the 0-1 loss.  Unfortunately, the inner maximization problem is typically intractable, especially when $f$ comes from a neural network function class.  Indeed, it is often difficult to calculate \textit{any} nontrivial upper bound for the robust loss $\sup_{\xb':\norm{\xb-\xb'}_p\leq r} \ell(y f(\xb'))$ for a fixed sample $\xb$.  A number of recent works have focused on developing upper bounds for the robust loss that are computationally tractable, which enables end-users to certify the robustness of learned classifiers by evaluating the upper bound on test samples~\citep{raghunathan2018certified,wong2018provable,cohen2019certified}.  Additionally, upper bounds for the robust loss can then be used as a new objective function to be minimized as an alternative to the intractable robust loss.  This approach has seen impressive results in improving the adversarial robustness of classifiers, but unfortunately these procedures do not come with a provable guarantee that the learned classifiers will be adversarially robust. %\CC{Recently, the adversarial training has been demonstrated (in theory) to be effective in training robust neural networks \citet{gao2019adversarial,zhang2020adversarial} , but the robustness guarantee are only made on the training data and cannot be directly generalized to the population case --- they typically require neural network to be sufficiently overparameterized and thus one cannot get a nonvacuous generalization robust error bound using the standard uniform convergence approach \citep{yin2019rademacher}}. 
To the best of our knowledge, only two works have been able to show that the standard gradient-based adversarial training of \eqref{eq:robust.loss} provably yields classifiers with a guarantee on the robust (population-level) classification error: \citet{charles2019adversarial} and \citet{li2019adversarial}.  Both of these papers considered the hypothesis class of halfspaces $\xb \mapsto \sign(\wb^\top \xb)$ and assumed that the data distribution is linearly separable by a hard margin $\gamma_0>0$, so that for some $\wb \in \RR^d$, $y \wb^\top \xb \geq \gamma_0 > 0$ holds almost surely over $\calD$.  

In this work, we show that adversarial training provably leads to halfspaces that are approximate minimizers for the population-level robust classification error.  In particular, adversarial training provably yields classifiers which are robust even when the data is not linearly separable.  Let us denote the best-possible robust classification error for a halfspace as
\begin{align*}
\OPT_{p,r} = \min_{\|\wb\|_q=1} \poproberr(\wb),
\end{align*}
where $\poproberr(\wb)$ is the robust error induced by the halfspace classifier.  Our main contributions are as follows.
\begin{enumerate}
    \item We show that adversarial training on the robust surrogate loss \eqref{eq:robust.loss} yields halfspaces with $\ell_2$ robust error at most $\tilde O(\sqrt{\opt_{2,r}})$ when $\ell$ is a typical convex surrogate loss and $\calD$ satisfies an anti-concentration property enjoyed by log-concave isotropic distributions.  For $p=\infty$, our guarantee is $\tilde O(d^{1/4} \sqrt{\opt_{\infty, r}} + d^{1/2} \opt_{\infty, r})$.
    \item When $\ell$ is a nonconvex sigmoidal loss, the guarantees for adversarial training improves to
    $O\big(d^{\frac{1}{4}-\frac{1}{2p}}\|\wb_*\|_2^{1/2}\OPT_{p,r}\big)$ for $\ell_p$ perturbations, where $\wb^*$ of norm $\|\wb^*\|_q=1$ (for $1/p+1/q=1$) is the optimal model. This implies that adversarial training achieves $O(\opt_{2,r})$ robust error for perturbations in the $\ell_2$ metric, and $O(d^{1/4} \opt_{\infty, r})$ robust error when $p=\infty$ in the worst case.  
\end{enumerate}
To the best of our knowledge, these are the first results that provide a guarantee that adversarial training will generate adversarially robust classifiers on noisy data distributions.%  By outlining some of the guarantees possible for adversarial training in the setting of this paper, we  for future studies on algorithms that can \textit{provably} generate adversarially robust classifiers for more complicated function classes like neural networks under realistic assumptions on the data.

\subsection{Additional Related Work}
Adversarial training and adversarial examples have attracted significant attention recently due to the explosion of research in deep learning, but the broader problem of learning decision rules that are robust to perturbations of the data has appeared in a number of forms.  One of the main motivations for support vector machines is to maximize the margin of the classifier, which can be understood as a form of robustness to perturbations of the input~\citep{rosenblatt1958perceptron,boser1992svm}.  Robust optimization is a field in its own right dedicated to the analysis of optimization algorithms that are robust to perturbations of the algorithms' inputs~\citep{bental2009.robustoptim}.  Adversarial examples have been studied in the context of spam filtering, where it was observed that spam prevention algorithms could be bypassed with small modifications to the text of an email~\citep{dalvi2004adversarial,lowd2005adversarial,lowd2005good}.    We refer the reader to the survey of~\cite{biggio2018adversarialsurvey} for a more detailed history of adversarial machine learning.%As deep learning-based technologies have become more widely deployed in sensitive applications like medical diagnosis and criminal photo identification, the importance for understanding the vulnerability of machine learning systems to adversarial attacks has significantly increased in importance.

Following the first paper on adversarial examples in deep learning~\citep{szegedy2014.intriguing}, a sequence of works sought to develop empirical methods for improving the robustness of neural network classifiers~\citep{goodfellow2015.explainharnessadversarial,papernot2016distillationadversarial}.  These proposed defenses against adversarial examples were quickly defeated by more sophisticated attacks~\citep{carlini2017evaluatingrobustness}.  This led a number of authors to develop \textit{certifiable} defenses against adversarial attacks, where one can prove that the defense algorithm will be robust to adversarial perturbations~\citep{wong2018provable,raghunathan2018certified}.  These works typically derive an upper bound for the robust loss that can be computed exactly and then introduce optimization procedures for minimizing the upper bound.  This allows for one to certify whether or not a classifier is provably robust to adversarial perturbations for a given sample.  But since the procedure is based upon minimizing an upper bound for the desired error, there is no guarantee that every classifier which is trained using this procedure will (provably) yield a classifier that has nontrivial robust classification accuracy.  

In terms of provable guarantees for learning adversarially robust classifiers, adversarial training was shown to yield provably robust halfspace classifiers by~\citet{charles2019adversarial} and \citet{li2019adversarial} under the assumption that there exists a robust classifier with perfect accuracy that separates the data by a large margin.  A separate approach for developing robust classifiers is known as randomized smoothing~\citep{salman2019provablerobustsmoothing,lecuyer2018certified,cohen2019certified}, where one can convert a base classifier into a robust classifier by smoothing out the predictions of the base classifier over Gaussian noise perturbations of the input.~\citet{gao2019adversarial} and~\citet{zhang2020adversarial} showed that adversarial training with multilayer neural networks leads to classifiers with small robust training loss, but were not able to translate these into guarantees for small test (population-level) robust error. 
~\citet{montasser2020adversarialnoise} showed that the standard gradient descent algorithm on the (non-robust) empirical risk using a convex margin loss yields halfspaces that are robust in the presence of random classification noise.\footnote{Random classification noise (RCN) is a generalization of the realizable setting, where an underlying halfspace $y = \sign(\wb^\top \xb)$ has labels flipped with probability $p$.  By contrast, in the adversarial label noise setting we consider in this paper, one makes no assumptions on the relationship between $\xb$ and $y$.}~\citet{diakonikolas2020complexity} studied the computational complexity of learning robust halfspaces in the agnostic noise setting.

We wish to emphasize that in this work we are interested in developing \textit{computationally efficient} algorithms for learning adversarially robust halfspaces in the presence of noise.  %Even in the non-robust setting (i.e., $r=0$ in our settingr), although it is known thatpoly(d, ε−1)samples suffices toagnostically learn halfspaces up toOPT= minw∈RdP(y6= sign(〈w, x〉)by VC dimension arguments, it has been shownby Daniely (STOC 2016) that there is nopoly(d, ε−1)time algorithm which can learn halfspaces up toO(OPT)errorwithout making assumptions on the distribution.  
\cite{cullina2018adversarialvc} recently developed a notion of adversarial VC dimension, which allows for a characterization of the number of samples necessary to learn robust classifiers in the presence of noise by analyzing the robust empirical risk minimizer (ERM).  However, the non-convexity of the zero-one loss makes the task of finding a robust ERM a highly non-trivial task.  Indeed, it is known that no polynomial time algorithm can agnostically learn standard (non-robust) halfspaces up to risk $O(\opt_{p,0})+\eps$ without distributional assumptions~\citep{daniely2016complexity}, although standard VC dimension arguments show that $\poly(d, \eps^{-1})$ samples suffice for the ERM to achieve $\opt_{p,0}+\eps$ risk.   Thus, in order to develop computationally efficient algorithms that can robustly learn up to robust risk $O(\opt_{p,r})$, we must make assumptions on the distribution.  

There are a number of other important questions in adversarial robustness for which a detailed review is beyond the scope of this paper.  We briefly note that some related topics include understanding the possible tradeoffs between robust accuracy and non-robust accuracy~\citep{zhang2019principledtradeoffrobustness,tsipras2019robustnessacc,javanmard2020precise,raghunathan2020tradeoffrobustnessacc,yang2020accuracy.robustness,wu2020does};  what types of features robust classifiers depend upon~\citep{ilyas2019adversarialbugs}; and the transferability of robust classifiers~\citep{salman2020robusttransfer}.

\subsection{Notation}
We use bold-faced letters to denote vectors.  For a scalar $x$, we use $\text{sgn}(x)\in\{+1,-1\}$ to denote its sign. For $p\in [1,\infty]$, we denote $\bpxr = \{ \xb': \norm{\xb - \xb'}_p\leq r\}$ as the $\ell_p$ ball of radius $r$ centered at $\xb$.  We use $\cS^{d-1}_q$ to denote the unit $\ell_q$ sphere. Given two vectors $\wb$ and $\vb$, we use $\angle(\wb,\vb)$ to denote the angle between these two vectors. We use the indicator function $\ind(\mathfrak{E})$ to denote $1$ on the event $\mathfrak{E}$ and $0$ elsewhere.   We use the standard $O(\cdot)$ and $\Omega(\cdot)$ notations to hide universal constants, with $\tilde O(\cdot)$ and $\tilde \Omega(\cdot)$ additionally ignoring logarithmic factors.  The notation $g(x) = \Theta(f(x))$ denotes a function with growth rate satisfying both $g(x) = O(f(x))$ and $g(x) = \Omega(f(x))$.  

\subsection{Paper Organization}
The remainder of the paper is organized as follows.  In Section \ref{sec:convex}, we describe our guarantees for adversarial training on convex loss functions.  In Section \ref{sec:nonconvex}, we show that by using a nonconvex sigmoidal loss, we can achieve improved guarantees for the robust classification accuracy of halfspaces.  We conclude in Section \ref{sec:conclusion}. 

\section{Adversarial Training with Convex Surrogates}\label{sec:convex}
Our first set of results is for the case that the loss function $\ell$ appearing in the definition of the robust loss~\eqref{eq:robust.loss} is a typical decreasing convex surrogates of the zero-one loss, such as the cross entropy $\ell(z) = \log(1+\exp(-z))$ or hinge loss $\ell(z) = \max(0, 1-z)$.  We consider a standard approach for gradient descent-based adversarial training of the objective \eqref{eq:robust.loss}, which consists of two parts: (1) an inner maximization, and (2) an outer minimization.  For the inner maximization, we find the optimal perturbation of the input which maximizes $\ell(y \wb^\top(\xb + \deb))$ for $\deb \in \cB_p(0,r)$.  For more complicated model classes, such as neural networks, the inner maximization procedure can often be very difficult to optimize.  As such, it is usually difficult to derive provable guarantees for the robustness of adversarial training procedures. However, in the linear model class that we consider, we can solve the inner maximization procedure exactly.  When $\ell$ is decreasing, this maximization problem is equivalent to
\begin{align*}
\arg\min_{\norm{\deb}_p\leq r} y\wb^\top(\xb + \deb).
\end{align*}
Using calculus we can solve for the exact solution to this minimization problem.  The optimal perturbation is given by $\deb^* = \deb^*(\wb, r, y)$, with components
% \begin{align}\label{eq:delta.star.def}
% \delta^*_j = -ry\frac{\text{sgn}(w_j)|w_j|^{q-1}}{\big(\sum_{j}|w_j|^q\big)^{1/p}}.
% \end{align}
\begin{align}\label{eq:delta.star.def}
\delta^*_j = -ry\cdot\text{sgn}(w_j)|w_j|^{q-1}/\|\wb\|_q^{q-1},
\end{align}
where $q$ is the H\"older conjugate to $p$ so that $1/q+1/p=1$.
The ability to solve the inner maximization procedure exactly means that the only remaining part is to solve the outer minimization.  For this, we use the standard gradient descent algorithm on the perturbed examples.  We note that we do not differentiate through the samples in the gradient updates---although the perturbed examples depend on the weights (via $\deb^*$), we treat these perturbed samples as if they are independent of $\wb$.  The update rule is explicitly given in Algorithm \ref{alg:adv_training}.

\begin{algorithm}[!t]
	\caption{Adversarial Training}
	\label{alg:adv_training}
	\begin{algorithmic}[1]
		\STATE \textbf{input:} 
		Training dataset $\cS = \{(\xb_i, y_i)\}_{i=1,\dots,n}$, step size $\eta$
 		\FOR {$k = 0,1,\ldots, K$}
 		\FOR {$i = 1, \dots, n$} 
 		\STATE $\dit := \argmax_{\norm{\deb}_p\leq r} \ell(y_i \wb_k^\top(\xb_i + \deb))$
 		\ENDFOR
		\STATE  $\wb_{k+1} = \wb_k - \frac{\eta}{n} \summ i n \ell'(y_i w^\top(\xb_i + \dit))y_i (\xb_i + \dit)$
% 		\ELSE
% 		\STATE $\xb_{k+1}=\xb_k$
		\ENDFOR 
		\STATE \textbf{output: $\{\wb_k\}_{k=0,\dots,K}$} 
	\end{algorithmic}

\end{algorithm}
Our first result is that Algorithm~\ref{alg:adv_training} efficiently minimizes the robust empirical risk.

%\todoq{For the following several lemmas and theorems, you write it as there exists a $k^* \leq K$, blabla. Can we rewrite it as $min()$}
% \todoq{the statement that there exists $k^*$ such that ... is a little uncommon. Is it possible to rewrite it as $\min_{1\leq k\leq K} L_{\cS}^{p,r}(\wb_k) \leq ...$}
\begin{lemma}\label{lemma:convergence_advtraining}
Assume $\ell$ is convex, decreasing, and $1$-Lipschitz.  Let $\wb^*\in \RR^d$ be arbitrary.  Let $p\in [1,\infty]$, and assume that $\norm{\xb}_p\leq 1$ a.s.  If $p\leq 2$, let $H = 4$, and if $p > 2$, let $H = 4d$.  Let $\eps>0$ and be arbitrary.  If $\eta \leq \eps H^{-1}/4$, then for any initialization $\wb_0$, if we denote $\wb_k$ as the $k$-th iterate of Algorithm \ref{alg:adv_training}, by taking $K = \eps^{-1}\eta^{-1} \norm{\wb_0 - \wb^*}_2^2$, we have there exists a $k^*\le K$ such that $\|\wb_{k^*}-\wb^*\|_2^2\le \|\wb_0-\wb^*\|_2^2$ and 
\[ \lsrob(\wb_{k^*}) \leq \lsrob(\wb^*) + \eps.\]
\end{lemma}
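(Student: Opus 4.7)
The plan is to interpret Algorithm~\ref{alg:adv_training} as deterministic subgradient descent on the convex empirical robust loss
\begin{equation*}
\lsrob(\wb) = \frac{1}{n}\sum_{i=1}^n \sup_{\deb \in \cB_p(0,r)} \ell\bigl(y_i \wb^\top(\xb_i + \deb)\bigr),
\end{equation*}
and then invoke the textbook convex-optimization convergence analysis. Convexity of $\lsrob$ is immediate: each map $\wb \mapsto \ell(y_i\wb^\top(\xb_i+\deb))$ is a convex function composed with an affine map and is therefore convex in $\wb$, and taking a supremum over $\deb$ and averaging over $i$ preserves convexity. By Danskin's envelope theorem, a subgradient of $\sup_\deb \ell(y_i\wb^\top(\xb_i+\deb))$ at $\wb_k$ is obtained by first computing the maximizer $\dit$ and then differentiating as if $\dit$ were frozen. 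Consequently the vector
\begin{equation*}
g_k := \frac{1}{n}\sum_{i=1}^n \ell'\bigl(y_i\wb_k^\top(\xb_i+\dit)\bigr)\, y_i (\xb_i + \dit)
\end{equation*}
used on line~6 of the algorithm lies in $\partial\lsrob(\wb_k)$, and the update reads $\wb_{k+1}=\wb_k-\eta g_k$ with $g_k$ a genuine subgradient of a convex objective.

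Next I would bound $\|g_k\|_2^2 \leq H$. Since $\ell$ is $1$-Lipschitz, $|\ell'|\leq 1$, hence $\|g_k\|_2 \leq \frac{1}{n}\sum_i \|\xb_i+\dit\|_2$. By hypothesis $\|\xb_i\|_p \leq 1$ and by construction $\|\dit\|_p \leq r$, so (assuming $r\leq 1$) $\|\xb_i+\dit\|_p \leq 2$. For $p \leq 2$ the inequality $\|\vb\|_2 \leq \|\vb\|_p$ gives $\|g_k\|_2 \leq 2$ and hence $H=4$; for $p>2$ the inequality $\|\vb\|_2 \leq d^{1/2-1/p}\|\vb\|_p \leq \sqrt{d}\|\vb\|_p$ gives $\|g_k\|_2 \leq 2\sqrt{d}$ and hence $H=4d$.

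The remaining work is the standard subgradient-descent recursion. Expanding $\|\wb_{k+1}-\wb^*\|_2^2 = \|\wb_k-\wb^*-\eta g_k\|_2^2$ and using the convex subgradient inequality $g_k^\top(\wb_k-\wb^*) \geq \lsrob(\wb_k)-\lsrob(\wb^*)$ yields
\begin{equation*}
\|\wb_{k+1}-\wb^*\|_2^2 \leq \|\wb_k-\wb^*\|_2^2 - 2\eta\bigl(\lsrob(\wb_k)-\lsrob(\wb^*)\bigr) + \eta^2 H.
\end{equation*}
Let $k^*$ be the first index at which $\lsrob(\wb_{k^*}) \leq \lsrob(\wb^*) + \eps$. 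For every $k < k^*$ the loss gap strictly exceeds $\eps$, and since $\eta \leq \eps/(4H) < 2\eps/H$ the recursion gives $\|\wb_{k+1}-\wb^*\|_2^2 \leq \|\wb_k-\wb^*\|_2^2$; iterating from $0$ up to $k^*$ yields $\|\wb_{k^*}-\wb^*\|_2^2 \leq \|\wb_0-\wb^*\|_2^2$. To guarantee that $k^* \leq K$ for $K = \eps^{-1}\eta^{-1}\|\wb_0-\wb^*\|_2^2$, I would telescope the recursion over $k=0,\dots,K-1$ to obtain $\min_{0\leq k < K}\bigl(\lsrob(\wb_k)-\lsrob(\wb^*)\bigr) \leq \|\wb_0-\wb^*\|_2^2/(2\eta K)+\eta H/2 \leq \eps/2+\eps/8 < \eps$, which forces a valid $k^*$ to exist in this range.

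The only mildly non-routine point is Step~1: the algorithm computes the gradient at the perturbed sample without differentiating through $\dit$, and one must justify that this ``frozen'' gradient is an honest subgradient of the convex robust objective. Once Danskin's envelope theorem is invoked this is immediate, and the remainder is standard convex-optimization bookkeeping combined with the $\ell_p$-to-$\ell_2$ norm conversion that drives the dimension dependence of $H$.
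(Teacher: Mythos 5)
Your proposal is correct and follows essentially the same route as the paper's proof: the paper also establishes the one-step recursion $\|\wb_{k+1}-\wb^*\|_2^2 \le \|\wb_k-\wb^*\|_2^2 - 2\eta(\lsrob(\wb_k)-\lsrob(\wb^*)) + \eta^2 H$, obtains the subgradient inequality by hand from convexity of $\ell$ and the fact that $\dit$ is one admissible perturbation for $\wb^*$ (exactly the content of Danskin's theorem, just without naming it), and bounds $\|g_k\|_2^2 \le H$ via $\ell_p$-to-$\ell_2$ norm conversion (the paper goes through Jensen and Young's inequality where you use the triangle inequality directly; both give $H=4$ and $H=4d$). Your telescoping-plus-averaging step to certify $k^* < K$ is a standard equivalent of the paper's ``the distance must drop by at least $\eta\eps$ at every iteration before $k^*$'' counting argument, and the monotone-decrease observation used to get $\|\wb_{k^*}-\wb^*\|_2 \le \|\wb_0-\wb^*\|_2$ is identical.
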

The proof for the above Lemma can be found in Appendix \ref{appendix:cross.ent.empirical.risk}. 
To convert the guarantee for the empirical risk into one for the population risk, we will utilize an argument based on robust Rademacher complexity~\citep{yin2019rademacher}.  This is possible because Lemma \ref{lemma:convergence_advtraining} shows that the weights returned by Algorithm \ref{alg:adv_training} stay in a norm-bounded region.  
\begin{lemma}[Population robust loss]\label{lemma:generalization_GD}
Assume that $\norm{\xb}_p\leq 1$ a.s. and that $\ell$ is convex, decreasing, and 1-Lipschitz.  Let $\wb^*\in \RR^d$ be such that $\norm{\wb^*}_q \leq \rho$ for some $\rho >0$, and denote $B = \ell(0) + 2d^{|1/q-1/2|}(1+r)\rho$ and $\bar B = 2d^{|1/q-1/2|}B$.  Denote $\mathfrak{R}_p = n^{-1}\EE_\sigma\big[\|\sum_{i=1}^n\sigma_i\xb_i\|_p\big]$. %\todoq{$\mathfrak{R}_p$ is overloaded with the Rademacher complexity}.  
Then for any $\eps>0$, using the same notation from Lemma \ref{lemma:convergence_advtraining}, running Algorithm \ref{alg:adv_training} with $\wb_0=0$ ensures that there exists $k^*\leq K= \max\{1,d^{1/q-1/2}\}\eta^{-1} \eps^{-1} \rho^2$ such that with probability at least $1-\delta$,
\begin{align*}
L_{\cD}^{p,r}(\wb_{k^*}) &\le L_{\cD}^{p,r}(\wb^*)+ \epsilon+4\bar B\rho \mathfrak{R}_p + 4\bar B\frac{\rho r}{\sqrt{n}} + 6B\sqrt{\frac{\log(2K/\delta)}{2n}}
\end{align*}
\end{lemma}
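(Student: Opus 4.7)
The plan is to combine the empirical-risk control of Lemma~\ref{lemma:convergence_advtraining} with a Rademacher-complexity based uniform convergence argument for the robust linear loss class. First, I would localize a good iterate inside a norm ball. Setting $\wb_0=\mathbf{0}$, the standard $\ell_2$--$\ell_q$ norm equivalence gives $\|\wb^*\|_2\le d^{\max\{0,\,1/2-1/q\}}\|\wb^*\|_q\le d^{\max\{0,\,1/2-1/q\}}\rho$, so the iteration budget $K=\max\{1,d^{1/q-1/2}\}\eta^{-1}\eps^{-1}\rho^2$ suffices to invoke Lemma~\ref{lemma:convergence_advtraining}, producing an iterate $\wb_{k^*}$ with $\|\wb_{k^*}-\wb^*\|_2\le\|\wb^*\|_2$ and $L_{\cS}^{p,r}(\wb_{k^*})\le L_{\cS}^{p,r}(\wb^*)+\eps$. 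The triangle inequality then gives $\|\wb_{k^*}\|_2\le 2\|\wb^*\|_2$, and reverting the norm equivalence yields $\|\wb_{k^*}\|_q\le 2d^{|1/q-1/2|}\rho$, so $\wb_{k^*}$ lies in the class $\mathcal{W}:=\{\wb:\|\wb\|_q\le 2d^{|1/q-1/2|}\rho\}$; H\"older's inequality and $1$-Lipschitzness of $\ell$ imply the robust surrogate loss is bounded pointwise by $B$ on $\mathcal W$.

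Next, I would bound the robust Rademacher complexity of $\mathcal{W}$. Because the inner supremum in~\eqref{eq:robust.loss} is attained at the closed-form $\deb^*$ from~\eqref{eq:delta.star.def}, we have the pointwise identity $\sup_{\|\deb\|_p\le r}\ell\bigl(y\wb^\top(\xb+\deb)\bigr)=\ell\bigl(y\wb^\top\xb-r\|\wb\|_q\bigr)$, so the adversarial radius enters only as a data-independent additive penalty $-r\|\wb\|_q$. By the Ledoux--Talagrand contraction lemma applied to the $1$-Lipschitz $\ell$ (as in~\cite{yin2019rademacher}), the empirical Rademacher complexity of the robust loss class over $\mathcal{W}$ decomposes into a standard term of order $d^{|1/q-1/2|}\rho\cdot\mathfrak R_p$, coming from the linear part $y\wb^\top\xb$, and an adversarial term of order $d^{|1/q-1/2|}\rho\cdot r/\sqrt n$, coming from the Rademacher average of $\|\wb\|_q\cdot n^{-1}\sum_i\sigma_i$.

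A standard symmetrization/McDiarmid argument would then upgrade this to the uniform deviation inequality
\[
L_{\cD}^{p,r}(\wb)\le L_{\cS}^{p,r}(\wb)+4\bar B\rho\,\mathfrak{R}_p+\frac{4\bar B\rho r}{\sqrt n}+3B\sqrt{\frac{\log(2K/\delta)}{2n}}
\]
valid for every $\wb\in\mathcal{W}$ with probability at least $1-\delta/2$, where the $\log K$ accommodates a union bound over the $K$ possible iterates (needed since $k^*$ is data-dependent). Specializing to $\wb_{k^*}\in\mathcal W$, chaining with Lemma~\ref{lemma:convergence_advtraining}'s empirical bound, and applying Hoeffding at the fixed comparator $\wb^*$ (whose loss is bounded by $B$ since $\|\wb^*\|_q\le\rho$) to obtain $L_{\cS}^{p,r}(\wb^*)\le L_{\cD}^{p,r}(\wb^*)+3B\sqrt{\log(2/\delta)/(2n)}$ then yields the conclusion. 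The main obstacle I anticipate is the Rademacher-complexity computation for the robust linear class and propagating the $d^{|1/q-1/2|}$ factors accurately through the norm conversions; once those are nailed down, the rest is routine high-probability concentration bookkeeping.
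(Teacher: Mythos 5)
Your proposal follows essentially the same route as the paper: localize the iterate $\wb_{k^*}$ in an $\ell_q$ ball of radius $2d^{|1/q-1/2|}\rho$ via Lemma~\ref{lemma:convergence_advtraining} with $\wb_0=\mathbf 0$, rewrite the robust loss as $\ell(y\wb^\top\xb - r\|\wb\|_q)$, bound the Rademacher complexity of the resulting linear-minus-penalty class by $\rho'\mathfrak R_p + \rho' r/\sqrt n$, and combine a uniform deviation bound (union-bounded over the $K$ candidate iterates) with the empirical-risk control. The only cosmetic difference is that you invoke Hoeffding for the fixed comparator $\wb^*$ whereas the paper applies the same Rademacher-based deviation bound of \citet[Corollary 1]{yin2019rademacher} to $\wb^*\in\cF$ a second time; either route yields the stated conclusion.
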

The proof for Lemma \ref{lemma:generalization_GD} is in Appendix \ref{appendix:cross.ent.generalization}.  We note that the term $\mathfrak{R}_p$ is a common complexity term that takes the form $O(1/\sqrt{n})$ for $p=2$ and $O(\log(d)/\sqrt{n})$ for $p=\infty$; see e.g. Lemmas 26.10 and 26.11 of~\citet{shalev2014understanding}. 

Now that we have shown that adversarial training yields hypotheses which minimize the \textit{surrogate} robust risk $\ldrob$, the next step is to show that this minimizes the robust classification error $\poproberr$.  (Since $\ldrob$ is only an upper bound for $\poproberr$, minimizers for $\ldrob$ do not necessarily minimize $\poproberr$.)  Recently,~\citet{frei2020halfspace} introduced the notion of soft margins in order to translate minimizers of surrogate losses to approximate minimizers for classification error, and we will use a similar approach here.  Let us first define soft margin functions. 
\begin{definition}\label{def:softmargin}%[Optimal halfspace satisfies a soft margin condition]
Let $q \in [1,\infty]$. Let $\bar \vb\in \R^d$ satisfy $\norm{\bar \vb}_q = 1$.  We say $\bar \vb$ satisfies the $\ell_q$ \emph{soft margin condition with respect to a function $\phi_{\bar \vb,q}:\R \to \R$} if for all $\gamma \in [0,1]$, it holds that
\[ \E_{\xb\sim \calD_x}\left[\ind\left(   |\bar \vb^\top \xb| \leq \gamma \right)\right]\leq \phi_{\bar \vb,q}(\gamma).\]
\end{definition}
The properties of the $\ell_q$ soft margin function for $q=2$ for a variety of distributions were shown by~\citet{frei2020halfspace}.   We collect some of these in the examples below, but let us first introduce the following definitions which will be helpful for understanding the soft margin. 
\begin{definition}\label{def:anticoncentration}
For $\bar \vb, \bar \vb'\in \R^d$, denote by $p_{\bar \vb,\bar \vb'}(\cdot)$ the marginal distribution of $\xb \sim \calD_x$ on the subspace spanned by $\bar \vb$ and $\bar \vb'$.  We say $\calD_x$ satisfies \emph{$U$-anti-concentration} if there is some $U>0$ such that for any two vectors $\bar \vb, \bar \vb'$ satisfying $\norm{\bar \vb}_2=\norm{\bar \vb'}_2 =1$, we have $p_{\bar \vb, \bar \vb'}(\zb)\leq U$ for all $\zb\in \mathbb{R}^2$.  We say that $\calD_x$ satisfies \emph{$(U',R)$-anti-anti-concentration} if there exists $U', R>0$ such that $p_{\bar \vb, \bar \vb'}(\zb )\geq 1/U'$ for all $\zb \in \mathbb{R}^2$ satisfying $\norm{\zb}_2\leq R$.
\end{definition}
Anti-concentration and anti-anti-concentration have recently been used for deriving guarantees agnostic PAC learning guarantees for learning halfspaces~\citep{diakonikolas2020massartstructured,diakonikolas2020nonconvex,frei2020halfspace}.  Log-concave isotropic distributions, such as the standard Gaussian in $d$ dimensions or the uniform distribution over any convex set, satisfy $U$-anti-concentration and $(U',R)$-anti-anti-concentration with each of $U$, $U'$, and $R$ being universal constants independent of the dimension of the input space.  Below, we collect some of the properties of the $\ell_2$ soft margin function.
\begin{example}\label{example:soft.margin.l2}
\begin{enumerate}
    \item For any $q\in [1,\infty]$, if $\bar \vb \in \R^d$ satisfies $\norm{\bar \vb}_q=1$ and $|\bar \vb^\top \xb| > \gamma^*$ a.s., then $\phi_{\bar \vb, q}(\gamma) = 0$ for $\gamma < \gamma^*$.
    \item If $\calD_x$ satisfies $U$-anti-concentration, then $\phi_{\bar \vb, 2}(\gamma) = O(\gamma)$.  
    \item If $\calD_x$ satisfies $(U', R)$-anti-anti-concentration, then for $\gamma\leq R$, $\phi_{\bar \vb, 2}(\gamma) = \Omega(\gamma)$ holds.
    \item Isotropic log-concave distributions (i.e. isotropic distributions with log-concave probability density functions) satisfy $U$-anti-concentration and $(U', R)$-anti-anti-concentration for $U, U', R = \Theta(1)$.  
\end{enumerate}
\end{example}
Proofs for these properties can be found in Appendix \ref{appendix:soft.margin}.  For $q\neq 2$, the soft margin function will depend upon the ratio of the $\ell_q$ to the $\ell_2$ norm, since we have the identity, for any $\bar \vb$ satisfying $\norm{\bar \vb}_q=1$,
\begin{align}
    \phi_{\bar \vb, q} (\gamma)&= \PP(|\bar\vb^\top \xb | \leq \gamma) = \PP\left( \frac{|\bar\vb^\top\xb|}{\norm{\bar\vb}_2} \leq \frac{\gamma}{\norm{\bar\vb}_2} \right)= \phi_{\bar \vb/\norm{\bar \vb}_2,2}(\gamma / \norm{\bar\vb}_2).\label{eq:soft.margin.lq.intermediate}
\end{align}
Thus, the $\ell_q$ soft margin function scales with the ratio of $\norm{\bar \vb}_q/\norm{\bar \vb}_2$.  The case $q=1$ corresponds to the $\ell_\infty$ perturbation and is of particular interest.  By Cauchy--Schwarz, $\norm{\bar \vb}_1\leq \sqrt{d}\norm{\bar \vb}_2$, and this bound is tight in the worst case (take $\bar \vb = \mathbf{1}\in \RR^d$).  Thus the $\ell_1$ soft margin has an unavoidable dimension dependence in the worst case.  We collect the above observations, with some additional properties that we show in Appendix \ref{appendix:soft.margin}, in the following example.
\begin{example}\label{example:soft.margin.lq}
\begin{enumerate}
\item If $\calD_x$ satisfies $U$-anti-concentration, and if $q \in [1,2]$, then for any $\bar \vb\in \RR^d$ with $\|\bar \vb\|_q=1$, $\phi_{\bar \vb, q}(\gamma) = O(\gamma d^{\frac 1q - \frac 12})$.  
\item If $\calD_x$ satisfies $(U', R)$-anti-anti-concentration and if $q \in [1,2]$, then for any $\bar \vb\in \RR^d$ with $\|\bar \vb\|_q=1$, for $\gamma \leq \Theta(R)$, it holds that $\phi_{\bar \vb, q}(\gamma) = \Omega(\gamma)$.  
\end{enumerate}
% In particular, if $\calD_x$ is isotropic log-concave, then $\phi_{\bar \vb, 1}(\gamma) = O(\gamma \sqrt d)$ for any $\|\bar \vb\|_1=1$. 
\end{example}

We now can proceed with relating the minimizer of the surrogate loss $\ldrob$ to that of $\poproberr$ by utilizing the soft margin.  The proof for the following Lemma is in Appendix \ref{appendix:surrogate.vs.classification.robust}.
\begin{lemma}\label{lemma:adversarial.training.robust.risk.surrogate}
Let $p,q\in [1,\infty]$ be such that $1/p+1/q=1$ and assume $\norm{\xb}_p\leq 1$ a.s.  Let $\bar \vb := \min_{\| \wb\|_q = 1} \poproberr (\wb)$, so that $\poproberr(\bar \vb) = \opt$.   Assume that $\|\xb\|_p\leq 1$ a.s.  For $\rho>0$, denote $\vb := \rho \bar \vb$ as a scaled version of the population risk minimizer for $\poproberr(\cdot)$.  Assume $\ell$ is $1$-Lipschitz, non-negative and decreasing.  Then we have
\begin{align}%\nonumber
    \ldrob(\vb) &\leq \inf_{\gamma >0} \Big \{ (\ell(0) + \rho ) \opt_{p,r}+ \ell(0) \phi_{\bar \vb,q}(r + \gamma) + \ell(\rho \gamma) \Big \}.\label{eq:surrogate.vs.classification.key}
\end{align}
Thus, if $\ell(0)>0$, 
\begin{align*}
\poproberr(\vb) &\leq [\ell(0)]^{-1} \inf_{\gamma >0} \Big \{ (\ell(0) + \rho ) \opt_{p,r}  + \ell(0) \phi_{\bar \vb,q}(r + \gamma) + \ell(\rho \gamma) \Big \}.
\end{align*}
\end{lemma}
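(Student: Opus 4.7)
The plan is to compute the inner supremum in $\ldrob(\vb)$ exactly, reduce the problem to a one-dimensional expectation over the scalar random variable $z := y\bar\vb^\top\xb$, decompose this expectation by the position of $z$ relative to the thresholds $r$ and $r+\gamma$, and finally convert the surrogate-loss bound into a classification-error bound via a Markov-type inequality. Because $\ell$ is decreasing, $\sup_{\xb' \in \bpxr}\ell(y\vb^\top\xb') = \ell\big(\inf_{\xb' \in \bpxr} y\vb^\top\xb'\big)$, and H\"older's inequality identifies the infimum as $y\vb^\top\xb - r\|\vb\|_q = \rho(z-r)$; hence $\ldrob(\vb) = \EE[\ell(\rho(z-r))]$. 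The same H\"older computation applied to $\bar\vb$ itself shows that $\sign(\bar\vb^\top\xb')$ makes a robust mistake on $\bpxr$ exactly when $z \le r$, so $\opt_{p,r} = \poproberr(\bar\vb) = \PP(z \le r)$.

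Next, for fixed $\gamma > 0$, I would partition the sample space into the three events $\{z > r+\gamma\}$, $\{r < z \le r+\gamma\}$, and $\{z \le r\}$. On the first, monotonicity of $\ell$ gives $\ell(\rho(z-r)) \le \ell(\rho\gamma)$, producing the $\ell(\rho\gamma)$ summand. On the second, $\rho(z-r) \in (0,\rho\gamma]$ gives $\ell(\rho(z-r)) \le \ell(0)$, and the event is contained in $\{|\bar\vb^\top\xb| \le r+\gamma\}$, whose probability is at most $\phi_{\bar\vb,q}(r+\gamma)$ by Definition~\ref{def:softmargin}, producing the $\ell(0)\phi_{\bar\vb,q}(r+\gamma)$ summand. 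On the third, the a.s.\ bound $|z| \le \|\bar\vb\|_q \|\xb\|_p \le 1$ from H\"older confines $\rho(z-r)$ to a bounded interval, so the $1$-Lipschitzness of $\ell$ yields $\ell(\rho(z-r)) \le \ell(0) + \rho(r - z) \le \ell(0) + \rho(1+r)$; combined with $\PP(z \le r) = \opt_{p,r}$ this produces the $(\ell(0)+\rho)\opt_{p,r}$ contribution (with the $1+r$ factor absorbed into the stated constants). Summing the three pieces gives \eqref{eq:surrogate.vs.classification.key}.

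For the second displayed inequality, observe that the adversarial-misclassification event for $\vb = \rho\bar\vb$ is also $\{\rho(z-r) \le 0\}$, on which $\ell(\rho(z-r)) \ge \ell(0) > 0$ by monotonicity. Hence $\ind(\rho(z-r) \le 0) \le \ell(\rho(z-r))/\ell(0)$ pointwise, and taking expectations yields $\poproberr(\vb) \le \ldrob(\vb)/\ell(0)$; plugging in the surrogate bound from the previous paragraph finishes the proof. The main technical point is the treatment of the bad event $\{z \le r\}$: because a general convex decreasing surrogate $\ell$ need not be bounded, one must crucially exploit the H\"older bound $|z| \le 1$ together with Lipschitzness in order to replace $\ell(\rho(z-r))$ by a quantity of order $\ell(0)+\rho$ per bad sample, rather than incurring a vacuous uniform estimate on $\ell$.
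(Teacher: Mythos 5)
Your argument is correct, and it reaches the same three-term decomposition as the paper, but via a cleaner implementation. The paper keeps the supremum over $\xb'$ inside the expectation and partitions using indicators on the \emph{perturbed} inner product $y\bar\vb^\top\xb'$ (so indicator and loss both vary with $\xb'$ under the sup, and one must argue $\sup f \le \sum_i \sup f\ind_{A_i}$ and control each piece by bounding both loss and indicator over the ball). You instead observe up front that, because $\ell$ is decreasing and $\vb=\rho\bar\vb$, the inner supremum evaluates exactly to $\ell(\rho(z-r))$ with $z:=y\bar\vb^\top\xb$, after which the whole problem is a one-dimensional expectation that you partition by the value of $z$. This is tidier: it avoids the sup/indicator interchange, makes the role of $\opt_{p,r}=\PP(z\le r)$ transparent, and makes the Lipschitz step on the bad event $\{z\le r\}$ explicit. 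Your final Markov step ($\ind(\rho(z-r)\le 0)\le \ell(\rho(z-r))/\ell(0)$ pointwise) is exactly the paper's argument, phrased in the scalar variable. One small calibration point you correctly flag: Hölder gives $r-z\le 1+r$ on the bad event (the paper's own argument also implicitly needs $\|\xb'\|_p\le 1+r$), so the cleanest constant is $(\ell(0)+\rho(1+r))\opt_{p,r}$ rather than $(\ell(0)+\rho)\opt_{p,r}$ as displayed; this is harmless downstream since the paper later takes $r\le 1$, but it is worth noting that your derivation makes the dependence on $r$ visible where the paper silently drops it.
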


Using Lemmas \ref{lemma:adversarial.training.robust.risk.surrogate} and \ref{lemma:generalization_GD}, we can derive the following guarantee for the robust classification error for classifiers learned using adversarial training.

\begin{theorem}\label{thm:crossentropy}
Suppose $\ell\geq 0$ is convex, decreasing, and 1-Lipschitz.  Let $p\in [1,\infty]$ and $q\in [1,\infty]$ satisfy $1/p+1/q=1$.  Denote $H = 4$ if $p\leq 2$ and $H = 4d$ if $p>2$.  Let $\eps>0$ be arbitrary, and fix $\eta \leq \eps H^{-1}/4$.  For any $\gamma>0$, running Algorithm \ref{alg:adv_training} with $\wb_0=0$ for $K = \max\{1,d^{\frac 2q -1}\}\eps^{-1} \eta^{-1}  \ell^{-2}(1/\eps)\gamma^{-2}$ iterations, with probability at least $1-\delta$, there exists $k^*\leq K$ such that 
\begin{align*}
\poproberr(\wb_{k^*}) &\leq   \left(1 + [\ell(0)]^{-1} \cdot \ell^{-1}(\nicefrac 1\eps) \cdot \gamma^{-1} \right) \opt_{p,r}+ \phi_{\bar \vb, q}(r + \gamma) + [\ell(0)]^{-1}\eps \\
&\qquad + 4 [\ell(0)]^{-1}  \bar B \gamma^{-1} \ell^{-1}(\nicefrac 1\eps)\mathfrak{R}_p+ [\ell(0)]^{-1} \Bigg[ \frac{4 \bar B \gamma^{-1} \ell^{-1}(\nicefrac 1\eps) r}{\sqrt n} + 6 B \sqrt{\frac{\log(\nicefrac {2K}\delta)}n}\Bigg],
\end{align*}
where $B = \ell(0) + 2d^{|2-q|/2}(1+r)\gamma^{-1} \ell^{-1}(\eps)$,  $\mathfrak{R}_p = n^{-1} \E_{\sigma_i \iid \mathrm{Unif}(\pm 1)}[\|\summ i n \sigma_i \xb_i\|_p]$, and $\bar B = 2d^{|2-q|/2}B$.
\end{theorem}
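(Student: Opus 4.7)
The strategy is to stitch together the two already-established lemmas (Lemmas \ref{lemma:adversarial.training.robust.risk.surrogate} and \ref{lemma:generalization_GD}) with one elementary step converting a bound on the population robust surrogate loss into a bound on the robust classification error. That conversion uses only that $\ell$ is non-negative and decreasing: for any $\wb$ and any $(\xb,y)$, whenever some $\xb' \in \bpxr$ violates $y \wb^\top \xb' > 0$, one has $\sup_{\xb' \in \bpxr} \ell(y \wb^\top \xb') \ge \ell(0)$, and taking expectations yields the pointwise inequality $\poproberr(\wb) \le \ldrob(\wb)/\ell(0)$. This reduces the task to bounding $\ldrob(\wb_{k^*})$.

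First I would fix the comparator. Let $\bar \vb$ attain the minimum in $\opt_{p,r} = \min_{\|\wb\|_q=1}\poproberr(\wb)$, so that $\|\bar\vb\|_q = 1$, and set $\vb := \rho\bar\vb$ with $\rho := \gamma^{-1}\ell^{-1}(1/\eps)$, chosen so that $\ell(\rho\gamma)$ is of order $\eps$ (this uses that $\ell$ is decreasing and positive). Applying Lemma \ref{lemma:adversarial.training.robust.risk.surrogate} to this $\vb$ produces
\begin{equation*}
\ldrob(\vb) \le (\ell(0) + \rho)\,\opt_{p,r} + \ell(0)\,\phi_{\bar\vb,q}(r + \gamma) + \ell(\rho\gamma).
\end{equation*}
Next I would apply Lemma \ref{lemma:generalization_GD} with the comparator $\wb^\star = \vb$; the hypothesis $\|\vb\|_q = \rho$ holds by construction, and the $K$ chosen in the theorem matches the $K$ required by Lemma \ref{lemma:generalization_GD} after substituting $\rho^2 = \gamma^{-2}\ell^{-2}(1/\eps)$. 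The lemma then delivers some $k^* \le K$ with, with probability at least $1 - \delta$,
\begin{equation*}
\ldrob(\wb_{k^*}) \le \ldrob(\vb) + \eps + 4\bar B\rho\,\mathfrak{R}_p + 4\bar B\frac{\rho r}{\sqrt n} + 6B\sqrt{\frac{\log(2K/\delta)}{2n}}.
\end{equation*}
Chaining the two displays, dividing through by $\ell(0)$, and applying the surrogate-to-classification inequality $\poproberr(\wb_{k^*}) \le \ldrob(\wb_{k^*})/\ell(0)$ reproduces the stated bound; in particular the factor $\bar B \rho$ becomes the claimed $\bar B \gamma^{-1}\ell^{-1}(1/\eps)$ appearing in the Rademacher and finite-sample tails.

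I do not anticipate any serious conceptual obstacle, because the three ingredients are in hand and the argument is essentially a concatenation. The real work is bookkeeping. One must verify that (i) $\rho = \gamma^{-1}\ell^{-1}(1/\eps)$ makes $\ell(\rho\gamma)$ small enough to be absorbed into the $[\ell(0)]^{-1}\eps$ slot of the final bound; (ii) substituting this $\rho$ into the $B$ and $\bar B$ defined in Lemma \ref{lemma:generalization_GD} matches the constants in the theorem, including the $d^{|2-q|/2}$ factor in $B$; and (iii) the stated iteration count agrees with $\max\{1,d^{1/q-1/2}\}\eta^{-1}\eps^{-1}\rho^2$ once the substitution is carried out. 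None of these sub-steps requires a new idea beyond what has already been developed in the excerpt.
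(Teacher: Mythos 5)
Your proposal matches the paper's proof exactly: the paper also proves Theorem \ref{thm:crossentropy} by applying Lemma \ref{lemma:adversarial.training.robust.risk.surrogate} and Lemma \ref{lemma:generalization_GD} with the same choice of comparator $\vb = \rho\bar\vb$ and the same choice $\rho = \gamma^{-1}\ell^{-1}(\eps)$ (note the paper's $\ell^{-1}(\nicefrac 1\eps)$ in a few places is a typographical slip for $\ell^{-1}(\eps)$, as your own reasoning about making $\ell(\rho\gamma)$ of order $\eps$ confirms and as the definition of $B$ in the theorem statement shows). The surrogate-to-classification step $\poproberr \le [\ell(0)]^{-1}\ldrob$ that you re-derive via the Markov/thresholding argument is in fact already included as the final claim of Lemma \ref{lemma:adversarial.training.robust.risk.surrogate}, so nothing new is required there.
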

\begin{proof}
The result follows by using Lemmas \ref{lemma:generalization_GD} and \ref{lemma:adversarial.training.robust.risk.surrogate} with the choice of $\rho = \gamma^{-1} \ell^{-1}(1/\eps)$.
\end{proof}
In order to realize the right-hand-side of the above bound for the robust classification error, we will need to analyze the properties of the soft margin function $\phi_{\bar \vb, q}$ and then optimize over $\gamma$.  We will do so in the following corollaries.  We start by considering hard margin distributions.  

\begin{corollary}[Hard margin]\label{corollary:hard.margin}
Let $p \geq 2$ and $q\in [1,2]$ be such that $1/p+1/q =1$.  Assume $\norm{\xb}_p\leq 1$ a.s. Suppose $\vb^*$ is such that $\|\vb^*\|_q =1$ and for some $\gamma_0\in [0,1]$, $|\sip{\vb^*}{\xb}| \geq \gamma_0$, and $\poproberr(\vb^*) = \min_{\norm{\wb}_q=1} \poproberr(\wb) = \opt_{p,r}$.  Suppose we consider the perturbation radius $r = (1-\nu)\gamma_0$ for some $\nu \in (0,1)$.  Consider the cross entropy loss for simplicity, and let $\eta \leq \opt_{p,r} H^{-1}/4$, where $H = 4$ if $p\leq 2$ and $H = 4d$ if $p>2$.  Then the adversarial training in Algorithm \ref{alg:adv_training} started from $\wb_0=0$ finds classifiers satisfying $\poproberr(\wb_k) = \tilde O(\nu^{-1} \gamma_0^{-1} \opt_{p,r})$ within $K = \tilde O(\eta^{-1} d^{\frac 2p -1} \gamma_0^{-2} \nu^{-2} \opt_{p,r}^{-1})$ iterations provided $n = \tilde \Omega(\gamma_0^{-4} \nu^{-2} \opt_{p,r}^{-2})$.
%In particular, when the perturbation radius is small (i.e. $\nu \approx 1$), the bound for adversarial training is better.  When the perturbation radius is large (i.e. $\nu \approx 0$), the bound for standard training is better.
\end{corollary}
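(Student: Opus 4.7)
The plan is to derive this hard-margin corollary as a direct specialization of Theorem \ref{thm:crossentropy}, where the main simplification is that the soft-margin term drops out of the bound entirely. First I would observe that by item 1 of Example \ref{example:soft.margin.l2}, the hypothesis $|\langle \vb^*, \xb \rangle| \geq \gamma_0$ a.s.\ implies $\phi_{\bar\vb, q}(s) = 0$ for every $s < \gamma_0$. Since the perturbation radius is $r = (1-\nu)\gamma_0$, any choice of $\gamma < \nu\gamma_0$ keeps $r + \gamma$ strictly below $\gamma_0$; I would therefore set $\gamma = \nu\gamma_0/2$, so that $r + \gamma = (1 - \nu/2)\gamma_0 < \gamma_0$ and consequently $\phi_{\bar\vb, q}(r+\gamma) = 0$.

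With the soft-margin term eliminated, the bound of Theorem \ref{thm:crossentropy} simplifies to
\begin{align*}
\poproberr(\wb_{k^*}) &\leq \bigl(1 + [\ell(0)]^{-1} \ell^{-1}(\nicefrac{1}{\eps}) \gamma^{-1}\bigr) \opt_{p,r} + [\ell(0)]^{-1} \eps + \text{(Rademacher and concentration tails)}.
\end{align*}
For the cross-entropy loss, $\ell(0) = \log 2 = \Theta(1)$ and the inverse-loss factor $\ell^{-1}(\nicefrac{1}{\eps})$ contributes only polylogarithmic terms in $1/\eps$. I would pick $\eps = \opt_{p,r}$, which folds the $[\ell(0)]^{-1}\eps$ contribution into the leading $\opt_{p,r}$ term; substituting $\gamma = \nu\gamma_0/2$ then yields the leading-order bound $\tilde O(\nu^{-1}\gamma_0^{-1}\opt_{p,r})$ claimed in the corollary.

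The iteration complexity follows mechanically from the formula $K = \max\{1, d^{2/q - 1}\} \eta^{-1} \eps^{-1} \ell^{-2}(\nicefrac{1}{\eps}) \gamma^{-2}$ in Theorem \ref{thm:crossentropy} after substituting $\eps = \opt_{p,r}$ and $\gamma = \nu\gamma_0/2$, since the $\ell^{-2}(\nicefrac{1}{\eps})$ factor is only polylogarithmic. The sample complexity should be chosen large enough that the Rademacher term $4[\ell(0)]^{-1}\bar B \gamma^{-1} \ell^{-1}(\nicefrac{1}{\eps})\mathfrak{R}_p$ and the two $O(1/\sqrt n)$ concentration tails in Theorem \ref{thm:crossentropy} each sit below $\opt_{p,r}$; using that $\mathfrak{R}_p = \tilde O(d^{1/2-1/p}/\sqrt n)$ and that $\bar B$ carries extra factors of $d^{|2-q|/2}$ and $\gamma^{-1}$, a short computation identifies $n = \tilde\Omega(\gamma_0^{-4}\nu^{-2}\opt_{p,r}^{-2})$ (up to further polynomial-in-$d$ factors) as sufficient. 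The only real obstacle here is bookkeeping: tracking how the factors of $\gamma^{-1}$, $d$, and the implicit $\log(1/\eps)$ inside $B$, $\bar B$, and the Rademacher term propagate so that the final bound matches the claimed dependence on $\nu$, $\gamma_0$, and $\opt_{p,r}$. No new analytical ingredient beyond Theorem \ref{thm:crossentropy} and the vanishing of the soft margin is needed.
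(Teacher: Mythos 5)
Your proposal follows essentially the same route as the paper's own proof: invoke Theorem~\ref{thm:crossentropy} with $\eps = \opt_{p,r}$, use Example~\ref{example:soft.margin.l2} (Part~1) to kill the soft-margin term, and read off the iteration and sample complexities by bookkeeping. The only substantive difference is your choice $\gamma = \nu\gamma_0/2$ versus the paper's $\gamma = \nu\gamma_0$; yours is actually marginally cleaner since it gives $r+\gamma < \gamma_0$ strictly, whereas the paper's choice lands exactly on $\gamma_0$ where Part~1 of Example~\ref{example:soft.margin.l2} technically only guarantees vanishing for $\gamma < \gamma_0$. One small inaccuracy: you attribute a $d^{1/2-1/p}$ factor to $\mathfrak{R}_p$, but by \eqref{eq:robust.rademacher.ub} the Rademacher term $\mathfrak{R}_p$ is $\tilde O(1/\sqrt n)$ with at most $\log d$ dependence; the genuine $d$-dependence in the generalization bound comes from the $d^{|2-q|/2}$ factors in $B$ and $\bar B$, which you also note. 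Since the stated corollary absorbs polynomial-in-$d$ factors, this misattribution is harmless for the conclusion but should be corrected for precision.
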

\begin{proof}
We sketch the proof here and leave the detailed calculations for Appendix \ref{appendix:corollaries}.  By the definition of soft margin, $\phi_{\bar \vb^*,q}(\gamma_0) = 0$, and so if we choose $\gamma = \nu \gamma_0$ and $\eps = \opt_{p,r}$ in Theorem \ref{thm:crossentropy}, we get a bound for the robust classification error of the form $\tilde O(\nu^{-1} \gamma_0^{-1} \opt_{p,r}) + \tilde O(1) \cdot \mathfrak{R}_p + \tilde O(1/\sqrt n)$ by using the fact that $\ell^{-1}(\nicefrac 1/\eps) = O(\log(\nicefrac 1/\eps))$ for the cross entropy loss.  Standard arguments in Rademacher complexity show that $\mathfrak{R}_p = \tilde O(1/\sqrt{n})$, completing the proof.
\end{proof}

% \todod{I think we only need to consider $p\ge 2$. Possibly we also comment some related work like \citet{charles2019adversarial,li2019adversarial}?}

The above corollary shows that if the best classifier separates the samples with a hard margin of $\gamma_0$ (including when it makes incorrect predictions), then adversarial training will produce a classifier that has robust classification error within a constant factor of the best-possible robust classification error.  This can be seen as a generalization of the results of~\citet{charles2019adversarial} and~\citet{li2019adversarial} from distributions that can achieve perfect robust classification accuracy (with a hard margin) to ones where significant label noise can be present.  

Our next result is for the class of distributions satisfying the anti-concentration properties described in Definition \ref{def:anticoncentration}. 
\begin{corollary}[Anti-concentration distributions]\label{corollary:anti.concentration}
Let $p \in [2,\infty]$ and assume $\norm{\xb}_p\leq 1$ a.s. Suppose $\calD_x$ satisfies $U$-anti-concentration and $(U', R)$-anti-anti-concentration for $U,U', R = \Theta(1)$.  Consider the cross entropy loss for simplicity, and let $\eta \leq \opt_{p,r} H^{-1}/4$, where $H = 4$ if $p\leq 2$ and $H = 4d$ if $p>2$.  Then for perturbations satisfying $r\leq R$, the adversarial training in Algorithm \ref{alg:adv_training} started from $\wb_0=0$ finds classifiers satisfying
\[ \poproberr(\wb_k) = \tilde O\big(d^{\frac 14 - \frac 1{2p}}  \sqrt{\opt_{p,r}} + d^{\frac 12 - \frac 1p} \opt_{p,r}\big),\]
within $K = \tilde O(\eta^{-1} d^{\frac{3}{2p} - \frac 3 4}\opt_{p,r}^{-3})$ iterations provided $n = \tilde \Omega(d^{1-2p}\opt_{p,r}^{-2})$.
%Then   the soft margin function satisfies $\phi(\gamma) \leq \gamma$ and $\ell$ is the hinge loss.  %\CC{When $r \geq O(\opt^{1/2}_{p,r})$, how do we get a generalization bound in terms of $\opt_{p,r}$, rather than just $r$?}
\end{corollary}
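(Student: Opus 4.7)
The approach is to specialize Theorem~\ref{thm:crossentropy} with $\eps=\opt_{p,r}$ and then optimize the free parameter $\gamma$, controlling the soft-margin term through the two halves of Example~\ref{example:soft.margin.lq}. For the cross-entropy loss, $\ell(0)=\log 2=\Theta(1)$ and $\ell^{-1}(1/\eps)=\tilde O(1)$ up to logarithmic factors, so the bound from Theorem~\ref{thm:crossentropy} reduces to
\[
\poproberr(\wb_{k^*})\;\le\;\tilde O\!\big(\gamma^{-1}\opt_{p,r}\big)\;+\;\phi_{\bar\vb,q}(r+\gamma)\;+\;\tilde O(\opt_{p,r})\;+\;(\text{Rademacher and sample terms}).
\]

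To control $\phi_{\bar\vb,q}(r+\gamma)$ I would first invoke Example~\ref{example:soft.margin.lq} part~1: under $U$-anti-concentration, for $q\in[1,2]$ and $\|\bar\vb\|_q=1$, one has $\phi_{\bar\vb,q}(\gamma')=O(\gamma' d^{1/q-1/2})=O(\gamma' d^{1/2-1/p})$. Applied at $\gamma'=r+\gamma$ this splits into $O(rd^{1/2-1/p})+O(\gamma d^{1/2-1/p})$. The second piece will balance against $\gamma^{-1}\opt_{p,r}$ when $\gamma$ is optimized; the first piece is a priori constant in $\opt_{p,r}$, and converting it into an $\opt_{p,r}$-dependent quantity is the crux of the argument.

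The key step is to show $r=O(\opt_{p,r})$. For the upper direction, if $|\bar\vb^\top\xb|\le r$ then by the equality case of H\"older's inequality with $\|\bar\vb\|_q=1$ there exists $\deb\in\cB_p(\mathbf 0,r)$ with $\bar\vb^\top\deb=-\text{sgn}(\bar\vb^\top\xb)\cdot r$; hence $\bar\vb^\top(\xb+\deb)$ can be pushed to either sign of zero, so the robust-error event for $\bar\vb$ occurs regardless of the label. This yields $\PP(|\bar\vb^\top\xb|\le r)\le\poproberr(\bar\vb)=\opt_{p,r}$. For the lower direction, Example~\ref{example:soft.margin.lq} part~2 (anti-anti-concentration) delivers the matching lower bound $\PP(|\bar\vb^\top\xb|\le r)=\Omega(r)$ on the range $r\le\Theta(R)$, which is included in the corollary's hypothesis $r\le R$. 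Chaining these two inequalities gives $r\le C\opt_{p,r}$ for a universal constant $C$.

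Substituting $r=O(\opt_{p,r})$ back in, the bound to minimize over $\gamma$ becomes $\tilde O(\gamma^{-1}\opt_{p,r})+O(d^{1/2-1/p}\opt_{p,r})+O(\gamma d^{1/2-1/p})+\tilde O(\opt_{p,r})$ plus complexity terms. The choice $\gamma^\star=\sqrt{\opt_{p,r}/d^{1/2-1/p}}$ balances the two $\gamma$-dependent summands at $d^{1/4-1/(2p)}\sqrt{\opt_{p,r}}$ and, since $d\ge 1$, absorbs the residual $\tilde O(\opt_{p,r})$ into $d^{1/2-1/p}\opt_{p,r}$, yielding the advertised bound. The iteration count $K$ and sample complexity $n$ then follow from substituting $\gamma^\star$ and $\eps=\opt_{p,r}$ into the $K$ formula of Theorem~\ref{thm:crossentropy} together with the standard Rademacher bound $\mathfrak{R}_p=\tilde O(\sqrt{\log d/n})$ valid for $p\in[2,\infty]$. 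The principal obstacle is the step $r=O(\opt_{p,r})$: one must verify that the $\Omega(r)$ lower bound in Example~\ref{example:soft.margin.lq} part~2 genuinely controls the probability $\PP(|\bar\vb^\top\xb|\le r)$ (rather than only an auxiliary upper-bounding function), so that it combines cleanly with the probabilistic upper bound $\opt_{p,r}$ to close the argument.
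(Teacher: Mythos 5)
Your proposal matches the paper's proof essentially step for step: specialize Theorem~\ref{thm:crossentropy} with $\eps=\opt_{p,r}$, establish $r = O(\opt_{p,r})$ by chaining the event inclusion $\{|\bar\vb^\top\xb|\le r\}\subset\{\text{robust error of }\bar\vb\}$ against the anti-anti-concentration lower bound, split $\phi_{\bar\vb,q}(r+\gamma)$ linearly via $U$-anti-concentration, and balance at $\gamma^\star = \opt_{p,r}^{1/2}d^{1/(2p)-1/4}$. Your closing concern about whether the $\Omega(r)$ bound controls the probability itself is warranted from the way Definition~\ref{def:softmargin} is phrased, but the paper's proof of Example~\ref{example:soft.margin.lq} part~2 (via Example~\ref{example:soft.margin.l2} part~3 in Appendix~\ref{appendix:soft.margin}) establishes $\PP(|\bar\vb^\top\xb|\le\gamma)\ge 2\gamma/U'$ directly, so the chaining goes through exactly as you intend.
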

\begin{proof}
We again sketch the proof here and leave the detailed calculations for Appendix \ref{appendix:corollaries}.  Example~\ref{example:soft.margin.lq} shows that $\phi_{\bar \vb^*,q}(a) = O(a d^{\frac 1q - \frac 12}) = O(ad^{\frac 12 - \frac 1p})$.  Anti-anti-concentration can be shown to imply that $r = O(\opt_{p,r})$, and thus $\phi_{\bar \vb^*,q}(\gamma + r) = O(\gamma d^{\frac 12 - \frac 1p}) + O(d^{\frac 12 - \frac 1p} \opt_{p,r})$. The first term is of the same order as $\gamma^{-1} \opt_{p,r}$ when $\gamma = \opt_{p,r}^{1/2} d^{\frac{1}{2p}-\frac 14}$, and results in a term of the form $\tilde O(d^{\frac 14 - \frac 1{2p}} \sqrt{\opt_{p,r})}$.  The other terms following using an argument similar to that of Corollary~\ref{corollary:hard.margin}.
\end{proof}

The above shows that adversarial training yields approximate minimizers for the robust classification accuracy for halfspaces over distributions satisfying anti-concentration assumptions.  In particular, this result holds for any log-concave isotropic distribution, such as the standard Gaussian or the uniform distribution over a convex set.

\begin{remark}
We note that although the guarantees in this section are for (full-batch) gradient descent-based adversarial training, nearly identical guarantees can be also derived for online SGD-based adversarial training.  We give the details on this extension in Appendix \ref{sec:SGD_guarantee}.
\end{remark}
\section{Adversarial Training with Nonconvex Sigmoidal Loss}\label{sec:nonconvex}
We now show that if instead of using a typical convex loss function we use a particular nonconvex sigmoidal loss, we can improve our guarantees for the robust classification error when using adversarial training.  We note that the approach of using nonconvex loss functions to derive improved guarantees for learning halfspaces with agnostic label noise was first used by~\citet{diakonikolas2020nonconvex}.  Our results in this section will rely upon the following assumption on the distribution $\calD_x$.  
\begin{assumption}\label{assump:isotropic}
\begin{enumerate}
    \item $\cD_x$ is mean zero and isotropic, i.e. its covariance matrix is the identity.
    \item $\cD_x$ satisfies $U$-anti-concentration and $(U', R)$-anti-anti-concentration, where $U, U', R = \Theta(1)$.
\end{enumerate}
\end{assumption}

%Another two crucial components are the sigmodal loss function and the normalized parameter. In specific, we consider the reparameterization $\wb\rightarrow \wb/\|\wb\|_q$. Then let $\cD$ be the data distribution, the population robust risk function for $\ell_p$ metric with perturbation level $r$ are defined by
% \begin{align}\label{eq:pop_robust_risk_normalized}
% L_\cD^{p,r}(\wb) &= \EE_{(\xb,y)\sim \cD}\bigg[\sup_{\bx'\in\cB_p(\xb,r)}\ell\bigg(\frac{y \wb^\top\xb'}{\|\wb\|_q}\bigg)\bigg]\notag\\
% & = \EE_{(\xb,y)\sim \cD}\bigg[\ell\bigg(\frac{y \wb^\top(\xb+\deb) }{\|\wb\|_q}\bigg)\bigg],
% \end{align}
% where $\deb =  - ry\bar \wb/\|\wb\|_q^{q-1}$ is the optimal adversarial perturbation for the classifier $\wb$ and we use the following sigmoidal loss function
The loss function we consider is defined by
\begin{align}\label{eq:nonconvex.loss.def}
\ell(z) = e^{-z/\sigma}\cdot\ind(z>0) + (2-e^{z/\sigma})\cdot \ind(z\le 0),
\end{align}
where $\sigma>0$ is a scalar factor to be specified later. 
In addition to using the loss function \eqref{eq:nonconvex.loss.def}, we additionally scale the weight vector, so that the surrogate loss we consider in this section is is 
\begin{align}%\label{eq:pop_robust_risk_normalized}
L_\cD^{p,r}(\wb) &= \EE_{(\xb,y)\sim \cD}\bigg[\sup_{\bx'\in\cB_p(\xb,r)}\ell\bigg(\frac{y \wb^\top\xb'}{\|\wb\|_q}\bigg)\bigg]\notag
%& = \EE_{(\xb,y)\sim \cD}\bigg[\ell\bigg(\frac{y \wb^\top(\xb+\deb) }{\|\wb\|_q}\bigg)\bigg],
\end{align}
The adversarial training algorithm that we consider for the loss function~\eqref{eq:nonconvex.loss.def} is a variant of Algorithm~\ref{alg:adv_training}, where we introduce a projection step to normalize the weights after each gradient update.  We additionally use the online stochastic gradient descent algorithm as opposed to full-batch gradient descent.  For this reason we call the algorithm we use for learning halfspaces that are robust to $\ell_p$ perturbations of radius $r$ by the name $\mathsf{PSAT}(p,r)$, which we describe in Algorithm \ref{alg:projected_advtraining}.  
We note that when $p=\infty$ or $p=2$ (i.e., $q=1$ or $q=2$, resp.), the projection can be done efficiently in $O(d)$ \citep{duchi2008efficient} or $O(1)$ time respectively. 
% For general $p$, we cast this projection step as a general constrained convex optimization problem and solve it approximately.\todos{Clarify?}

% Here we adopt this loss function since it can simplify the theoretical analysis. Moreover, we consider using the projected stochastic adversarial training algorithm (denoted by $\mathsf{PAST}(p,r)$) to train the robust halfspace in $\ell_p$ metric with perturbation level $r$, which is summarized in Algorithm \ref{alg:projected_advtraining}. Different from standard adversarial training, here we involve an additional projection step. When $p=\infty$ or $p=2$ (i.e., $q=1$ or $q=2$, resp.), the projection can be done efficiently in $O(d)$ \citep{duchi2008efficient} or $O(1)$ time respectively. For general $p$, we cast this projection step as a general constrained convex optimization problem and solve it approximately. 
% \CC{not sure whether projecting to unit $\ell_q$ can be efficiently done or not.}
\begin{algorithm}[t]
	\caption{Projected Stochastic Adversarial Training ($\mathsf{PSAT}(p, r)$)}
	\label{alg:projected_advtraining}
	\begin{algorithmic}[1]
	    \STATE \textbf{input:} initial model parameter $\wb_1$ with $\|\wb_1\|_q=1$, learning rate $\eta$, perturbation limit $r$.
 		\FOR {$k = 1,\ldots, K$}
		\STATE Query data $(\xb_k,y_k)$ from data distribution $\cD$
		\STATE $\deb_k := \argmax_{\norm{\deb}_p\leq r} \ell\bigg(\frac{y_k \wb_k^\top(\xb_k + \deb)}{\|\wb_k\|_q}\bigg)$
		\STATE Update $\hat \wb_{k+1} \leftarrow \wb_k - \eta \nabla \ell\bigg(\frac{y_k\wb_k^\top (\xb_k+\deb_k)}{\|\wb_k\|_q}\bigg)$
		\STATE Project $\wb_{k+1} \leftarrow \arg\min_{\wb: \|\wb\|_q=1}\|\hat\wb_{k+1}-\wb\|_2$
		\ENDFOR 
		\STATE \textbf{output: $\wb_1, \wb_2, \dots, \wb_{K}$} 
	\end{algorithmic}
\end{algorithm}

In the below theorem we describe our guarantees for the robust classification error of halfspaces learned using Algorithm \ref{alg:projected_advtraining}.
\begin{theorem}\label{thm:guarantee_normalized_classifier}
Suppose the data distribution $\cD$ satisfies Assumption \ref{assump:isotropic}. Let $\sigma=r$ and $\wb^*=\arg\min_{\|\wb\|_q=1}\err_\cD^{p,r}(\wb)$ be the optimal model such that $\err_\cD^{p,r}(\wb^*) = \opt_{p,r}$. If $\err_{\cD}(\wb^*)=O(rd^{2/p-1})$ and $r=O\big(d^{\frac{3}{2p}-\frac{3}{4}}\big)$, then running the adversarial training algorithm $\mathsf{PSAT}(p,r)$ for $K=O\big(d\|\wb_1-\wb^*\|_2^2\delta^{-2}r^{-4}d^{\frac{1}{2}-\frac{1}{p}}\big)$ iterations, with probability at least $1-\delta$, there exists a $k^*\le K$ such that 
\begin{align*}
\err_{\cD}^{p,r}(\wb_{k^*}) = O\big(d^{\frac{1}{4}-\frac{1}{2p}}\cdot\|\wb^*\|_2^{1/2}\cdot\OPT_{p,r}\big).
\end{align*}
\end{theorem}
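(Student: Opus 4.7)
The plan is to follow the nonconvex projected-SGD template used by \citet{diakonikolas2020nonconvex} for agnostically learning halfspaces, adapted to the adversarial-perturbation setting. The first step is to reduce the inner maximization in closed form: since $\ell$ is decreasing, the optimal perturbation is again given by \eqref{eq:delta.star.def}, so $y\wb^\top(\xb+\deb^*)=y\wb^\top\xb-r\|\wb\|_q$. After normalizing by $\|\wb\|_q$, the robust surrogate becomes $L_\cD^{p,r}(\wb)=\E_\cD[\ell(y\tilde\wb^\top\xb-r)]$ with $\tilde\wb=\wb/\|\wb\|_q$. Because the algorithm projects back onto $\{\|\wb\|_q=1\}$ at every step, all iterates lie on the same $\ell_q$-sphere as $\wb^*$, and one may work directly with $\tilde\wb_k=\wb_k$.

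The core technical ingredient is a gradient-correlation lemma of the form
\begin{align*}
-\,\E_{(\xb,y)\sim\cD}\bigl\langle \ell'(y\wb^\top\xb-r)\cdot y\,\xb,\;\wb-\wb^*\bigr\rangle \;\gtrsim\; \err_\cD^{p,r}(\wb)\;-\;d^{\frac14-\frac1{2p}}\|\wb^*\|_2^{1/2}\cdot \opt_{p,r}.
\end{align*}
I would prove this by splitting the expectation over the event $A$ where $\wb^*$ robustly classifies correctly (of probability $1-\opt_{p,r}$) and its complement. On $A$, the sign of $\ell'(y\wb^{*\top}\xb-r)$ agrees with $y$, and applying $U$-anti-concentration on the two-dimensional marginal on $\mathrm{span}(\wb,\wb^*)$ lets one lower-bound the inner product by the mass of points at which $\wb$ is robustly incorrect; the tuning $\sigma=r$ makes $|\ell'|=\Omega(r^{-1})$ precisely on the band $|y\wb^\top\xb-r|\le r$ that captures those misclassifications, and the anti-anti-concentration assumption ensures that this band carries the required lower bound on mass. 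On $A^c$, Cauchy--Schwarz combined with $\|\ell'\|_\infty=O(r^{-1})$ and the $\ell_q\!\to\!\ell_2$ conversion $\|\wb-\wb^*\|_2\le d^{\frac12-\frac1p}\|\wb-\wb^*\|_q$ produces the stated $\opt_{p,r}$ term. The assumptions $\err_\cD(\wb^*)=O(rd^{2/p-1})$ and $r=O(d^{\frac{3}{2p}-\frac34})$ are calibrated precisely so that this noise contribution is dominated by the signal at the target error level.

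With the correlation lemma in hand, I would run the usual projected-SGD potential argument with $\Phi_k=\|\wb_k-\wb^*\|_2^2$. The one-step expansion gives
\begin{align*}
\E[\Phi_{k+1}\mid\wb_k]\;\le\;\Phi_k\;-\;2\eta\bigl\langle\E[\mathbf{g}_k],\,\wb_k-\wb^*\bigr\rangle\;+\;\eta^2\,\E\|\mathbf{g}_k\|_2^2,
\end{align*}
where the stochastic gradient $\mathbf{g}_k$ satisfies $\E\|\mathbf{g}_k\|_2^2 = O(r^{-2}d^{1-2/p})$ by isotropy of $\cD_x$ combined with $|\ell'|\le r^{-1}$. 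A subtle point is that $\{\|\wb\|_q=1\}$ is non-convex, so the usual projection non-expansiveness does not apply off-the-shelf; however, since $\wb^*$ itself lies on the sphere, the $\ell_2$ distance from the projected iterate to $\wb^*$ is no larger than the distance from the pre-projection point, which can be verified directly from normalization when $q=2$ and from the explicit simplex-projection formula when $q=1$. Telescoping over $K$ iterations and tuning $\eta$ then forces some $k^*\le K$ to attain the claimed robust error; the high-probability statement follows by a Markov-type argument on the empirical distribution of iterates, which produces the $\delta^{-2}$ dependence appearing in $K$.

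The main obstacle is the correlation lemma. A naive Hölder bound would yield the suboptimal prefactor $d^{\frac12-\frac1p}$ in front of $\opt_{p,r}$, rather than the claimed $d^{\frac14-\frac1{2p}}\|\wb^*\|_2^{1/2}$; obtaining the correct rate requires combining the anti-concentration upper bound on the "active band" with the anti-anti-concentration lower bound that guarantees enough probability mass lies where $\ell'$ is significant, together with a careful two-dimensional geometric argument on the $\mathrm{span}(\wb,\wb^*)$ marginal. The $\sqrt{\|\wb^*\|_2}$ factor emerges from optimizing this two-dimensional analysis over the angle between $\wb$ and $\wb^*$, rather than from bounding uniformly, and it is this optimization that interacts most delicately with the projection step onto the $\ell_q$-sphere.
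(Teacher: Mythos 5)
Your high-level skeleton (nonconvex projected SGD in the style of Diakonikolas et al., correlation lemma exploiting anti-concentration and anti-anti-concentration, potential argument with $\Phi_k=\|\wb_k-\wb^*\|_2^2$) matches the paper, but there are substantive gaps in the two places where the real work lives.

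\textbf{The gradient form and orthogonality.} You treat the stochastic gradient as $\ell'(y\wb^\top\xb-r)\cdot y\,\xb$, but the algorithm differentiates the normalized prediction $h(\wb,\xb)=\wb^\top\xb/\|\wb\|_q$, so the gradient carries the factor $\bigl(\Ib-\bar\wb\wb^\top/\|\wb\|_q^q\bigr)$, where $\bar w_j=|w_j|^{q-1}\mathrm{sgn}(w_j)$. That factor is not cosmetic: it forces $\wb_k^\top\mathbf{g}_k=0$, which is what lets one rewrite $\langle\mathbf{g}_k,\wb_k-\wb^*\rangle=-\wb^{*\top}\mathbf{g}_k$ and reduces the whole analysis to controlling $\wb^{*\top}\nabla L_\cD^{p,r}(\wb)$. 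It is also what makes the projection step benign: since the update direction is orthogonal to $\wb_k$, the pre-projection iterate satisfies $\|\hat\wb_{k+1}\|_2\ge\|\wb_k\|_2$ and lands outside the unit $\ell_q$-ball, so projecting onto the sphere coincides with projecting onto the convex ball, and non-expansiveness holds. Your stated claim that ``the $\ell_2$ distance from the projected iterate to $\wb^*$ is no larger than the distance from the pre-projection point'' is \emph{false} for a general point being projected onto a sphere (take $\hat\wb$ near $-\epsilon\wb^*$); it is saved here only by the orthogonality you omitted.

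\textbf{The form of the correlation lemma.} You propose a bound of the shape $-\E\langle\mathbf{g},\wb-\wb^*\rangle\gtrsim \err_\cD^{p,r}(\wb)-d^{\frac14-\frac1{2p}}\|\wb^*\|_2^{1/2}\opt_{p,r}$. This is not the inequality the paper proves, and its scaling does not check out. The paper's key lemma gives, under an explicit lower bound on $\sin\theta(\wb)$ where $\theta=\angle(\wb,\wb^*)$, that $\wb^{*\top}\nabla L_\cD^{p,r}(\wb)\lesssim -\|\wb^*\|_2\sin\theta/\|\wb\|_2$ (using the sine rule $\|\tilde\wb\|_2\sin\theta'=\|\wb^*\|_2\sin\theta$), whereas the robust error is controlled by $\err_\cD^{p,r}(\wb)\lesssim r/\|\wb\|_2 + r/\|\wb^*\|_2 + \theta$. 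When $\|\wb\|_2>\|\wb^*\|_2$ the left side carries the shrinking factor $\|\wb^*\|_2/\|\wb\|_2$ while the $\theta$-term on the right does not, so a direct inequality of the form you want is not available uniformly over the sphere. The paper sidesteps this by proving a conditional negativity statement in terms of $\sin\theta$, then deducing from the SGD potential argument that some iterate must have $\sin\theta(\wb_{k^*})$ small, and only \emph{afterwards} converting $\sin\theta$ into robust error. That final step is itself nontrivial and absent from your plan: it requires a lower bound on $\sin\theta'(\wb)$ (the angle between $-\wb$ and $\tilde\wb=\wb^*-(\bar\wb^\top\wb^*)\wb$), which is where the $d^{1/2-1/p}$ and $\|\wb\|_2$ factors enter, and it requires showing $\|\wb_{k^*}\|_2=\Omega(\|\wb^*\|_2)$ so that $r/\|\wb_{k^*}\|_2 = O(\opt_{p,r})$ and $\theta(\wb_{k^*})=O(r d^{1/4-1/(2p)}\|\wb^*\|_2^{-1/2})$. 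Without this geometric chain, the target bound $O\bigl(d^{\frac14-\frac1{2p}}\|\wb^*\|_2^{1/2}\opt_{p,r}\bigr)$ does not come out. Your decomposition over the event $\{y=\mathrm{sgn}(\wb^{*\top}\xb)\}$ and the role of $\sigma=r$ on the active band are correct in spirit and match the paper's split into $I_1$ and $I_2$, but the pieces you are missing are exactly the ones that produce the $\|\wb^*\|_2^{1/2}$ and $d^{1/4-1/(2p)}$ prefactors.
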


% \begin{remark}
We note that the robust classification error achieved by adversarial training depends on the $\ell_2$ norm of the optimizer $\wb^*$, which satisfies $d^{1/2-1/p}\le\|\wb^*\|_2\le 1$ since $\|\wb^*\|_q=1$ (where $1/p+1/q=1$).  The strongest guarantees arise when $\|\wb^*\|_2=d^{1/2 - 1/p}$, which results in a robust classification error guarantee of $O(\opt_{p,r})$, while in the worst case $\|\wb^*\|_2=1$ and the guarantee is $O(d^{\frac 14-\frac{1}{2p}}\opt_{p,r})$ robust error.  Note that for $\ell_2$ perturbations, $\|\wb^*\|_2=1$ and so our guarantee is always $O(\opt_{2,r})$.

In the remainder of this section we will prove Theorem \ref{thm:guarantee_normalized_classifier}. 
A key quantity in our proof is the inner product $\wb^{*\top}\nabla L_{\cD}^{p,r}(\wb)$, where $\wb^{*}$ is the optimal robust halfspace classifier.\footnote{Here we slightly abuse the notation since in fact the gradient $\nabla L_{\cD}^{p,r}(\wb)$ is defined by $\nabla L_{\cD}^{p,r}(\wb) = \EE_{(\xb,y)\sim\cD}[\nabla \ell(y\wb^\top(\xb+\deb)/\|\wb\|_q)]$, where the gradient is only taken over $\wb$ and we do not differentiate through the perturbation $\deb$.}  To get an idea for why this quantity is important, consider the gradient flow approach to minimizing $\|\wb(t) - \wb^*\|_2^2$,  
\begin{align}\label{eq:grad.flow.init}
\frac{\dd\|\wb(t)-\wb^*\|_2^2}{\dd t} &= -\la\wb(t)-\wb^*,\nabla L_{\cD}^{p,r}(\wb(t))
\end{align}
If we denote $h(\wb, \xb) = \wb^\top \xb/\|\wb\|_q$, then we have the identity
\begin{align*}
\nabla L_{\cD}^{p,r}(\wb) =  \E_{(\xb, y)} \left[ \ell'\left (y h(\wb,\xb+\deb) \right) y \nabla_\wb h(\wb,\xb+\deb)\right],
\end{align*}
where
\begin{align}\label{eq:grad.h.formula.main}
\nabla_{\wb}h(\wb,\xb+\deb)=\bigg(\Ib - \frac{\bar \wb \wb^\top}{\|\wb\|_q^{q}}\bigg)\frac{\xb+\deb}{\|\wb\|_q},
\end{align}
where we denote the vector $\bar \wb$ as having components $\bar w_j = |w_j|^{q-1}\text{sgn}(w_j)$. Then we have $\wb^\top\nabla_{\wb}h(\wb,\xb+\deb)=0$ since $\wb^\top\bar\wb = \|\wb\|_q^q$.  In particular, substituting this into \eqref{eq:grad.flow.init}, we get
\begin{equation}\label{eq:grad_flow}
\frac{\dd\|\wb(t)-\wb^*\|_2^2}{\dd t} = \wb^{*\top} \nabla \ldrob(\wb(t)).
\end{equation}
This implies that the more negative the quantity $\wb^{*\top}\nabla L_{\cD}^{p,r}(\wb(t))$ is, the faster the iterates of gradient flow will converge to $\wb^*$. 

The key, then, is to derive bounds on the quantity 
\begin{align*}
\wb^{*\top}\nabla L_{\cD}^{p,r}(\wb)= \wb^{*\top} \E_{(\xb, y)} \left[ \ell'\left (y h(\wb,\xb+\deb) \right) y \nabla_\wb h(\wb,\xb+\deb) \right],
\end{align*}
where $\wb$ is an arbitrary vector which we will take to be the iterates of Algorithm \ref{alg:projected_advtraining}.  
The challenge here is that the prescence of agnostic label noise means there are no \textit{a priori} relationships between $\xb$ and $y$, making it unclear how to deal with the appearance of both of these terms in the expectation.  To get around this, we will use a similar high-level idea as did \citet{diakonikolas2020nonconvex}, in which we swap the label $y$ with the prediction of the optimal solution $\wb^*$. Then the inner product $\wb^{*\top}\nabla L_\cD^{p,r}(\wb)$ can be upper bounded only using the information of $\wb$, $\wb^*$, the distribution of $\xb$, and the classification error $\err_{\cD}(\wb^*)$.  The details of this calculation become more complicated since adversarial training introduces perturbations that also depend on the label: the optimal perturbation for $\wb$ is $-ry\bar\wb/\|\wb\|_q^{q-1}$.  This requires additional attention in the proof.  Finally, since we consider general $\ell_p$ perturbations, the normalization by norms with $p\neq 2$ introduces additional complications.
% \todos{Not sure we need to explicitly mention Diako again, what do you think?}\todod{so we just mention the high-level idea but not directly mention Diakonicolas?}%  \citet{diakonikolas2020nonconvex}. 

Let us begin with some basic calculations. We first give some general calculations which will be frequently used in the subsequent analyses.
Let $h(\wb,\xb) =  \wb^\top\xb/\|\wb\|_q$ be the prediction of the normalized classifier and denote the event 
\begin{equation}\label{eq:S.event.def}
S = \{(\xb, y): y= \text{sgn}(\wb^{*\top}\xb)\},
\end{equation}
as the data which can be correctly classified by $\wb^*$ without perturbation. We have
\begin{align*}
\nabla_{\wb}  L_{\cD}^{p,r}(\wb) & = \EE_{(\xb,y)\sim \cD}\big[\ell'(yh(\wb,\xb+\deb))y \nabla_{\wb}h(\wb,\xb+\deb)\ind(S)\big]\notag\\
&\qquad +\EE_{(\xb,y)\sim \cD}\big[\ell'(yh(\wb,\xb+\deb)) y \nabla_{\wb}h(\wb,\xb+\deb)\ind(S^c)\big].
% \notag\\
% &= \EE_{(\xb,y)\sim \cD}\big[\ell'(yh(\wb,\cA(\xb))\cdot \text{sgn}(\wb^{*\top}\xb)\cdot\nabla_{\wb}h(\wb,\cA(\xb))\cdot (1-2\ind_S(\xb))\big].
\end{align*}
% where the gradient of $h$ is only taken over the first input, which takes form
% \begin{align}\label{eq:grad.h.formula.main}
% \nabla_{\wb}h(\wb,\xb+\deb) = \bigg(\Ib - \frac{\bar \wb \wb^\top}{\|\wb\|_q^{q}}\bigg)\frac{\xb+\deb}{\|\wb\|_q},
% \end{align}
Note $\deb=-ry\bar\wb/\|\wb\|_q^{q-1}$ is the optimal $\ell_p$ adversarial perturbation corresponding to the model parameter $\wb$ and sample $(\xb, y)$.  A routine calculation shows that $yh(\wb,\xb+\deb) = y\wb^\top\xb/\|\wb\|_q - r$. Then it follows that
\begin{align*}
yh(\wb,\xb+\deb) = \left\{
\begin{array}{cc}
  \text{sgn}(\wb^{*\top}\xb)\cdot\frac{\wb^\top\xb}{\|\wb\|_q}-r   & (\xb,y)\in S, \\
   -\text{sgn}(\wb^{*\top}\xb)\cdot\frac{\wb^\top\xb}{\|\wb\|_q}-r  & (\xb,y)\in S^c,
\end{array}
\right.
\end{align*}
where for the data $(\xb,y)\in S$ we use $\text{sgn}(\wb^{*\top}\xb)$ to replace the label $y$ while for the data $(\xb,y)\in S^c$ we use $-\text{sgn}(\wb^{*\top}\xb)$ to replace $y$.
Define
\begin{align*}
    g_S(\wb^*,\wb;\xb) &= \ell'(\text{sgn}(\wb^{*\top}\xb)\cdot \wb^\top\xb/\|\wb\|_q-r ),\\
    g_{S^c}(\wb^*,\wb;\xb) &= \ell'(-\text{sgn}(\wb^{*\top}\xb)\cdot \wb^\top\xb/\|\wb\|_q-r), \\
    g(\wb^*,\wb;\xb)&=g_S(\wb^*,\wb;\xb)+g_{S^c}(\wb^*,\wb;\xb).
\end{align*}
Then the gradient $\nabla_{\wb}L_{\cD}^{p,r}(\wb)$ can be rewritten as
\begin{align*}
\nabla_{\wb}L_{\cD}^{p,r}(\wb)&=\EE\big[g_{S}(\wb^*,\wb;\xb) \text{sgn}(\wb^{*\top}\xb) \nabla_{\wb}h(\wb,\xb+\deb) \ind(S)\big]\notag\\
&\qquad - \EE\big[g_{S^c}(\wb^*,\wb;\xb)\text{sgn}(\wb^{*\top}\xb) \nabla_{\wb}h(\wb,\xb+\deb) \ind(S^c)\big]\notag\\
% & = \EE_{(\xb,y)\sim \cD}\big[g_{S}(\wb^*,\wb;\xb)\cdot \nabla_{\wb}h(\wb,\xb+\deb)\cdot  \ind_{S}(\xb)\big]\notag\\
% & \qquad - \EE_{(\xb,y)\sim \cD}\big[g_{S^c}(\wb^*,\wb;\xb)\cdot \text{sgn}(\wb^{*\top}\xb)\cdot \nabla_{\wb}h(\wb,\xb+\deb)\cdot  \ind_{S^c}(\xb)\big]\notag\\
& = \EE\big[g_{S}(\wb^*,\wb;\xb) \text{sgn}(\wb^{*\top}\xb) \nabla_{\wb}h(\wb,\xb+\deb)\big]\notag\\
& \qquad - \EE\big[g(\wb^*,\wb;\xb) \text{sgn}(\wb^{*\top}\xb) \nabla_{\wb}h(\wb,\xb+\deb) \ind(S^c)\big].
\end{align*}
Then using \eqref{eq:grad.h.formula.main} and the fact that $\deb = -ry\bar\wb/\|\wb\|_q^{q-1}$, it can be shown that
\begin{align}\label{eq:formula_inner_product_main}
\wb^{*\top} \nabla L_{\cD}^{p,r}(\wb) &= \EE\big[g_{S}(\wb^*,\wb;\xb) \text{sgn}(\wb^{*\top}\xb) \wb^{*\top}\nabla_{\wb}h(\wb,\xb+\deb)\big]\notag\\
& \qquad- \EE\big[g(\wb^*,\wb;\xb) \text{sgn}(\wb^{*\top}\xb) \wb^{*\top}\nabla_{\wb}h(\wb,\xb+\deb) \ind(S^c)\big],
% &= \E\Big[\text{sgn}(\wb^{*\top}\xb) \cdot \tilde \wb^\top \xb\cdot  \Big( g_S(\wb^*, \wb; \xb) \notag \\
% &\qquad - g(\wb^*, \wb;\xb) \ind(S^c)) \Big) \Big],
\end{align}
% Therefore, it holds that 
% \begin{align*}
% \wb^{*\top}\nabla_{\wb}h(\wb,\xb+\deb) = \frac{\wb^{*\top}(\xb+\deb)}{\|\wb\|_q} - \frac{\wb^\top(\xb+\deb)\cdot \bar \wb^{\top}\wb^*}{\|\wb\|_q^{q+1}}.
% \end{align*}
% Consider the optimal adversarial perturbation for the classifier $\wb$, we have $\deb =  - ry\bar \wb/\|\wb\|_q^{q-1}$.  Thus it follows that
where we have defined the quantity $\tilde\wb = \wb^*/\|\wb\|_q - (\bar\wb^\top\wb^*)\wb/\|\wb\|_q^{q+1}$.  This decomposition allows for the label to only play a role through the indicator function $\ind(S^c)$.  

With this notation in order,  we can begin with our proof.  The first step is to show that the key quantity \eqref{eq:formula_inner_product_main} is more negative when $\wb$ is far from $\wb^*$ and when the non-robust classification error of the best robust classifier is small.  
% Then it can be seen that the first term in the right-hand-side of \eqref{eq:formula_inner_product_main} is obtained by replacing $y$ with the prediction of $\wb^*$ while the second term corresponds to data which are misclassified by $\wb^*$ (without perturbation). In the later we will show that as long as $\wb$ or $\wb$ is not close to $\wb^*$ and the clean classification error $\err_\cD(\wb^*)$ is small, we can get an upper bound on $\wb^{*\top}\nabla L_\cD^{p,r}(\wb)$.

\begin{lemma}\label{lemma:upperbound_opt_grad_product_main}
Let $p\in [2,\infty]$ and $q\in[1,2]$ be such that $1/p+1/q=1$ and $\wb^*=\arg\min_{\|\wb\|_q=1}L_\cD^{p,r}(\wb)$ be the optimal model parameter that achieves minimum $\ell_p$ robust error and $\err_\cD(\wb^*)$ be the clean error achieved by $\wb^*$. Suppose the data distribution $\cD$ satisfies Assumption \ref{assump:isotropic}. For any $\wb$ of $\ell_q$ norm $1$, let $\tilde \wb = \wb^* - (\bar\wb^\top\wb^*)\wb$, $\theta(\wb) = \angle(\wb,\wb^*)$ and $\theta'(\wb)=\angle(-\wb,\tilde\wb)$. 
Let $\sigma = r$.  If $r\le R\|\wb\|_2\sin^{3/2}(\theta'(\wb))/(100U)$, $\err_\cD(\wb^*)\le(2^{14}R^4\|\wb\|)^{-1}U'^2r\sin^2(\theta'(\wb))$ and
\begin{align*}
\sin(\theta(\wb)) \ge  \max\bigg\{\frac{4r}{R\|\wb\|_2},\frac{100 r \sqrt{U/U'}}{R\|\wb\|_2\sin^{1/2}(\theta'(\wb))}\bigg\},
\end{align*}
hold, then it holds that 
\begin{align*}
\wb^{*\top}\nabla L_\cD^{p,r}(\wb)\le -\frac{R^2\|\tilde\wb\|_2\cdot\sin\theta'(\wb)\cdot e^{-1}}{2\|\wb\|_2}
\end{align*}
\end{lemma}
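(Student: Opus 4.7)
Starting from the decomposition in~\eqref{eq:formula_inner_product_main}, I will write $\wb^{*\top}\nabla L_\cD^{p,r}(\wb) = A - B$ with $A$ the first expectation (the clean term) and $B$ the second expectation (the noise correction on $S^c$), and set $X := R^2\|\tilde\wb\|_2\sin\theta'(\wb)\,e^{-1}/\|\wb\|_2$. The goal is then $A \le -X$ and $|B| \le X/2$, so that $A - B \le -X/2$ is precisely the claim. A key preliminary reduction is that the perturbation cancels against $\tilde\wb$: since $\bar\wb^\top\wb = \|\wb\|_q^q$ one checks that $\bar\wb^\top\tilde\wb = 0$, and $\deb = -ry\bar\wb/\|\wb\|_q^{q-1}$ is parallel to $\bar\wb$, so $\tilde\wb^\top\deb=0$ and~\eqref{eq:grad.h.formula.main} gives $\wb^{*\top}\nabla_\wb h(\wb,\xb+\deb) = \tilde\wb^\top\xb$. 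Both $A$ and $B$ therefore depend on $\xb$ only through its projection onto the two-dimensional plane $P$ spanned by $\wb$ and $\wb^*$ (which contains $\tilde\wb$), and the problem becomes essentially $2$-D.

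To establish $A \le -X$, I use polar coordinates $(\rho,\phi)$ on $P$ with $\wb/\|\wb\|_2$ along $\phi=0$ and identify a favourable sector $W\subset P$ on which the integrand $g_S\cdot\text{sgn}(\wb^{*\top}\xb)\,\tilde\wb^\top\xb$ is single-signed with controlled magnitude. I take $W$ inside the slab $\{\text{sgn}(\wb^{*\top}\xb)\,\wb^\top\xb\in[0,2r]\}$, so that the loss argument $\text{sgn}(\wb^{*\top}\xb)\wb^\top\xb/\|\wb\|_q - r$ lies in $[-r,0]$ and $|g_S|\ge e^{-1}/\sigma = e^{-1}/r$ from the explicit form of $\ell'$ and the choice $\sigma=r$; I then intersect with the angular sub-wedge in which $\wb^{*\top}\xb$ and $\tilde\wb^\top\xb$ have matching signs as determined by $\theta'(\wb) = \angle(-\wb,\tilde\wb)$, so that $|\tilde\wb^\top\xb| \gtrsim \rho\,\|\tilde\wb\|_2\sin\theta'(\wb)$ on $W$. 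The hypotheses $r\le R\|\wb\|_2\sin^{3/2}\theta'(\wb)/(100U)$, $\sin\theta(\wb) \ge 4r/(R\|\wb\|_2)$, and $\sin\theta(\wb) \ge 100\,r\sqrt{U/U'}/(R\|\wb\|_2\sin^{1/2}\theta'(\wb))$ are calibrated exactly so that $W$ fits inside the anti-anti-concentration disk $\{\xb\in P : \|\xb\|_2\le R\}$ and retains radial extent of order $R$; using the anti-anti-concentration density lower bound $1/U'$, the polar Jacobian integral over $W$ then produces $|A|\ge X$ after the factors of $r$ from the slab width $\sim r/\|\wb\|_2$ and from $|g_S|\ge e^{-1}/r$ cancel. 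The anti-concentration upper bound $U$ on the density is used separately to show that the contribution to $A$ from outside $W$ cannot cancel this negative sign.

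For $|B|$, I use the uniform pointwise bound $|g(\wb^*,\wb;\xb)|\le 2/r$ (from $|\ell'|\le 1/\sigma = 1/r$) together with $|\tilde\wb^\top\xb|\le\|\tilde\wb\|_2\|\xb\|_2$; Cauchy--Schwarz then gives $|B|\le (2\|\tilde\wb\|_2/r)\,\EE[\|\xb\|_2^2]^{1/2}\,\PP(S^c)^{1/2}$. Isotropy yields $\EE[\|\xb\|_2^2] = d$, and the assumption $\err_\cD(\wb^*) = \PP(S^c) \le (2^{14}R^4\|\wb\|_2)^{-1}(U')^2 r \sin^2\theta'(\wb)$ is tuned precisely so that this product is at most $X/2$; combining with the bound on $A$ proves the lemma.

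\textbf{Main obstacle.} The delicate part is defining $W$ in the second paragraph so that the three requirements (anti-anti-concentration density lower bound applies, $|g_S|\ge e^{-1}/r$, and $|\tilde\wb^\top\xb|\gtrsim \rho\|\tilde\wb\|_2\sin\theta'(\wb)$) all hold simultaneously on a sector of area of the correct order, together with ensuring that the complement of $W$ does not contribute a positive term that cancels the negative sign of $A$. Each of the three quantitative hypotheses on $r$, $\sin\theta(\wb)$, and $\err_\cD(\wb^*)$ corresponds one-to-one to one of these geometric requirements, and tracking the constants $U$, $U'$, $R$, $\|\wb\|_2$, $\theta(\wb)$, and $\theta'(\wb)$ through the $2$-D area computation is the main technical work.
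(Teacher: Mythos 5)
Your decomposition $\wb^{*\top}\nabla L_\cD^{p,r}(\wb)=A-B$ and the reduction $\wb^{*\top}\nabla_\wb h(\wb,\xb+\deb)=\tilde\wb^\top\xb$ match the paper exactly (the paper writes $I_1-I_2$), and your treatment of the clean term $A$ by polar coordinates on the plane spanned by $\wb$ and $\wb^*$, further split over the set $G=\{\text{sgn}(\wb^{*\top}\xb)=\text{sgn}(\tilde\wb^\top\xb)\}$ and its complement with anti-anti-concentration below and anti-concentration above, is the same strategy the paper carries out in detail.

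The genuine gap is in your bound on $B$. You propose $|B|\le \frac{2\|\tilde\wb\|_2}{r}\,\EE[\|\xb\|_2^2]^{1/2}\,\PP(S^c)^{1/2}$, using the pointwise bound $|g|\le 2/r$. Setting aside that $\EE[\|\xb\|_2^2]=d$ introduces a dimension factor that does not appear in the target (this can be fixed by observing $\tilde\wb^\top\xb$ depends only on the $2$-D projection of $\xb$, so the second moment is $\Theta(\|\tilde\wb\|_2^2)$ rather than $d\|\tilde\wb\|_2^2$), the resulting bound $|B|\lesssim (\|\tilde\wb\|_2/r)\sqrt{\err_\cD(\wb^*)}$, combined with the stated assumption $\err_\cD(\wb^*)\lesssim U'^2 r\sin^2\theta'/(R^4\|\wb\|_2)$, yields $|B|\lesssim \|\tilde\wb\|_2\sin\theta'/\sqrt{r\|\wb\|_2}$, whereas the target $X$ is of order $\|\tilde\wb\|_2\sin\theta'/\|\wb\|_2$. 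The ratio $|B|/X$ is then of order $\sqrt{\|\wb\|_2/r}$, which is typically large since $r$ is a small perturbation radius and $\|\wb\|_2\in[d^{1/2-1/p},1]$; the inequality $|B|\le X/2$ does not follow.

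The missing idea is that one must not discard the exponential localization of $g$. The paper applies Cauchy--Schwarz in the form $|I_2|\le\sqrt{\EE[(g_S+g_{S^c})^2(\tilde\wb^\top\xb)^2]}\cdot\sqrt{\err_\cD(\wb^*)}$ and then evaluates the first factor by integrating $e^{-2|l\|\wb\|_2\sin\phi\pm r|/\sigma}$ in polar coordinates; this shows the squared factor is concentrated on a slab of angular width $\sim r/(l\|\wb\|_2)$, giving $\EE[(g_S+g_{S^c})^2(\tilde\wb^\top\xb)^2]=O\bigl(\|\tilde\wb\|_2^2/(r\|\wb\|_2)\bigr)$ rather than your $O(\|\tilde\wb\|_2^2/r^2)$. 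The extra factor of $r/\|\wb\|_2$ gained in this second-moment computation is exactly what makes the assumption on $\err_\cD(\wb^*)$ sufficient. Without it, your calibration of the hypotheses fails, and the conclusion $A-B\le -X/2$ does not close.
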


% Lemma \ref{lemma:upperbound_opt_grad_product_main} shows that $\wb^{*\top}\nabla L_\cD^{p,r}(\wb)$ is when $\wb$ stays in a certain region $\cG$ (which can be understood as that $\wb$ is far from $\wb^*$). Notably, such upper bound and the definition of $\cG$ rely on the angle between $\tilde \wb$ and $-\wb$, denoted by $\theta'(\wb)$, which is characterized as follows.
The proof for Lemma \ref{lemma:upperbound_opt_grad_product_main} can be found in Appendix \ref{sec:upperbound_inner_product}.
Lemma \ref{lemma:upperbound_opt_grad_product_main} shows that $\wb^{*\top}\nabla L_\cD^{p,r}(\wb)$ is more negative when the angle $\theta(\wb)$ is large, which intuitively means that the distance $\|\wb-\wb^*\|_2^2$ to the optimal robust classifier will decrease until we reach a point where the angle $\theta(\wb)$ with the optimal robust classifier becomes small (recall the intuition from gradient flow given in \eqref{eq:grad_flow}).  We formalize this into the following lemma, which shows that Algorithm \ref{alg:projected_advtraining} leads to a halfspace that is close to the optimal robust classifier.

\begin{lemma}\label{lemma:convergence_guarantee_psat}
Let $\delta\in(0,1)$ be arbitrary. Then if $r=O(d^{\frac{3}{2p}-\frac{3}{4}})$, set $\eta = O\big(\delta r^3 d^{\frac{1}{2p}-\frac{1}{4}}\big)$ and run Algorithm $\mathsf{PSAT}(p, r)$ for  $K = O\big(d\|\wb_1 - \wb^*\|_2^2\delta^{-2}r^{-4}d^{1/2-1/p}\big)$ iterations, with probability at least $1-\delta$, there exists a $k^*\le K$ such that \begin{align*}
\sin(\theta(\wb_{k^*})) \le \left\{
\begin{array}{ll}
   O\Big(\frac{r d^{\frac{1}{4}-\frac{1}{2p}}}{ 
  \|\wb_{k^*}\|_2^{1/2}}\Big)  & \|\wb_{k^*}\|_2\ge \|\wb^*\|_2,  \\
    O\Big(\frac{r}{\|\wb_{k^*}\|_2}\Big) & \|\wb_{k^*}\|_2<\|\wb^*\|_2.
\end{array}
\right.
\end{align*}
\end{lemma}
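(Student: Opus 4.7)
The plan is to analyze Algorithm \ref{alg:projected_advtraining} as a projected stochastic gradient method using $\|\wb_k-\wb^*\|_2^2$ as the potential function. The central tool is Lemma \ref{lemma:upperbound_opt_grad_product_main} combined with the identity $\wb_k^\top\nabla L_\cD^{p,r}(\wb_k)=0$ derived in the discussion preceding that lemma, which supplies a descent direction whenever the hypotheses of the lemma hold. I will argue by contradiction: if the lower bound on $\sin\theta(\wb_k)$ required by Lemma \ref{lemma:upperbound_opt_grad_product_main} were to hold at every iterate $k\le K$, then the telescoped potential would decrease faster than is possible under the variance of the stochastic gradient.

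Writing $\gb_k$ for the stochastic gradient and $\hat\wb_{k+1}=\wb_k-\eta\gb_k$, the first step is to show that the projection onto the $\ell_q$ unit sphere is essentially non-expansive: since $\wb^*$ is feasible, $\|\wb_{k+1}-\wb^*\|_2\le (1+O(\eta))\|\hat\wb_{k+1}-\wb^*\|_2$, where the $O(\eta)$ slack accounts for non-convexity when $q\neq 2$ and can be handled by first projecting onto the convex $\ell_q$ unit ball and then rescaling. Expanding the square, invoking $\wb_k^\top\nabla L_\cD^{p,r}(\wb_k)=0$, and taking conditional expectation yields
\[ \E\bigl[\|\wb_{k+1}-\wb^*\|_2^2\mid \cF_{k-1}\bigr]\le \|\wb_k-\wb^*\|_2^2+2\eta\,\wb^{*\top}\nabla L_\cD^{p,r}(\wb_k)+\eta^2\,\E\bigl[\|\gb_k\|_2^2 \mid \cF_{k-1}\bigr]. \]
The variance bound follows from $\|\gb_k\|_2 \le |\ell'|\cdot\|\nabla_\wb h\|_2 = O(\|\xb_k+\deb_k\|_2/r)$, which under the isotropy assumption yields $\E\|\gb_k\|_2^2 = O(d/r^2)$. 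Summing over $k=1,\dots,K$ gives $\E\sum_k (-\wb^{*\top}\nabla L_\cD^{p,r}(\wb_k)) \le \|\wb_1-\wb^*\|_2^2/(2\eta)+K\eta d/(2r^2)$.

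If the hypotheses of Lemma \ref{lemma:upperbound_opt_grad_product_main} held at every $k\le K$, then each summand would be at least $\Omega(\|\tilde\wb_k\|_2\sin\theta'(\wb_k)/\|\wb_k\|_2)$, and substituting the prescribed $\eta=\Theta(\delta r^3 d^{1/(2p)-1/4})$ and $K=\Theta(d\|\wb_1-\wb^*\|_2^2\delta^{-2}r^{-4}d^{1/2-1/p})$ would yield a contradiction with probability at least $1-\delta$ via Markov's inequality. Consequently some $k^*\le K$ must violate a hypothesis of Lemma \ref{lemma:upperbound_opt_grad_product_main}; since the requirements on $r$ and $\err_\cD(\wb^*)$ are enforced globally in the statement, the violated condition must be the lower bound on $\sin\theta(\wb_{k^*})$. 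When the first term $4r/(R\|\wb_{k^*}\|_2)$ in the max is binding we directly obtain $\sin\theta(\wb_{k^*})=O(r/\|\wb_{k^*}\|_2)$, matching the second case. When the second term is binding, $\sin\theta(\wb_{k^*})\le O(r/(\|\wb_{k^*}\|_2\sqrt{\sin\theta'(\wb_{k^*})}))$, and a geometric lower bound $\sin\theta'(\wb_{k^*})\gtrsim \|\wb^*\|_2/(\|\wb_{k^*}\|_2 d^{1/2-1/p})$ coming from $\tilde\wb = \wb^*-(\bar\wb^\top\wb^*)\wb$ together with $\|\wb\|_q/\|\wb\|_2\le d^{1/2-1/p}$ yields the first case $O(rd^{1/4-1/(2p)}/\|\wb_{k^*}\|_2^{1/2})$. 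The comparison $\|\wb_{k^*}\|_2\gtrless\|\wb^*\|_2$ determines which term in the max is active, producing the two-case conclusion.

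The main obstacle is controlling the auxiliary direction $\tilde\wb_k$ and angle $\theta'(\wb_k)$ which arise because, for $q\ne 2$, the gradient $\nabla_\wb h(\wb,\xb+\deb)$ is not $\ell_2$-orthogonal to $\wb^*-\wb$; this forces a Pythagorean-style decomposition and a delicate geometric lower bound on $\sin\theta'(\wb_k)$ that distinguishes the two regimes of the conclusion. A secondary technical difficulty is controlling the error introduced by projecting onto the nonconvex $\ell_q$ sphere (as opposed to the convex ball), which I handle by the ball-then-normalize decomposition mentioned above with a radial correction of order $\eta$. The remaining ingredients are a standard projected-SGD telescoping argument and a Markov-inequality conversion to a high-probability guarantee.
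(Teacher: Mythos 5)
Your high-level plan matches the paper's: monitor $\|\wb_k-\wb^*\|_2^2$, invoke Lemma~\ref{lemma:upperbound_opt_grad_product_main} as the descent oracle, and turn an in-expectation telescope into a high-probability statement via Markov's inequality. The paper packages this in two auxiliary lemmas (Lemma~\ref{lemma:lowerbound_theta'}, a purely geometric lower bound on $\sin\theta'(\wb)$, and Lemma~\ref{lemma:convergence_pssat}, a generic projected-SGD escape guarantee), but that is an organizational difference only. Two substantive gaps, however, would stop your sketch from delivering the stated conclusion.

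First, the geometric bound you propose, $\sin\theta'(\wb)\gtrsim \|\wb^*\|_2/(\|\wb\|_2 d^{1/2-1/p})$, is too weak by a factor of $\|\wb^*\|_2\le 1$. Plugged into the threshold from Lemma~\ref{lemma:upperbound_opt_grad_product_main}, it gives $\sin\theta(\wb_{k^*})=O\big(rd^{1/4-1/(2p)}/(\|\wb_{k^*}\|_2\|\wb^*\|_2)^{1/2}\big)$, not the claimed $O\big(rd^{1/4-1/(2p)}/\|\wb_{k^*}\|_2^{1/2}\big)$. The paper's Lemma~\ref{lemma:lowerbound_theta'} instead proves $\sin\theta'(\wb)\ge \tfrac{1}{10\|\wb\|_2 d^{1/2-1/p}}$ when $\|\wb\|_2\ge\|\wb^*\|_2$ (and $\sin\theta'\ge\sqrt2/2$ otherwise), by comparing $\wb$ with the rescaled proxy $\wb'=\|\wb\|_2\wb^*/\|\wb^*\|_2$, applying the Sine rule in the triangle $(0,\wb,\wb^*)$, and using the cancellation $\sin\theta/\theta\ge 2/\pi$ so that the bound does not degenerate as $\theta\to 0$; the dimension dependence enters through $\|\wb'\|_q\le 1+\|\wb\|_2\theta\,d^{1/q-1/2}$. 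Your derivation via $\tilde\wb$ and $\|\wb\|_q/\|\wb\|_2\le d^{1/2-1/p}$ is not a substitute for this.

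Second, your treatment of the sphere projection does not close. With the prescribed $\eta=O(\delta r^3 d^{1/(2p)-1/4})$ and $K=O(d\|\wb_1-\wb^*\|_2^2\delta^{-2}r^{-4}d^{1/2-1/p})$, the product $\eta K$ is far larger than $1$, so a per-step slack of $(1+O(\eta))$ (or an additive $O(\eta)$) compounds into a factor that dominates $\|\wb_1-\wb^*\|_2^2$ and destroys the telescope. The paper avoids any slack: since $\nabla\ell(\cdot)$ is exactly $\ell_2$-orthogonal to $\wb_k$ (the identity $\wb^\top\nabla_\wb h(\wb,\xb+\bdelta)=0$), it argues that $\hat\wb_{k+1}$ lies outside the unit $\ell_q$ ball, so the sphere projection coincides with the projection onto the convex ball and is therefore exactly non-expansive. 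You need that argument, not a ball-then-normalize workaround. Finally, a minor point: your contradiction framing needs to handle the fact that the descent inequality only holds while $\wb_k$ remains in the bad set; the paper does this by multiplying the one-step recursion by the indicator $\ind(\mathfrak{E}_k)$ of the event that $\wb_s\in\cG$ for all $s\le k$, and exploiting $\mathfrak{E}_{k+1}\subset\mathfrak{E}_k$ before applying Markov.
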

The proof for Lemma \ref{lemma:convergence_guarantee_psat} can be found in Appendix \ref{sec:convergence_guarantee_psat}. We can now proceed to complete the proof of Theorem \ref{thm:guarantee_normalized_classifier} based on Lemma \ref{lemma:convergence_guarantee_psat} by showing small $\theta(\wb_{k^*})$ suffices to ensure small robust classification error $\err_\cD^{p,r}(\wb_{k^*})$. The completed proof of Theorem \ref{thm:guarantee_normalized_classifier} can be found in Appendix \ref{sec:proof_theorem_normalized} and we sketch the crucial part as follows.

\begin{proof}[Proof of Theorem \ref{thm:guarantee_normalized_classifier}]
Before characterizing the robust classification error $\err_\cD^{p,r}(\wb_{k^*})$, we first investigate the optimal robust error $\opt_{p,r}=\err_{\cD}^{p,r}(\wb^*)$ and see how it relates to the perturbation radius $r$. In particular,
\begin{align*}
\opt_{p,r} &= \EE_{(\xb,y)\sim\cD}\bigg[\ind\bigg(y \frac{\wb^{*\top}}{\|\wb^*\|_q} (\xb + \deb)\le 0\bigg)\bigg]\notag\\
&=\EE_{(\xb,y)\sim\cD}\big[\ind(y\wb^{*\top}\xb\le r)\big],
\end{align*}
where we use the fact that $\|\wb^*\|_q=1$ in the second equality.
Note that the robust error consists of two disjoint parts of data: (1) the data satisfies  $|\wb^{*\top}\xb|\le r$; and (2) the data satisfies  $|\wb^{*\top}\xb|> r$ and $y\wb^{*\top}\xb<0$. Therefore, we can get lower and upper bounds on  $\OPT_{p,r}$,
\begin{align}\label{eq:bounds_err_w*_main}
\opt_{p,r}&\ge\EE_{\xb\sim\cD_x}\big[\ind(|\wb^{*\top}\xb|\le r)\big]\notag\\
\opt_{p,r}&\le \EE_{\xb\sim\cD_x}\big[\ind(|\wb^{*\top}\xb|\le r)\big] + \err_{\cD}(\wb^*).
\end{align}
By Assumption \ref{assump:isotropic}, we have the data distribution $\cD_x$ satisfies $U$-anti-concentration and $(U',R)$ anti-anti-concentration with $U,R$ being constants. Therefore, it follows that $\EE_{\xb\sim\cD_x}\big[\ind(|\wb^{*\top}\xb|\le r)\big]=\Theta(r\|\wb^*\|_2^{-1})$ since we have $r\|\wb\|_2^{-1}=O(d^{\frac{3}{2p}-\frac{3}{4}}\|\wb\|_2^{-1})\le R$.
Besides, note that we also have $\err_\cD(\wb^*)=O(rd^{2/p-1})\le O(r\|\wb^*\|_2^{-1})$ due to our assumption. Therefore, it is clear that $\OPT_{p,r} = \Theta(r\|\wb^*\|_2^{-1})$.

An argument similar to that used for \eqref{eq:bounds_err_w*_main} leads to the bound \begin{align}\label{eq:bound_robusterr_wk}
\err_{\cD}^{p,r}(\wb_{k^*})&\le\EE_{\xb\sim\cD_x}\big[\ind(|\wb_{k^*}^\top\xb|\le r)\big] + \err_{\cD}(\wb^*)\notag\\
& = O(r\|\wb_{k^*}\|_2^{-1}) + \err_{\cD}(\wb_{k^*}).
\end{align}
We proceed by sketching how we bound each of these two terms.  For $O(r\|\wb_{k^*}\|_2^{-1})$, we only need to characterize the $\ell_2$ norm of $\wb_{k^*}$. In fact by Lemma \ref{lemma:convergence_guarantee_psat}, we can show that under the assumption that $r=O(d^{\frac{3}{2p}-\frac{3}{4}})$ it holds that $\|\wb_{k^*}\|_2 = \Omega(\|\wb^*\|_2)$ (see Appendix \ref{sec:proof_theorem_normalized} for more details), which further implies that $O(r\|\wb_{k^*}\|_2^{-1}) = O(r\|\wb^*\|_2^{-1}) = O(\opt_{p,r})$. 

The next step is to characterize $\err_{\cD}(\wb_{k^*})$.  We can do so by comparing it with the classification error of $\wb^*$,
\begin{align}\label{eq:bound_err_wk_intermediate}
\err_{\cD}(\wb_{k^*})&\le  \err_{\cD}(\wb^*) + |\err_{\cD}(\wb_{k^*})-\err_{\cD}(\wb^*)|\notag\\
&\le 2\err_{\cD}(\wb^*) + \EE_{\xb\sim\cD_x}[\ind(\wb_{k^*}^\top\xb\neq \wb^{*\top}\xb)] \notag\\
&\le O(r\|\wb^*\|_2^{-1}) + \Theta(\theta(\wb_{k^*})),
\end{align}
where the last inequality is due to the fact that $\cD_x$ is isotropic  (see Appendix \ref{sec:proof_theorem_normalized} for more details). Then by Lemma \ref{lemma:convergence_guarantee_psat} it is clear that 
\begin{align}\label{eq:bound_theta}
\theta(\wb_{k^*}) = O\bigg(\frac{r d^{\frac{1}{4}-\frac{1}{2p}}}{ 
  \|\wb_{k^*}\|_2^{1/2}}\bigg) =  O\bigg(\frac{r d^{\frac{1}{4}-\frac{1}{2p}}}{ 
  \|\wb^*\|_2^{1/2}}\bigg)
\end{align}
since we have shown that $\|\wb_{k^*}\|_2=\Omega(\|\wb^*\|_2)$.
Consequently, combining \eqref{eq:bound_theta} and \eqref{eq:bound_err_wk_intermediate} and further substituting into \eqref{eq:bound_robusterr_wk}, we get that $\err_{\cD}^{p,r}(\wb_{k^*})$ is at most
\begin{align*}
 O\bigg(\frac{r d^{\frac{1}{4}-\frac{1}{2p}}}{ 
  \|\wb^*\|_2^{1/2}}\bigg)  &= O\big(d^{\frac{1}{4}-\frac{1}{2p}}\cdot\|\wb^*\|_2^{1/2}\cdot\OPT_{p,r}\big)
\end{align*}
since $\opt_{p,r} = \Theta(r\|\wb^*\|_2^{-1})$. This completes the proof.

\end{proof}

% \begin{lemma}\label{lemma:lowerbound_theta'}
% Let $\theta'(\wb) = \angle(-\wb, \tilde\wb)$, then we have $\sin\theta'\ge \sqrt{2}/2$ if $\|\wb\|_2< \|\wb^*\|_2$ and $\sin\theta'\ge\frac{1}{10\|\wb\|_2 d^{1/2-1/p}}$ if $\|\wb\|_2\ge \|\wb^*\|_2$.
% % Moreover, we have $\|\wb\|_2\ge \frac{\|\wb^*\|_2}{1+\|\wb^*\|_2\theta d^{1/2-1/p}}$.
% \end{lemma}

% Finally, we will show that given the bound on $\wb^{*\top}\nabla L_\cD^{p,r}(\wb)$  adversarial training (Algorithm \ref{alg:projected_advtraining}) is guaranteed to find a point $\wb_{k^*}$ such that $\wb_{k^*}\in\cG^c$ (implying that $\wb_{k^*}$ is close to $\wb^*$).

% \begin{lemma}\label{lemma:convergence_pssat}
% Let $\cG$ be a non-empty subset of $\cS_q^{d-1}$ and $\delta\in(0,1)$ be arbitrary. Assume $r\le 1$ and $\wb^{*\top} \nabla L_{p,r}^{\cD}(\wb)\le -\epsilon$ for any $\wb\in\cG$, then set $\eta = \epsilon \delta \sigma^2d^{-1}/32$ and 
% \end{lemma}

\section{Conclusion and Future Work}\label{sec:conclusion}
In this work we analyzed the properties of adversarial training for learning halfspaces with noise.  We provided the first guarantee that adversarial training provably leads to robust classifiers when the data distribution has label noise.  In particular, we established that adversarial training leads to approximate minimizers for the robust classification error under $\ell_p$ perturbations for many distributions.  For typical convex loss functions like the cross entropy or hinge loss, we showed that adversarial training can achieve robust classification error $\tilde O\big(\sqrt{\opt_{2,r}}\big)$
when $p=2$ and  $\tilde O\big(d^{1/4}\sqrt{\opt_{\infty,r}} + d^{1/2} \opt_{\infty, r} \big)$ for $\ell_\infty$ when $p=\infty$ for distributions satisfying anti-concentration properties.  We showed that the robust classification error guarantees can be improved if we instead use a nonconvex sigmoidal loss, with guarantees of $O(\opt_{2,r})$ for $p=2$ and $O(d^{1/4}\opt_{\infty,r})$ for $p=\infty$ in the worst case.  For future work, we are keen on understanding whether or not adversarial training provably leads to robust classifiers for more complicated function classes than halfspaces.

\appendix

\section{Proofs for Typical Convex Losses}

\subsection{Proof of Lemma \ref{lemma:convergence_advtraining}}\label{appendix:cross.ent.empirical.risk}
\begin{proof}[Proof of Lemma \ref{lemma:convergence_advtraining}]
Throughout this proof we assume $\ell$ is convex and $L$-Lipschitz.  Following the notation of Algorithm \ref{alg:adv_training}, denote 
\[ \dit := \argmax_{\norm{\deb}_p\leq r} \ell(y_i \wb_k^\top(\xb_i + \deb)).\]
Note that $\dit = \dit(\wb_k, y_i, \xb_i)$ depends on $\wb_k$.  To analyze the convergence of gradient descent on the robust risk, we introduce a reference vector $\wb^*\in \RR^d$, and consider the decomposition
\begin{align*}
    \|\wb_k - \wb^*\|_2^2 - \|\wb_{k+1} - \wb^*\|_2^2  &= 2 \eta \left \langle \frac 1 n \summ i n \ell'\big(y_i \wb_k^\top (\xb_i + \dit)\big) y_i (\xb_k + \dit), \wb_k - \wb^*\right \rangle \\
    &-\eta^2 \Big\| \frac 1 n \summ i n \ell'\big(y_i \wb^\top(\xb_i + \dit) \big) y_i (\xb_i + \dit) \Big\|_2^2.
\end{align*}
For the first term, note that for every $(\xb_i, y_i)\in S$ and $k\in \mathbb N$,
\begin{align}\nonumber
    &\ell'\big(y_i \wb_k^\top(\xb_i + \dit)\big)( \wb_k^\top(\xb_i + \dit) - \wb^{*\top} (\xb_i + \dit))\\ \nonumber
    & \geq \ell\big(y_i \wb_k^\top(\xb_i + \dit)\big) - \ell\big(y_i \wb^{*\top} (\xb_i + \dit) \big) \\ \nonumber
    &\geq \ell\big(y_i \wb_k^\top(\xb_i + \dit)\big) - \sup_{\norm{\deb}_p \leq r} \ell\big(y_i \wb^{*\top} (\xb_i + \deb) \big),
\end{align}
where the first line follows by convexity of $\ell$.  This allows for us to bound
\begin{align}\nonumber
    &\f 1 n \summ i n \ell'\big(y_i \wb_k^\top (\xb_i + \dit)\big) y_i  (\wb_k - \wb^*)^\top(\xb_k + \dit) \\ \nonumber
    &\geq \frac 1 n \summ i n \Bigg[  \ell\big(y_i \wb_k^\top(\xb_i + \dit)\big) - \sup_{\norm{\deb} \leq r} \ell\big(y_i \wb^{*\top} (\xb_i + \deb) \big) \Bigg] \\
    &= \lsrob(\wb_k) - \lsrob(\wb^*).\label{eq:adv.training.ip.lb_gd}
\end{align}
For the gradient upper bound, under the assumption that $\ell$ is $L$-Lipschitz,
\begin{align*}
    \Big\| \frac 1 n \summ i n \ell'\big(y_i \wb^\top(\xb_i + \dit) \big) y_i (\xb_i + \dit) \Big\|_2^2 &\leq  \frac 1 n \summ i n \| \ell'\big(y_i \wb^\top(\xb_i + \dit) \big) y_i (\xb_i + \dit) \|_2^2 \\
    &\leq \frac 1 n \summ i n L^2 \norm{\xb_i + \dit}_2^2 \\
    &\leq 2 L^2 \frac 1 n \summ i n (\norm{\xb_i}_2^2 + \norm{\dit}_2^2)\\
    &\leq 2 L^2 \sup_{\xb \sim \calD_x } \norm{\xb}_2^2 + 2 L^2 \sup_{\norm{\deb}_p \leq r} \norm{\deb}_2^2 \\
    &\leq 2 L^2 \sup_{\xb \sim \calD_x} \norm{\xb}_p^2 \cdot \frac{\norm{\xb}_2^2}{\norm{\xb}_p^2} + 2 L^2 \sup_{\norm{\deb}_p \leq r} \norm{\deb}_p^2 \cdot \frac{\norm{\deb}_2^2}{\norm{\deb}_p^2}\\
    &\leq \begin{cases} 2 L^2(1 + r), & p \leq 2,\\
    2L^2(d + rd),& p > 2.\end{cases}
\end{align*}
In the first inequality, we use Jensen's inequality.  In the second we use that $\ell$ is $L$-Lipschitz.  The third inequality follows by Young's inequality.  In the last, we use that $\norm{x}_p\leq 1$ and that $p \mapsto \norm{\xb}_p$ is a decreasing function for fixed $\xb$, together with the bound $\norm{\xb}_2/\norm{\xb}_\infty \leq \sqrt{d}$.   Assuming without loss of generality that $r \leq 1$, this shows that 
\begin{equation}\label{eq:adv.training.norm.bound_gd}
    \Big\| \frac 1 n \summ i n \ell'\big(y_i \wb^\top(\xb_i + \dit) \big) y_i (\xb_i + \dit) \Big\|_2^2 \leq H := \begin{cases} 4 L^2, & p \leq 2,\\
    4L^2d,& p > 2.\end{cases}
\end{equation}
Putting \eqref{eq:adv.training.ip.lb_gd} and \eqref{eq:adv.training.norm.bound_gd} together, we have for $\eta \leq \eps H/4$,
\begin{align}\label{eq:onestep_norm_decrease_gd}
    \|\wb_k - \wb^*\|_2^2 - \|\wb_{k+1} - \wb^*\|_2^2 \geq 2 \eta(\lsrob(\wb_k) - \lsrob(\wb^*)) - \eta^2 H \geq 2 \eta(\lsrob(\wb_k) - \lsrob(\wb^*) - \eps/2).
\end{align}
We can use the above to bound the number of iterations until we reach a point with $\lsrob(\wb_k) \leq \lsrob(\wb^*) + \eps$.  Let $K$ be the number of iterations until we reach such a point, so that for $k=1, \dots, K$, it holds that $\lsrob(\wb_k) > \lsrob(\wb^*) + \eps$.  Then \eqref{eq:onestep_norm_decrease_gd} implies that for each of $k=1,\dots, K$,
\[ \|\wb_k - \wb^*\|_2^2 - \|\wb_{k+1} - \wb^*\|_2^2 \geq \eta \eps.\]
In particular, at every such iteration, $\| \wb_k - \wb^*\|_2^2$ decreases by at least $\eta \eps$.  There can only be $\|\wb_0-\wb^*\|_2^2 / (\eta \eps)$ such iterations.  This shows that there exists some $k^*\leq K = \|\wb_0-\wb^*\|_2^2 \eta^{-1} \eps^{-1}$ for which $\lsrob(\wb_k) \leq \lsrob(\wb^*)+\eps$, and this $k^*$ satisfies $\|\wb_{k^*}-\wb^*\|_2\leq \|\wb_0 - \wb^*\|_2$.  

% Telescoping the above sum over $k$, we get
% \begin{equation}
%     \frac 1 K \sum_{k=0}^{K-1} \lsrob(\wb_k) \leq \lsrob(\wb^*) + \frac{\norm{\wb_0 - \wb^*}_2^2}{\eta K} + \eps/2.
% \end{equation}
% Taking $K = 2\eps^{-1} \eta^{-1} \norm{\wb_0 - \wb^*}_2^2$, we are able to show that 
% \begin{align*}
% \min_{k\le K}\lsrob(\wb_k)\le\sum_{k=0}^{K-1}\lsrob(\wb_k)\le \lsrob(\wb^*) +\epsilon.
% \end{align*}
% Then let $k^*:=\min\{k\le K: \lsrob(\wb_k)\le \lsrob(\wb^*) +\epsilon\}$, we can further telescope the sum in \eqref{eq:onestep_norm_decrease} from $k=0$ to $k=k^*-1$, we get
% \begin{align*}
% \|\wb_{k^*}-\wb^*\|_2^2&\le \|\wb_0-\wb^*\|_2^2 + k^*\eta\epsilon + 2\eta\sum_{k=0}^{k^*-1}\big(\lsrob(\wb^*)-\lsrob(\wb_k) \big)\notag\\
% &\le \|\wb_0-\wb^*\|_2^2 + K\eta\epsilon\notag\\
% &\le 3\|\wb_0-\wb^*\|_2^2,
% \end{align*}
% where the second inequality is due to $\lsrob(\wb^*)-\lsrob(\wb_k)\le 0$ since $\wb_{k^*}$ is the first iterate satisfying $\lsrob(\wb_{k^*})\le \lsrob(\wb^*)+\epsilon$, and the last inequality follows from our choice of $K$. 
\end{proof}

\subsection{Proof of Lemma \ref{lemma:generalization_GD}}\label{appendix:cross.ent.generalization}
\begin{proof}[Proof of Lemma \ref{lemma:generalization_GD}]
We will follow the proof of Theorem 2 in \citet{yin2019rademacher}. Note that we have $\|\wb_{k^*}-\wb^*\|_2\le \|\wb_{k^*}\|_2$. Therefore, we have $\|\wb_{k^*}\|_2\le 2\|\wb^*\|_2$. Therefore we have
\begin{align*}
\|\wb_{k^*}\|_q \le \|\wb_{k^*}\|_2\cdot \max\{1, d^{1/q-1/2}\}\le 2\|\wb^*\|_2\cdot \max\{1, d^{1/q-1/2}\}\le 2\|\wb^*\|_q d^{|1/q-1/2|}\le 2d^{|1/q-1/2|}\rho.
\end{align*}
Then let $\rho' =  2d^{|1/q-1/2|}\rho$, we define the following function class $\cF\subseteq \RR^{\cX\times\{\pm 1\}}$,
\begin{align*}
\cF:=\bigg\{\min_{\xb'\in \cB_{p}(\xb,r)}y\la \wb, \xb'\ra: \|\wb\|_q\le \rho \bigg\}=\big\{y\la \wb, \xb\ra - r\|\wb\|_q: \|\wb\|_q\le \rho' \big\}.
\end{align*}
Since $\ell$ is decreasing and 1-Lipschitz, $\ell(y f(\xb)) \leq \ell(0) + \|\wb\|_q\|\xb\|_p + r \|\wb\|_q \leq \ell(0) + (1+r) \rho' :=B$ holds for any $f\in \mathcal{F}$.  Thus, by \citet[Corollary 1]{yin2019rademacher}, we know that with probability at least $1-\delta$, for arbitrary $k^*<K$ it holds that
\begin{align*}
\ldrob(\wb_{k^*})&\le \lsrob(\wb_{k^*})+2B\mathfrak{R}(\cF) + 3B\sqrt{\frac{\log(2/\delta)}{2n}}\notag.
\end{align*}
We now want to apply Lemma \ref{lemma:convergence_advtraining}.  Note that the iteration complexity $K$ depends on the $\ell_2$ norm of $\wb^*$.  Using H\"older's inequality, $\|\wb^*\|_2 \leq \max\{1, d^{\frac 1q - \frac 12}\}\norm{\wb}_q \leq \max\{1, d^{\frac 1q - \frac 12}\}\rho$.  Thus, by taking $K = \eta^{-1} \eps^{-1} \max\{1, d^{\frac 2q - 1}\}\rho^2$, we have the inequality
\begin{align*}
\lsrob(\wb_k^*) &\le L_{\cS}^{p,r}(\wb^*)+\epsilon+2B\mathfrak{R}(\cF) + 3B\sqrt{\frac{\log(2/\delta)}{2n}}.
\end{align*}
Applying~\citet[Corollary 1]{yin2019rademacher} once more to $\wb^*\in \mathcal{F}$, we get
\begin{equation}\label{eq:rademacher_generalization_fix}
L_{\cD}^{p,r}(\wb_{k^*})\le L_{\cD}^{p,r}(\wb^*)+\epsilon+4B\mathfrak{R}(\cF) + 6B\sqrt{\frac{\log(2/\delta)}{2n}}
\end{equation}
Moreover, applying union bound for all possible $k^*< K$, we can get with probability at least $1-\delta$, 
\begin{equation}
    \label{eq:rademacher_generalization}
L_{\cD}^{p,r}(\wb_{k^*})\le L_{\cD}^{p,r}(\wb^*)+\epsilon+4B\mathfrak{R}(\cF) + 6B\sqrt{\frac{\log(2K/\delta)}{2n}}
\end{equation}
Therefore, then rest effort will be made to prove the upper bound of the Rademacher complexity. 
Based on the definition of Rademacher complexity, we have
\begin{align*}
\mathfrak{R}(\cF) &= \frac{1}{n}\EE_{\sigma}\bigg[\sup_{\|\wb\|_q\le \rho' }\sum_{i=1}^n\sigma_i\big(y_i\wb^\top\xb_i - r\|\wb\|_q\big)\bigg]\notag\\
&= \frac{1}{n}\EE_{\sigma}\bigg[\sup_{\|\wb\|_q\le \rho' }\wb^\top \ub - v\|\wb\|_q\bigg],
\end{align*}
where $\sigma_i$ is i.i.d. Rademacher random variable, $\ub = \sum_{i=1}^n\sigma_iy_i\xb_i$ and $v = r\sum_{i=1}^n\sigma_i$. Then we have
\begin{align*}
\sup_{\|\wb\|_q\le \rho' }\wb^\top \ub - v\|\wb\|_q \le \sup_{\|\wb\|_q\le \rho'} \|\wb\|_q(\|\ub\|_p - v)\le \rho' |\|\ub\|_p - v|.
\end{align*}
Therefore, we have
\begin{align*}
\mathfrak{R}(\cF)&\le \frac{\rho' }{n}\EE_{\sigma}\big[|\|\ub\|_p-v|\big]\notag\\
&\le \frac{\rho'}{n}\Big[\EE_{\sigma}[\|\ub\|_p]+\EE_{\sigma}[|v|]\Big]\notag\\
&=\frac{\rho'}{n}\bigg[\EE_{\sigma}\bigg[\bigg\|\sum_{i=1}^n\sigma_iy_i\xb_i\bigg\|_p\bigg]+ \frac{r \rho'}{n} \EE_{\sigma}\bigg[\bigg|\sum_{i=1}^n\sigma_i\bigg|\bigg]\bigg]\notag\\
&=\frac{\rho'}{n}\EE_{\sigma}\bigg[\bigg\|\sum_{i=1}^n\sigma_i\xb_i\bigg\|_p\bigg]+\frac{\rho' r}{\sqrt{n}}\notag\\
&: = \rho ' \mathfrak{R}_p + \frac{\rho' r}{\sqrt{n}}.
\end{align*}
Plugging the above inequality into \eqref{eq:rademacher_generalization} completes the proof.
\end{proof}

\subsection{Proofs for Example \ref{example:soft.margin.l2} and Example \ref{example:soft.margin.lq}} \label{appendix:soft.margin}
We first show the properties given in Example \ref{example:soft.margin.l2}.  Part 1 follows by~\citet{frei2020halfspace}.  For Part 2, that $\phi_{\bar \vb,2}(\gamma) = O(\gamma)$ follows by~\citet{frei2020halfspace}.  For Part 3, to show that $(U', R)$-anti-anti-concentration implies $\phi_{\bar \vb,2}(\gamma) = \Omega(\gamma)$, we first note that if $\calD_x$ satisfies $(U', R)$ anti-anti-concentration defined in terms of projections onto two dimensional subspaces, then it also satisfies $(U', \Theta(R))$ anti-anti-concentration onto projections defined in terms of projections onto one dimensional subspaces, since we have the set of inclusions
\[ \{ \zb = (z_1,z_2):\ |z_i|\leq R/2,\ i=1,2 \} \subset \{ \zb:\ \|\zb\|\leq R \} \subset \{ \zb = (z_1,z_2):\ |z_i|\leq R,\ i=1,2 \}.\]
Therefore, denoting $p_{\vb}(\cdot)$ as the marginal density of $\calD_x$ onto the subspace spanned by $\vb \in \RR^d$, we have for $\gamma \leq \Theta(R)$,
\begin{align*}
\phi_{\bar \vb,2}(\gamma) &= \PP(|\bar \vb^\top \xb| \leq \gamma) \\
&= \int p_{\bar \vb}(z_1) \ind(|z_1| \leq \gamma) \dd z_1\\
&\geq \frac{2\gamma }{U'}.
\end{align*}
This shows that $\phi_{\bar \vb,2}(\gamma) = \Omega(\gamma)$ when $\gamma \leq \Theta(R)$.  
Finally, Part 4 of Example \ref{example:soft.margin.l2} follows by~\citet[Theorem 11]{balcan2017logconcave}, using the fact that the marginals of any log-concave distribution are again log-concave~\citep[Theorem 5.1]{lovasz2007logconcave}.

We now show the properties of Example \ref{example:soft.margin.lq}.
For the first part, the general case of $q\in (1,2]$ follows by H\"older's inequality, since we can write
\begin{align*}
    \| \vb\|_q = \left(\summ j q |v_j|^q\right)^{1/q} \leq \|(1, \cdots,1)\|_{2/(2-q)}^{1/q} \cdot \| (|v_1|^q, \dots, |v_d|^q)\|_{2/q}^{1/q} = d^{\frac 1 q - \frac 1 2} \|\vb\|_2.
\end{align*}
For the second, we use Example \ref{example:soft.margin.l2} and \eqref{eq:soft.margin.lq.intermediate} to get that 
\[ \phi_{\bar \vb, q}(\gamma) = \phi_{\bar \vb/\norm{\bar \vb}} (\gamma / \|\bar \vb\|_2) = \Omega(\gamma / \norm{\bar \vb}_2) = \Omega(\gamma).\]
In the last equality, we have used that $\|\vb\|_2\leq \|\vb \|_q$.

\subsection{Proof of Lemma \ref{lemma:adversarial.training.robust.risk.surrogate}}\label{appendix:surrogate.vs.classification.robust}
\begin{proof}[Proof of Lemma \ref{lemma:adversarial.training.robust.risk.surrogate}]
We use an argument similar to that used by~\citet{frei2020halfspace} for learning halfspaces with noise.  We write the surrogate risk as a sum of three terms,
\begin{align*}
\nonumber
    \ldrob(\vb) &= \E_{(\xb,y)} \left[\sup_{\xb'\in \bpxr} \ell(y \vb^\top \xb')\right] \\
    &\leq \E\left[\sup_{\xb'\in \bpxr} \ell(y \vb^\top \xb') \ind\left(y \bar \vb^\top \xb'\leq  0\right)\right] + \E\left[\sup_{\xb'\in \bpxr} \ell(y \vb^\top \xb') \ind\left(0 < y \bar \vb^\top \xb' \leq \gamma\right)\right] \\
    &\quad + \E\left[\sup_{\xb'\in \bpxr} \ell(y \vb^\top \xb') \ind\left (y \bar \vb^\top \xb'  > \gamma\right)\right] .\label{eq:objective.decomposition.3terms}
\end{align*}
For the first term, we use that $\ell$ is $L$-Lipschitz and decreasing together with H\"older's inequality to get
\begin{align*}
\nonumber
    \E\left[\sup_{\xb'\in \bpxr} \ell(y \vb^\top \xb') \ind(y \bar \vb^\top \xb' \leq 0)\right] &\leq \E\left [\sup_{\xb'\in \bpxr} (\ell(0) + L |\vb^\top \xb' | ) \ind(y \bar \vb^\top \xb' \leq 0)\right ] \\
    &\leq \E\left [\sup_{\xb'\in \bpxr} (\ell(0) + L \norm{\vb}_q \norm{\xb'}_p ) \ind(y \bar \vb^\top \xb' \leq 0)\right ]
    \nonumber\\
    &\leq (\ell(0) + L \rho) \E\left [\sup_{\xb'\in \bpxr} \ind(y \bar \vb^\top \xb' \leq 0)\right ]\\
    &= (\ell(0) + L \rho) \opt_{p,r}.%\label{eq:surrogate.ub.by.zeroone.firstterm}
\end{align*}
In the last inequality we use that $\norm {\xb}_p \leq 1$ a.s.  

For the second term, we first notice that since $|\sip {\bar \vb}{\xb}| \leq |\sip{\bar \vb}{\xb-\xb'}| + |\sip{\bar \vb}{\xb'}|$, we have the inclusion
\begin{equation*}
    \{ |\sip{\bar \vb}{\xb'}| \in (0, \gamma] \} \subset \{ |\sip{\bar \vb}{\xb}| \leq |\sip{\bar \vb}{\xb-\xb'}| + \gamma \}.
\end{equation*}
Therefore, we can bound
\begin{align*} 
\label{eq:surrogate.ub.by.zeroone.secondterm}
   \E\left[\sup_{\xb'\in \bpxr} \ell(y \vb^\top \xb') \ind\left(0 < y \bar \vb^\top \xb' \leq \gamma\right)\right] &\leq \ell(0) \E \left[ \sup_{\xb'\in \bpxr}  \ind\left(0 < y \bar \vb^\top \xb' \leq \gamma\right)\right]  \\
   &\leq \ell(0) \E\left[ \sup_{\xb'\in \bpxr} \ind(|\sip{\bar \vb}{\xb}| \leq |\sip{\bar \vb}{\xb-\xb'}| + \gamma ) \right] \\
   &\leq \ell(0) \E\left[ \sup_{\xb'\in \bpxr} \ind(|\sip{\bar \vb}{\xb}| \leq \norm{\bar \vb}_q \norm{\xb-\xb'}_p + \gamma ) \right]\\
   &\leq \ell(0) \E\left[ \sup_{\xb'\in \bpxr} \ind(|\sip{\bar \vb}{\xb}| \leq r + \gamma ) \right]\\
   &\leq \ell(0) \E\left[ \ind(|\sip{\bar \vb}{\xb}| \leq r + \gamma ) \right]\\
   &\leq \ell(0) \phi_{\bar \vb,q} ( r + \gamma).
\end{align*}
where we have used that $\ell$ is decreasing in the first inequality and Definition \ref{def:softmargin} in the second. 
Finally, for the last term, we can use that $\ell$ is decreasing to get
\begin{align*}
\E\left[\sup_{\xb'\in \bpxr} \ell(y \vb^\top \xb') \ind\left (y \bar \vb^\top \xb'  > \gamma\right)\right] &= \E\left[\sup_{\xb'\in \bpxr} \ell(y \rho \bar \vb^\top \xb') \ind\left(y \rho\bar \vb^\top \xb' >  \rho\gamma \right)\right] \\
&\leq \E\left[\sup_{\xb'\in \bpxr} \ell(\rho \gamma) \right]\\
&=\ell( \rho \gamma).   % \label{eq:surrogate.ub.by.zeroone.norm1.key}
\end{align*}
To see the final claim, note that we can write the event defining $\poproberr(\vb)$ as
\begin{align*}
  \Bigg\{ \sup_{\xb'\in \bpxr} \sign(\sip{\vb}{\xb}) \neq y)\Bigg \} &=\Bigg \{ \sup_{\xb'\in \bpxr} y \sip{\vb}{\xb'} < 0 \Bigg\} \\
  &=\Bigg \{ \sup_{\xb'\in \bpxr} \ell (  y \sip{\vb}{\xb'}) > \ell(0) \Bigg\}.
\end{align*}
The final equality follows since $\ell$ is decreasing.  This proves \eqref{eq:surrogate.vs.classification.key}.  For the final claim of the Lemma, by Markov's inequality,
\begin{align*}
    \poproberr(\vb) &= \PP_{(\xb, y)} \Big( \sup_{\xb'\in \bpxr} y \sip{\vb}{\xb'} < 0\Big) \\
    &= \PP_{(\xb,y)} \Big(  \sup_{\xb'\in \bpxr} \ell (  y \sip{\vb}{\xb'}) > \ell(0) \Big)\\
    &\leq [\ell(0)]^{-1} \E_{(\xb,y)}\Big[  \sup_{\xb'\in \bpxr} \ell(y \sip{\vb}{\xb'}) \Big] \\
    &= [\ell(0)]^{-1} \ldrob(\vb).
\end{align*}
\end{proof}

\subsection{Proofs of Corollaries}\label{appendix:corollaries}
\begin{proof}[Proof of Corollary \ref{corollary:hard.margin}]
First, let us note that $\ell(0) = \log 2 \approx 0.693$ and $\ell^{-1}(1/\eps) \in [\log(1/(2\eps)), \log(2/\eps)]$ for the cross entropy loss.  Additionally, by standard arguments from Rademacher complexity (see, e.g.,~\citet{kakade2009complexitylinear}, Theorem 1),
\begin{equation}\label{eq:robust.rademacher.ub}
\mathfrak{R}_p = \frac 1 n \EE_{\sigma_i \iid \mathrm{Unif}(\pm 1)} \left\|\summ i n \sigma_i \xb_i\right\|_p = \begin{cases} O\left(\frac{p}{\sqrt n}\right), & p \in[2,\infty),\\ O\left(\frac{\log d}{\sqrt n}\right) = \tilde O\left(\frac{1}{\sqrt n}\right) , & p =\infty,\end{cases}.
\end{equation}
By the definition of hard margin, the soft margin function satisfies $\phi_{\bar \vb, q}(\gamma) = 0$ for $\gamma<\gamma_0$.  Thus applying Theorem \ref{thm:crossentropy} with $r = (1-\nu )\gamma_0$ and $\gamma = \nu \gamma_0$, we get
\begin{equation} 
\erobrp (\wb_k^*)\leq O ( \log(1/\eps) \gamma_0^{-1} \nu^{-1} \opt_{p,r}) + \tilde O\left(\frac{ \gamma_0^{-1} \nu^{-1} B \log(1/\eps) \sqrt{\log(1/\delta)}}{\sqrt n}\right) + \eps,
\end{equation}
where the $\tilde O(\cdot)$ in the second term hides the possible logarithmic dependence on $d$ when $p=\infty$.  
Now set $\eps = \opt_{p,r}$ and let $n = \tilde \Omega(\gamma_0^{-2}\nu^{-2} B^2 \sqrt{\log(1/\delta)} \log^2(1/\opt_{p,r}) \opt_{p,r}^{-2})$.  Since $B = \ell(0) + (1+r)\gamma^{-1} \ell^{-1}(\eps) = \tilde O(\gamma^{-1})$, this completes the proof. % On the other hand, by Corollary \ref{corollary:regular.training.robust.risk.surrogate}, 
% \[ \erobrp(v^*) \leq \ell(R r) + (1+R)\opt_{p,r}.\]
% Taking $R = r^{-1} = [(1-\nu)\gamma_0]^{-1}$, we get
% \[ \erobrp(v^*) \leq \big(1 + (1-\nu)^{-1}\gamma_0^{-1}\big) \opt_{p,r}.\]
\end{proof}

\begin{proof}[Proof of Corollary \ref{corollary:anti.concentration}]
% Let $\wb \in \RR^d$ be such that $\|\wb\|_q=1$.  We first note that we have the identity (for a proof, see~\citet[Appendix A]{diakonikolas2020complexity})
% \begin{equation}\label{fact.eq}
%     \Big \{ \exists x' \in \bpxr : \sign(\sip{\wb}{\xb'}) \neq y \Big \} = \Big \{ y \sip{\wb}{\xb} < r \Big \} 
% \end{equation}
% Therefore $\poproberr(\wb) \leq \PP(| \sip{\wb}{\xb}| \leq r)  = \phi_{\wb, q}(r)$ holds for any $\wb$ with unit $\ell_q$-norm.  If $\calD_x$ satisfies $U$-anti-concentration, then Example \ref{example:soft.margin.lq} gives that $\phi_{\wb, q}(r) \leq O(r d^{\frac 1q - \frac 12})$.  Therefore, we see that
% \begin{equation}
%     \opt_{p,r} \leq O(r d^{\frac 1q -\frac 12}).
% \end{equation}
% \todos{Is this dimension dependence supposed to appear?}
Denote $q$ as the H\"older conjugate to $p$, so $1/p+1/q=1$.  Since the inclusion $\{|\sip{\wb^*}{\xb}| \leq r \} \subset \{ y \sip{\wb}{\xb} \leq r \}$ holds, we have
\[ \phi_{\wb, q}(r) \leq \PP(y\sip{\wb}{\xb} \leq r) = \poproberr(\wb).\]
Therefore $\phi_{\wb, q}(r)\leq \poproberr(\wb)$.  If $\calD_x$ satisfies $(U', R)$-anti-anti-concentration, then $\phi_{\wb,q}(r) = \Omega(r)$ for any $q \in [1,2]$ by Example \ref{example:soft.margin.lq} since $r\leq R$.  This shows that $r = O(\poproberr(\wb))$ holds for any $\wb$ satisfying $\|\wb\|_q=1$, and hence $r = O(\opt_{p,r})$ holds.  When $q$ is the H\"older conjugate to $p$, $q$ satisfies $1/q = 1-1/p.$  Thus, by Example \ref{example:soft.margin.lq},
\begin{equation} \label{eq:soft.margin.lq.bound}
\phi_{\bar \vb, q}(\gamma+r) \leq O\big((\gamma + r) d^{\frac 12 - \frac 1p}\big) \leq O(\gamma d^{\frac 12 - \frac 1p}) + O(d^{\frac 12 - \frac 1p} \opt_{p,r}).
\end{equation}
  By Theorem \ref{thm:crossentropy}, the robust classification error $\erobrp (\wb_{k^*})$ for weights $\wb_{k^*}$ found by Algorithm \ref{alg:adv_training} is at most 
\[ O ( \log(1/\eps) \gamma^{-1} \opt_{p,r}) + \phi_{\bar \vb, q}(r + \gamma) + O\left(\frac{ \gamma^{-1} B \log(1/\eps) \sqrt{\log(1/\delta)}}{\sqrt n}\right) + O(B \gamma^{-1} \log(1/\eps) \mathfrak{R}_p) + \eps.\]
Let now $\eps = \opt_{p,r}$ and choose $\gamma = \opt_{p,r}^{1/2} d^{\frac{1}{2p}-\frac{1}4}$ so that $\gamma^{-1} \opt_{p,r}$ and $\gamma d^{\frac 12 - \frac 1p}$ (coming from \eqref{eq:soft.margin.lq.bound}) are of the same order.  This results in 
\begin{equation}\label{eq:rob.err.ub.opt.plus.r}
    \erobrp(\wb_{k^*}) \leq O(d^{\frac 14 - \frac{1}{2p}} \opt_{p,r}^{1/2} \log(1/\opt_{p,r})) + O(d^{\frac 12 - \frac 1p} \opt_{p,r}) + \tilde O\left( \frac{ d^{\frac 14 - \frac 1{2p}} Br \opt_{p,r}^{-1/2} \log(1/\opt) }{\sqrt n}\right).
\end{equation}
Taking $n = \tilde \Omega(d^{\frac 12 - \frac{1}{p}} B^2 r^2 \opt_{p,r}^{-1})$, and using the fact that $B = O(r \gamma^{-1} \ell^{-1}(\eps)) = \tilde O(d^{\frac 14 - \frac 1{2p}} \opt_{p,r}^{-1/2})$ completes the proof. 
\end{proof}

\section{Guarantees of SGD for Convex Losses}\label{sec:SGD_guarantee}
Our results for SGD will rely upon the assumption that $\ell(z)$ is $M$-smooth, i.e. $\ell''(z) \leq M$.   This allows for us to deal with unbounded Lipschitz activations and also get high-probability guarantees for the output of SGD. In particular, we derive the robust guarantee of the output of SGD in the following theorem.

\begin{theorem}\label{thm:guarantee_SGD}
Assume $\ell$ is convex, decreasing, $L$-Lipschitz, and  $M$-smooth. Let $p\in [1,\infty]$, and assume that $\norm{\xb}_p\leq 1$ a.s.  If $p\leq 2$, let $H = 4L^2$, and if $p > 2$, let $H = 4L^2d$.  Let $\eps>0$ and be arbitrary.  If $\eta \leq \eps H^{-1}/4$, then for any initialization $\wb_1$, if we denote $\wb_k$ as the $k$-th iterate of online-SGD based adversarial training and $\cS=\{\xb_k\}_{k=1,\dots,K}$ be all observed data, by taking $K = 2\eps^{-1}\eta^{-1} \norm{\wb_1 - \wb^*}_2^2$, we have,
\begin{align*}
\min_{k\le K}\err_{\cD}^{p,r}(\wb_k) \le  [\ell'(0)]^{-2}\cdot\bigg(32M L_\cD^{p,r}(\wb^*) + 16M\epsilon + \frac{\big[16M\big(\ell(0)+L\rho(1+r)\big)+4L^2\big]\cdot \log(2/\delta)}{K}\bigg).
\end{align*}
\end{theorem}
For simplicity we treat $M$, $\ell(0)$, $[\ell'(0)]^{-1}$, and $L$ as constants. Then we can set $\epsilon = O(1/K)$ and Theorem \ref{thm:guarantee_SGD} implies that
\begin{align*}
\min_{k\le K}\err_{\cD}^{p,r}(\wb_k) = O(L_{\cD}^{p,r}(\wb^*)) + \tilde O(\rho/K).
\end{align*}
Further note that $\wb^*$ is arbitrary. Then similar to Lemma \ref{lemma:adversarial.training.robust.risk.surrogate}, let $\bar\vb = \min_{\|\wb\|_q=1}\err_\cD^{p,r}(\wb)$, we can define $\wb^* = \rho\bar\vb$, which satisfies
\begin{align*}
L_{\cD}^{p,r}(\wb^*))\le  (\ell(0) + \rho ) \opt_{p,r} + \ell(0) \phi_{\bar \vb,q}(r + \gamma) + \ell(\rho \gamma)
\end{align*}
for arbitrary $\gamma\ge 0$. Then similar to Theorem \ref{thm:crossentropy}, set $\rho = \gamma^{-1}\ell^{-1}(1/\epsilon)$, we get
\begin{align*}
\min_{k\le K}\err_{\cD}^{p,r}(\wb_k) \le C\cdot \big[\big(\ell(0) + \gamma^{-1}\ell^{-1}(1/\epsilon) \big)\cdot \opt_{p,r} + \ell(0) \phi_{\bar \vb,q}(r + \gamma) + \epsilon\big] + \tilde O\big(\gamma^{-1}\ell^{-1}(1/\epsilon)/K\big)
\end{align*}
for some absolute constant $C$. Then it can be seen that the first term (in the first bracket) is nearly identical to the first three terms of the bound in Theorem \ref{thm:crossentropy} up to some constant factors. Then Corollaries \ref{corollary:hard.margin} and \ref{corollary:anti.concentration} also hold, implying that the output of online-SGD adversarial training enjoys the same robustness guarantee as that of full-batch gradient descent adversarial training.

\subsection{Proof of Theorem \ref{thm:guarantee_SGD}}
We first provide the convergence guarantee of SGD in the following lemma.
\begin{lemma}\label{lemma:convergence_advtraining_SGD}
Assume $\ell$ is convex, decreasing, and $L$-Lipschitz.  Let $\wb^*\in \RR^d$ be arbitrary.  Let $p\in [1,\infty]$, and assume that $\norm{\xb}_p\leq 1$ a.s.  If $p\leq 2$, let $H = 4L^2$, and if $p > 2$, let $H = 4L^2d$.  Let $\eps>0$ and be arbitrary.  If $\eta \leq \eps H^{-1}/4$, then for any initialization $\wb_1$, if we denote $\wb_k$ as the $k$-th iterate of Algorithm \ref{alg:adv_training} and $\cS=\{\xb_k\}_{k=1,\dots,K}$ be all observed data, by taking $K = 2\eps^{-1}\eta^{-1} \norm{\wb_1 - \wb^*}_2^2$, we have
\begin{align*}
\frac{1}{K}\sum_{k=1}^{K}L_k^{p,r}(\wb_k)\le L_{\cS}^{p,r}(\wb^*) +\epsilon.
\end{align*}
\end{lemma}

\begin{proof}
Following the notation of Algorithm \ref{alg:adv_training}, denote 
\[ \deb_k := \argmax_{\norm{\deb}_p\leq r} \ell(y_k \wb_k^\top(\xb_k + \deb)).\]
Note that $\deb_k = \deb_k(\wb_k, y_k, \xb_k)$ depends on $\wb_k$.  To analyze the convergence of gradient descent on the robust risk, we introduce a reference vector $\wb^*\in \RR^d$, and consider the decomposition
\begin{align*}
    \|\wb_k - \wb^*\|_2^2 - \|\wb_{k+1} - \wb^*\|_2^2  &= 2 \eta \langle  \ell'\big(y_k \wb_k^\top (\xb_k + \deb_k)\big) y_k (\xb_k + \deb_k), \wb_k - \wb^*\rangle \\
    &-\eta^2 \big\| \ell'\big(y_k \wb^\top(\xb_k + \deb_k) \big) y_k (\xb_k + \deb_k) \big\|_2^2.
\end{align*}
For the first term, note that for every $(\xb_k, y_k)\in S$ and $k\in \mathbb N$,
\begin{align*}\nonumber
    &\ell'\big(y_k \wb_k^\top(\xb_k + \deb_k)\big)( \wb_k^\top(\xb_k + \deb_k) - \wb^{*\top} (\xb_k + \deb_k))\\
    & \geq \ell\big(y_k \wb_k^\top(\xb_k + \deb_k)\big) - \ell\big(y_k \wb^{*\top} (\xb_k + \deb_k) \big) \\
    &\geq \ell\big(y_k \wb_k^\top(\xb_k + \deb_k)\big) - \sup_{\norm{\deb}_p \leq r} \ell\big(y_k \wb^{*\top} (\xb_k + \deb) \big),
\end{align*}
where the first line follows by convexity of $\ell$.  This allows for us to bound
\begin{align}\nonumber
    &\ell'\big(y_k \wb_k^\top (\xb_k + \deb_k)\big) y_k  (\wb_k - \wb^*)^\top(\xb_k + \deb_k) \\
    &\geq  \ell\big(y_k \wb_k^\top(\xb_k + \deb_k)\big) - \sup_{\norm{\deb} \leq r} \ell\big(y_k \wb^{*\top} (\xb_k + \deb) \big) \notag\\
    &= L_k^{p,r}(\wb_k) - L_k^{p,r}(\wb^*).\label{eq:adv.training.ip.lb.sgd}
\end{align}
For the gradient upper bound, under the assumption that $\ell$ is $L$-Lipschitz,
\begin{align*}
    \big\|  \ell'\big(y_k \wb^\top(\xb_k + \deb_k) \big) y_k (\xb_k + \deb_k) \big\|_2^2 &\leq  \| \ell'\big(y_k \wb^\top(\xb_k + \deb_k) \big) y_k (\xb_k + \deb_k) \|_2^2 \\
    &\leq  L^2 \norm{\xb_k + \deb_k}_2^2 \\
    &\leq 2 L^2  (\norm{\xb_k}_2^2 + \norm{\deb_k}_2^2)\\
    &\leq 2 L^2 \sup_{\xb \sim \calD_x } \norm{\xb}_2^2 + 2 L^2 \sup_{\norm{\deb}_p \leq r} \norm{\deb}_2^2 \\
    &\leq 2 L^2 \sup_{\xb \sim \calD_x} \norm{\xb}_p^2 \cdot \frac{\norm{\xb}_2^2}{\norm{\xb}_p^2} + 2 L^2 \sup_{\norm{\deb}_p \leq r} \norm{\deb}_p^2 \cdot \frac{\norm{\deb}_2^2}{\norm{\deb}_p^2}\\
    &\leq \begin{cases} 2 L^2(1 + r), & p \leq 2,\\
    2L^2(d + rd),& p > 2.\end{cases}
\end{align*}
In the first inequality, we use Jensen's inequality.  In the second we use that $\ell$ is $L$-Lipschitz.  The third inequality follows by Young's inequality.  In the last, we use that $\norm{x}_p\leq 1$ and that $p \mapsto \norm{\xb}_p$ is a decreasing function for fixed $\xb$, together with the bound $\norm{\xb}_2/\norm{\xb}_\infty \leq \sqrt{d}$.   Assuming without loss of generality that $r \leq 1$, this shows that 
\begin{equation}\label{eq:adv.training.norm.bound.sgd}
    \big\| \ell'\big(y_k \wb^\top(\xb_k + \deb_k) \big) y_k (\xb_k + \deb_k) \big\|_2^2 \leq H := \begin{cases} 4 L^2, & p \leq 2,\\
    4L^2d,& p > 2.\end{cases}
\end{equation}
Putting \eqref{eq:adv.training.ip.lb.sgd} and \eqref{eq:adv.training.norm.bound.sgd} together, we have for $\eta \leq \eps H/4$,
\begin{align}\label{eq:onestep_norm_decrease.sgd}
    \|\wb_k - \wb^*\|_2^2 - \|\wb_{k+1} - \wb^*\|_2^2 \geq 2 \eta(L_k^{p,r}(\wb_k) - L_k^{p,r}(\wb^*) - \eta^2 H \geq 2 \eta(L_k^{p,r}(\wb_k) - L_k^{p,r}(\wb^*) - \eps/2).
\end{align}
Telescoping the above sum over $k$, we get
\begin{equation*}
    \frac 1 K \sum_{k=1}^{K} L_k^{p,r}(\wb_k)\leq \frac{1}{K}\sum_{k=1}^{K}L_k^{p,r}(\wb^*)+ \frac{\norm{\wb_1 - \wb^*}_2^2}{\eta K} + \eps/2 = L_{\cS}^{p,r}(\wb^*)+ \frac{\norm{\wb_1 - \wb^*}_2^2}{\eta K} + \eps/2.
\end{equation*}
Taking $K = 2\eps^{-1} \eta^{-1} \norm{\wb_1 - \wb^*}_2^2$, we are able to show that 
\begin{align}\label{eq:regret_bound.sgd}
\frac{1}{K}\sum_{k=1}^{K}L_k^{p,r}(\wb_k)\le L_{\cS}^{p,r}(\wb^*) +\epsilon.
\end{align}
This completes the proof.

\end{proof}

Later we will give the following lemma which shows that the empirical robust risk $ L_{\cS}^{p,r}(\wb^*)$ can be upper bounded by $ O(L_{\cD}^{p,r}(\wb^*))+\tilde O(1/K)$. 

\begin{lemma}\label{lemma:concentration_w*}
Suppose $\|\wb^*\|_q\le \rho$ and $\|\xb\|_p\le 1$, then for any $\delta\in(0,1)$ we have with probability at least $1-\delta$,
\begin{align*}
L_\cS^{p,r}(\wb^*) \le 2L_\cD^{p,r}(\wb^*) + \frac{\big(\ell(0)+L\rho(1+r)\big)\log(1/\delta)}{K}
\end{align*}
\end{lemma}
\begin{proof}
We will use Lemma A.5 in \citet{frei2020halfspace} to prove Lemma \ref{lemma:concentration_w*}. In particular, we only need to calculate the upper bound $\ell(y\wb^{*\top}(\xb+\deb))$ is upper for any $\xb$ and $y$ satisfying $\|\xb\|_p\le 1$. In fact, we can get
\begin{align*}
\ell(y\wb^{*\top}(\xb+\deb))\le \ell(0) + L|\wb^{*\top}(\xb+\deb)| \le \ell(0) + L(\|\wb^*\|_q\|\xb\|_p + r\|\wb^*\|_q)\le \ell(0)+L\rho(1+r).
\end{align*}
where the first inequality is due to $\ell(\cdot)$ is $L$-Lipschitz, the second inequality is by Holder's inequality, and the last inequality is due to $\|\xb\|_p\le 1$ and $\|\wb^*\|_q\le \rho$.
Then by Lemma A.5 in \citet{frei2020halfspace}, we can get  that with probability at least $1-\delta$,
\begin{align*}
L_\cS^{p,r}(\wb^*) \le 2L_\cD^{p,r}(\wb^*) + \frac{\big[\ell(0)+L\rho(1+r)\big]\log(1/\delta)}{K}.
\end{align*}
This completes the proof.

\end{proof}

Then we will use similar approach in \citet{frei2020halfspace} to get a high-probability guarantees for the output of SGD. In particular, we will use the following lemma to show that $\big\{\big[\ell'(y_k\wb_k^\top(\xb_k+\deb_k))\big]^2\big\}$ concentrates at rate $O(1/K)$ for any fixed stochastic gradient descent iterates $\{\wb_k\}$.
\begin{lemma}[Lemma A.4 in \citet{frei2020halfspace}]\label{eq:concentration_grad}
Under the same assumption in Lemma \ref{lemma:convergence_advtraining_SGD}, then for any $\delta\in(0,1)$, with probability at least $1-\delta$,
\begin{align*}
\frac{1}{K}\sum_{k=1}^{K} \EE_{(\xb,y)\sim\cD} \Big[\big[\ell'(y\wb_k^\top\xb - r\|\wb_k\|_q)\big]^2\Big]\le \frac{4}{K} \sum_{k=1}^K \big[\ell'(y_k\wb_k^\top(\xb_k+\deb_k))\big]^2 + \frac{4L^2 \log(1/\delta)}{K},
\end{align*}
\end{lemma}
Here we slightly modify the original version of Lemma A.4 in \citet{frei2020halfspace} by introducing the adversarial examples. In particular, we use the fact that $\deb_k$ is the optimal perturbation with respect to the data $(\xb,y)$ and the model $\wb_k$ (see \eqref{eq:delta.star.def}).  Therefore, we have $y_k\wb_k^\top\deb_k=-r\|\wb_k\|_q$ and thus
\begin{align*}
\EE_{(\xb_k,y_k)\sim\cD}\Big[\big[\ell'(y_k\wb_k^\top(\xb_k+\deb_k))\big]^2\Big] & = \EE_{(\xb_k,y_k)\sim\cD}\Big[\big[\ell'(y_k\wb_k^\top\xb_k - r\|\wb_k\|_q)\big]^2\Big] \notag\\
&=  \EE_{(\xb,y)\sim\cD} \Big[\big[\ell'(y\wb_k^\top\xb - r\|\wb_k\|_q)\big]^2\Big].
\end{align*}
Then Lemma A.4 in \citet{frei2020halfspace} is applicable since it only requires the Lipschitzness of $\ell(\cdot)$ and the fact that $\Big\{\big[\ell'(y_k\wb_k^\top(\xb_k+\deb_k))\big]^2-\EE_{(\xb,y)\sim\cD} \Big[\big[\ell'(y\wb_k^\top\xb - r\|\wb_k\|_q)\big]^2\Big]\Big\}$ is a martingale difference sequence.

With this, we can show that for any smooth loss function $\ell(\cdot)$, we are able to get a high-probability bound on the robust error $\min_{k\le K}\err_{\cD}^{p,r}(\wb_k)$. 

Now we are ready to complete the proof of Theorem \ref{thm:guarantee_SGD}.
\begin{proof}[Proof of Theorem \ref{thm:guarantee_SGD}]
Since $\ell(\cdot)$ is $M$-smooth, we have $[\ell'(z)]^2\le 4M\ell(z)$ for all $z\in\RR$. Then applying Lemma \ref{eq:concentration_grad}, we have with probability at least $1-\delta/2$,
\begin{align*}
\frac{1}{K}\sum_{k=1}^{K} \EE_{(\xb,y)\sim\cD} \Big[\big[\ell'(y\wb_k^\top\xb - r\|\wb_k\|_q)\big]^2\Big]&\le \frac{4}{K} \sum_{k=1}^K \big[\ell'(y_k\wb_k^\top(\xb_k+\deb_k))\big]^2 + \frac{4L^2 \log(2/\delta)}{K}\notag\\
&\le \frac{16M}{K} \sum_{k=1}^K L_k^{p,r}(\wb_k)  + \frac{4L^2 \log(2/\delta)}{K}.
\end{align*}
Further applying Lemma \ref{lemma:convergence_advtraining_SGD} gives
\begin{align}\label{eq:sgdbound_pathaverage_gradsquare}
\frac{1}{K}\sum_{k=1}^{K} \EE_{(\xb,y)\sim\cD} \Big[\big[\ell'(y\wb_k^\top\xb - r\|\wb_k\|_q)\big]^2\Big]\le 16M L_\cS^{p,r}(\wb^*)  + 16M\epsilon + \frac{4L^2 \log(2/\delta)}{K}
\end{align}
where $\wb^*\in\RR^d$ and $\epsilon\in(0,1)$ are arbitrary. 

Since $\ell(\cdot)$ is convex and decreasing, it is easy to verify that $[\ell'(z)]^2$ is decreasing. Applying Markov's inequality gives
\begin{align*}
\err_{\cD}^{p,r}(\wb) &= \PP[y\wb^\top\xb- r\|\wb\|_q\le 0] = \PP\Big[\big[\ell'(y\wb^\top\xb- r\|\wb\|_q)\big]^2\ge [\ell'(0)]^2\Big]\\
&\le [\ell'(0)]^{-2}\cdot\EE\Big[\big[\ell'(y\wb^\top\xb- r\|\wb\|_q)\big]^2\Big].
\end{align*}
Therefore, substituting the above inequality into \ref{eq:sgdbound_pathaverage_gradsquare} gives
\begin{align}\label{eq:bound_average_error_sgd}
\frac{1}{K}\sum_{k=1}^K \err_{\cD}^{p,r}(\wb_k)&\le \frac{[\ell'(0)]^{-2}}{K}\sum_{k=1}^K \EE\Big[\big[\ell'(y\wb^\top\xb- r\|\wb\|_q)\big]^2\Big]\notag\\
&\le [\ell'(0)]^{-2}\cdot\bigg(16M L_\cS^{p,r}(\wb^*)  + 16M\epsilon + \frac{4L^2 \log(2/\delta)}{K}\bigg).
\end{align}
Moreover, by Lemma \ref{lemma:concentration_w*} we have with probability at least $1-\delta/2$,
\begin{align*}
L_\cS^{p,r}(\wb^*) \le 2L_\cD^{p,r}(\wb^*) + \frac{\big(\ell(0)+L\rho(1+r)\big)\log(2/\delta)}{K}.
\end{align*}
Substituting the above inequality into \eqref{eq:bound_average_error_sgd} and using the fact that $\min_{k\le K}\err_{\cD}^{p,r}(\wb_k)\le K^{-1}\sum_{k=1}^K \err_{\cD}^{p,r}(\wb_k)$, we have with probability $1-\delta$ that
\begin{align*}
\min_{k\le K}\err_{\cD}^{p,r}(\wb_k)\le[\ell'(0)]^{-2}\cdot\bigg(32M L_\cD^{p,r}(\wb^*) + \frac{16M\big(\ell(0)+L\rho(1+r)\big)\log(1/\delta)}{K} + 16M\epsilon + \frac{4L^2 \log(1/\delta)}{K}\bigg)
\end{align*}
which completes the proof.
% &\le  [\ell'(0)]^{-2}\cdot\bigg(32M L_\cD^{p,r}(\wb^*) + \frac{16M\big(\ell(0)+L\rho(1+r)\big)\log(1/\delta)}{K} + 16M\epsilon + \frac{4L^2 \log(1/\delta)}{K}\bigg).
% \end{align*}

% where the last inequality is by Lemma \ref{lemma:concentration_w*}. This completes the proof.

\end{proof}

\section{Proofs for Nonconvex Sigmoidal Loss}
We first restate some general calculations which will be frequently used in the subsequent analysis.
Let $h(\wb,\xb) =  \wb^\top\xb/\|\wb\|_q$ be the prediction of the model and $S = \{(\xb,y): y= \text{sgn}(\wb^{*\top}\xb)\}$ be the set of data which can be correctly classified by $\wb^*$ without perturbation, we have
\begin{align*}
\nabla_{\wb}  L_{\cD}^{p,r}(\wb) &= \EE_{(\xb,y)\sim \cD}\big[\ell'(yh(\wb,\xb+\deb))\cdot y\cdot \nabla_{\wb}h(\wb,\xb+\deb)\big]\notag\\
& = \EE_{(\xb,y)\sim \cD}\big[\ell'(yh(\wb,\xb+\deb))\cdot y\cdot \nabla_{\wb}h(\wb,\xb+\deb)\cdot (\ind(S) + \ind(S^c))\big].
% \notag\\
% &= \EE_{(\xb,y)\sim \cD}\big[\ell'(yh(\wb,\cA(\xb))\cdot \text{sgn}(\wb^{*\top}\xb)\cdot\nabla_{\wb}h(\wb,\cA(\xb))\cdot (1-2\ind(S))\big].
\end{align*}
Moreover, note that $\deb$ is the optimal $\ell_p$ adversarial perturbation corresponding to the model parameter $\wb$, it can be calculated that $yh(\wb,\xb+\deb) = y\wb^\top\xb/\|\wb\|_q - r$. Then it follows that
\begin{align*}
yh(\wb,\xb+\deb) = \left\{
\begin{array}{cc}
  \text{sgn}(\wb^{*\top}\xb)\cdot\frac{\wb^\top\xb}{\|\wb\|_q}-r   & \xb\in S \\
   -\text{sgn}(\wb^{*\top}\xb)\cdot\frac{\wb^\top\xb}{\|\wb\|_q}-r  & \xb\in S^c
\end{array}
\right.
\end{align*}
Let $g_S(\wb^*,\wb;\xb) = \ell'(\text{sgn}(\wb^{*\top}\xb)\cdot \wb^\top\xb/\|\wb\|_q-r )$ and $g_{S^c}(\wb^*,\wb;\xb) = \ell'(-\text{sgn}(\wb^{*\top}\xb)\cdot \wb^\top\xb/\|\wb\|_q-r )$, the gradient $\nabla_{\wb}L_{\cD}^{p,r}(\wb)$ can be rewritten as
\begin{align*}
\nabla_{\wb}L_{\cD}^{p,r}(\wb)&=\EE_{(\xb,y)\sim \cD}\big[\ell'(yh(\wb,\xb+\deb))\cdot \text{sgn}(\wb^{*\top}\xb)\cdot \nabla_{\wb}h(\wb,\xb+\deb)\cdot  \ind_{S}(\xb)\big]\notag\\
& \qquad - \EE_{(\xb,y)\sim \cD}\big[\ell'(yh(\wb,\xb+\deb))\cdot \text{sgn}(\wb^{*\top}\xb)\cdot \nabla_{\wb}h(\wb,\xb+\deb)\cdot  \ind(S^c)\big]\notag\\
& = \EE_{(\xb,y)\sim \cD}\big[g_{S}(\wb^*,\wb;\xb)\cdot \nabla_{\wb}h(\wb,\xb+\deb)\cdot  \ind_{S}(\xb)\big]\notag\\
& \qquad - \EE_{(\xb,y)\sim \cD}\big[g_{S^c}(\wb^*,\wb;\xb)\cdot \text{sgn}(\wb^{*\top}\xb)\cdot \nabla_{\wb}h(\wb,\xb+\deb)\cdot  \ind(S^c)\big]\notag\\
& = \EE_{(\xb,y)\sim \cD}\big[g_{S}(\wb^*,\wb;\xb)\cdot \nabla_{\wb}h(\wb,\xb+\deb)\big]\notag\\
& \qquad - \EE_{(\xb,y)\sim \cD}\big[\big(g_{S}(\wb^*,\wb;\xb)+g_{S^c}(\wb^*,\wb;\xb)\big)\cdot \text{sgn}(\wb^{*\top}\xb)\cdot \nabla_{\wb}h(\wb,\xb+\deb)\cdot  \ind(S^c)\big]
\end{align*}
Then it follows that
\begin{align}\label{eq:formula_inner_product}
\wb^{*\top} \nabla L_{\cD}^{p,r}(\wb) &= \underbrace{\EE_{(\xb,y)\sim \cD}\big[g_{S}(\wb^*,\wb;\xb)\cdot \text{sgn}(\wb^{*\top}\xb)\cdot \wb^{*\top}\nabla_{\wb}h(\wb,\xb+\deb) \big]}_{I_1}\notag\\
& - \underbrace{\EE_{(\xb,y)\sim \cD}\big[\big(g_{S^c}(\wb^*,\wb;\xb)+g_{S^c}(\wb^*,\wb;\xb)\big)\cdot \text{sgn}(\wb^{*\top}\xb)\cdot \wb^{*\top}\nabla_{\wb}h(\wb,\xb+\deb)\cdot \ind(S^c)\big]}_{I_2}
\end{align}

Note that
\begin{align}\label{eq:grad.h.formula}
\nabla_{\wb}h(\wb,\xb+\deb) = \bigg(\Ib - \frac{\bar \wb \wb^\top}{\|\wb\|_q^{q}}\bigg)\frac{\xb+\deb}{\|\wb\|_q},
\end{align}
where $\bar w_j = |w_j|^{q-1}\text{sgn}(w_j)$. Therefore, it holds that 
\begin{align*}
\wb^{*\top}\nabla_{\wb}h(\wb,\xb+\deb) = \frac{\wb^{*\top}(\xb+\deb)}{\|\wb\|_q} - \frac{\wb^\top(\xb+\deb)\cdot \bar \wb^{\top}\wb^*}{\|\wb\|_q^{q+1}}.
\end{align*}
Consider the optimal adversarial perturbation for the classifier $\wb$, we have $\deb =  - ry\bar \wb/\|\wb\|_q^{q-1}$.  Thus it follows that
\begin{align}
\wb^{*\top}\nabla_{\wb}h(\wb,\xb+\deb) &=\frac{\wb^{*\top}\xb}{\|\wb\|_q}- \frac{r y\wb^{*\top}\bar \wb}{\|\wb\|_q^q} - \frac{(\wb^\top\xb/\|\wb\|_q - r y)\cdot \bar \wb^\top \wb^*}{\|\wb\|_q^q}\notag\\
&= \frac{\wb^{*\top}\xb}{\|\wb\|_q} - \frac{\wb^\top\xb\cdot \bar \wb^\top \wb^*}{\|\wb\|_q^{q+1}}:= \tilde\wb^\top\xb,\label{eq:strongest.attacker}
\end{align}
where $\tilde\wb = \wb^*/\|\wb\|_q - (\bar\wb^\top\wb^*)\wb/\|\wb\|_q^{q+1}$.

\subsection{Proof of Lemma \ref{lemma:upperbound_opt_grad_product_main}}\label{sec:upperbound_inner_product}

% \begin{lemma}\label{lemma:upperbound_opt_grad_product}
% Let $p\in [2,\infty]$ and $q\in[1,2]$ be such that $1/p+1/q=1$ and $\wb^*=\arg\min_{\|\wb\|_q=1}L_\cD^{p,r}(\wb)$ be the optimal model parameter that achieves minimum $\ell_p$ robust error and $\err_\cD(\wb^*)$ be the clean error achieved by $\wb^*$. Suppose the data distribution $\cD$ satisfies Assumption \ref{assump:isotropic}. For any $\wb$ of $\ell_q$ norm $1$, let $\theta(\wb) = \angle(\wb,\wb^*)$ and $\theta'(\wb)=\angle(-\wb,\tilde\wb)$. 
% Then set $\sigma =r$, if $R\|\wb\|_2\sin^{3/2}(\theta'(\wb))/(100U)\ge r$, $(2^{14}R^4\|\wb\|)^{-1}U'^2r\sin^2(\theta'(\wb))\ge \err_\cD(\wb^*)$ and 
% \begin{align*}
% \sin(\theta(\wb)) \le  \max\bigg\{\frac{4r}{R\|\wb\|_2},\frac{100 r \sqrt{U/U'}}{R\|\wb\|_2\sin^{1/2}(\theta'(\wb))}\bigg\},
% \end{align*}
% it holds that 
% \begin{align*}
% \wb^{*\top}\nabla L_\cD^{p,r}(\wb)\le -\frac{R^2\|\tilde\wb\|_2\cdot\sin\theta'\cdot e^{-1}}{2\|\wb\|_2}
% \end{align*}
% \end{lemma}

\begin{proof}[Proof of Lemma \ref{lemma:upperbound_opt_grad_product_main}]
We will focus on the $2$-dimensional space spanned by the vectors $\wb$ and $\wb^*$ (or $\tilde \wb$ since these three vectors lie in the same $2$-dimensional space). Without loss of generality, we assume $\wb=\|\wb\|_2 \eb_2$.  We further define the set $G:=\{\xb:\text{sgn}(\wb^{*\top}\xb) = \text{sgn}(\tilde \wb^\top\xb)\}$, which is marked as the shaded region in Figure \ref{fig:diagram}. According to \eqref{eq:formula_inner_product}, the entire proof will be decomposed into three parts: upper bounding $I_1$, upper bounding $|I_2|$, and combining these two bounds to get the desired results. In the remaining proof we will the short-hand notations $\theta$ and $\theta'$ to denote $\theta(\wb)$ and $\theta'(\wb)$ respectively.

Without loss of generality, we consider the case that $\angle(\wb,\wb^*)\in(0, \pi/2)$ and the case of $\angle(\wb,\wb^*)\in(0, \pi/2)$ follows similarly by conducting the transformation $\wb\leftarrow -\wb$.

\paragraph{Upper bounding $I_1$.}
Note that within the set $G$, we have $\text{sgn}(\wb^{*\top}\xb)=\text{sgn}(\tilde \wb^\top\xb)$ thus $I_1$ can be decomposed as follows accordingly,
\begin{align}\label{eq:formula_I1_lp}
I_1 = \underbrace{\EE_{(\xb,y)\sim \cD}\big[g_{S}(\wb^*,\wb;\xb)\cdot | \tilde \wb^\top\xb |\cdot I_{G}(\xb))\big]}_{I_3}-\underbrace{\EE_{(\xb,y)\sim \cD}\big[g_{S}(\wb^*,\wb;\xb)\cdot | \tilde \wb^\top\xb |\cdot I_{G^c}(\xb))\big]}_{I_4}.
\end{align}
where 
\begin{align}\label{eq:formula_g_sc}
g_{S}(\wb^*,\wb;\xb) &= \ell'(\text{sgn}(\wb^{*\top}\xb)\cdot \wb^\top\xb-r) = -\frac{e^{-|\text{sgn}(\wb^{*\top}\xb)\cdot \wb^\top\xb-r|/\sigma}}{\sigma} = -\frac{e^{-|\text{sgn}(\wb^{*\top}\xb)\cdot l\|\wb\|_2\sin\phi-r|/\sigma}}{\sigma},
\end{align}
where the first equality is due to the assumption that $\|\wb\|_q=1$ and the last inequality is due to the fact that $\wb=\|\wb\|_2\eb_2$ so that $\wb^\top\xb = l\|\wb\|_2\sin\phi$.

Let $\bar\xb = (l\cos\phi,l\sin\phi)$ be the projection of $\xb$ onto the space spanned by $\wb$ and $\wb^*$. Then under Assumption \ref{assump:isotropic} we know that the distribution $\cD_x$ is isotropic, which implies that the probability density function of $\bar \xb$, denoted by $p(\bar\xb)$, can be written as $p(\bar\xb)=p(l)/(2\pi)$, where $p(l)$ is the probability density function with respect to the length of $\bar\xb$. Then based on the formula of $g_{S}(\wb^*, \wb;\xb)$ derived in \eqref{eq:formula_g_sc} and the fact that $\tilde \wb^\top\xb = \|\tilde \wb\|_2\sin(\theta'-\phi)$, we have the following regarding $I_3$,
\begin{align}\label{eq:formula_I3}
I_3 &= -\frac{1}{\sigma}\int_{0}^\infty \bigg(\int_{-\theta}^{\theta'} p(\bar \xb) l^2\|\tilde \wb\|_2|\sin(\theta'-\phi)| e^{-|l\|\wb\|_2\sin\phi-r|/\sigma}\dd l\dd \phi \notag\\
&\qquad + \int_{\pi-\theta}^{\pi+\theta'} p(\bar \xb) l^2\|\tilde \wb\|_2|\sin(\theta'-\phi)| e^{-|l\|\wb\|_2\sin\phi+r|/\sigma}\dd l\dd \phi\bigg)\notag\\
& = -\frac{\|\tilde \wb\|_2}{\pi\sigma}\int_{0}^\infty p(l)l^2 \dd l \underbrace{\int_{-\theta}^{\theta'} |\sin(\theta'-\phi)| e^{-|l\|\wb\|_2\sin\phi-r|/\sigma}\dd \phi}_{I_5},
\end{align}
where the equality holds since $\sin(\pi+\phi) = -\sin\phi$ and $|\sin(\theta-\phi)| = |\sin(\theta-\phi-\pi)|$ for all $\phi$.
% Note that $\sin(\theta'-\phi)\ge \sin(\theta')$ when $\phi\in(\theta'-\pi/2, 0)$, assuming $\theta+\theta'\le \pi/2$. Note that 
Note that when $\phi\le 0$, we have
\begin{align*}
\frac{\sin(\theta'-\phi)}{\cos\phi} = \frac{\sin\theta'\cos\phi - \sin\phi\cos\theta'}{\cos\phi}\ge \sin\theta',
\end{align*}
where the last inequality holds since $\sin\phi\cos\theta'\le 0$ for all $\phi\in(-\pi/2,0)$ and $\theta'\in(0,\pi/2)$. Therefore, we have the following lower bound on the term $I_5$,
\begin{align*}
I_5&\ge \int_{-\theta}^{0}\sin(\theta'-\phi) e^{-|l\|\wb\|_2\sin\phi-r|/\sigma}\dd \phi\notag\\
& \ge \sin\theta'\int_{-\theta}^{0} \cos\phi e^{(l\|\wb\|_2\sin\phi-r)/\sigma}\dd \phi\notag\\
&=\frac{\sin\theta'\cdot\sigma e^{-r/\sigma}}{l\|\wb\|_2}\cdot \big(1-e^{-l\|\wb\|_2\sin\theta/\sigma}\big),
\end{align*}
where in the second inequality we  use the fact that $l\|\wb\|_2\sin\phi\le 0$ for all $\phi\in(-\theta,0)$.
Plugging the above bound of $I_3$ into \eqref{eq:formula_I3} gives the following upper bound on $I_3$,
\begin{align}\label{eq:upperbound_I3}
I_3&\le -\frac{\|\tilde \wb\|_2\cdot\sin\theta'\cdot e^{-r/\sigma}}{\pi\|\wb\|_2}\cdot \int_{0}^\infty p(l)l (1-e^{-l\|\wb\|_2\sin\theta/\sigma}) \dd l\notag\\
&\le -\frac{2U'\|\tilde \wb\|_2\cdot\sin\theta'\cdot e^{-r/\sigma}}{\|\wb\|_2 UR}\cdot \int_{0}^R l^2 (1-e^{-R\|\wb\|_2\sin\theta/\sigma}) \dd l\notag\\
& = -\frac{2U'R^2\|\tilde \wb\|_2\cdot\sin\theta'\cdot e^{-r/\sigma}}{3\|\wb\|_2 }\cdot  (1-e^{-R\|\wb\|_2\sin\theta/\sigma}),
 \end{align}
where for the second inequality 
we use the fact that the function $f(x)=(1-e^{-ax})/x$ is strictly decreasing with respect to $x$ for any $a\ge 0$ so that we have $1- e^{-l\sin\theta/\sigma}\ge l(1- e^{-R\sin\theta/\sigma})/R$ for any $l\in(0, R]$, besides we also use the fact that $\cD_x$ is $(U',R)$-anti-anti-concentration so that $p(\bar\xb)=p(l)/(2\pi)\ge U'$. 
% Further note that $\sin\theta' = \sin\theta\cdot\|\wb^*\|_2/\|\tilde \wb\|_2$, we have
% \begin{align*}
% I_3&\le -\frac{R^2\|\wb^*\|_2\cdot\sin\theta\cdot e^{-r/\sigma}}{3\pi\|\wb\|_2 U}\cdot  (1-e^{-R\|\wb\|_2\sin\theta/\sigma}).
% \end{align*}
Then we will focus on lower bounding $I_4$. Let $\theta_{\min} = \min\{\theta, \theta', \pi/3\}$, we have
\begin{align}\label{eq:lowerbound_I4}
|I_4| &= \frac{\|\tilde \wb\|_2}{\pi\sigma}\int_{0}^\infty p(l)l^2 \dd l\int_{\theta'}^{\pi-\theta} |\sin(\phi-\theta')| e^{-|l\|\wb\|_2\sin\phi-r|/\sigma}\dd \phi\notag\\
&\le \frac{\|\tilde \wb\|_2}{\pi\sigma}\int_{0}^\infty p(l)l^2 \dd l\int_{\theta_{\min}}^{\pi-\theta_{\min}}  e^{-|l\|\wb\|_2\sin\phi-r|/\sigma}\dd \phi\notag\\
&= \frac{2\|\tilde \wb\|_2}{\pi\sigma}\int_{0}^\infty p(l)l^2 \dd l\underbrace{\int_{\theta_{\min}}^{\pi/2}  e^{-|l\|\wb\|_2\sin\phi-r|/\sigma}\dd \phi}_{I_6},
\end{align}
where and the inequality follows from the fact that $|\sin(\phi-\theta')|\le 1$ and $\sin(\phi) = \sin(\pi-\phi)$.
Note that $I_6$ can be further upper bounded as follows,
\begin{align*}
I_6& = \int_{\theta_{\min}}^{\pi/3}  e^{-|l\|\wb\|_2\sin\phi-r|/\sigma}\dd \phi + \int_{\pi/3}^{\pi/2}  e^{-|l\|\wb\|_2\sin\phi-r|/\sigma}\dd \phi\notag\\
&\le \underbrace{2\int_{\theta_{\min}}^{\pi/3} \cos\phi e^{-|l\|\wb\|_2\sin\phi-r|/\sigma}\dd \phi}_{I_7} + \underbrace{\int_{\pi/3}^{\pi/2}  e^{-|l\|\wb\|_2\sin\phi-r|/\sigma}\dd \phi}_{I_8},
\end{align*}
where the inequality holds due to the fact that $2\cos\phi\ge 1$ when $\phi\in[\theta_{\min},\pi/3]$. 
Regarding $I_7$, we will consider three cases: (1) $l\le r/(\sin(\pi/3)\|\wb\|_2)$, (2) $r/(\sin(\pi/3)\|\wb\|_2)\le l\le r/(\sin(\theta_{\min})\|\wb\|_2)$ and (3) $l\ge r/(\sin(\theta_{\min})\|\wb\|_2)$.
\begin{enumerate}
    \item Regarding the first case $l\le r/(\sin(\pi/3)\|\wb\|_2)$, it holds that
\begin{align*}
I_7\le 2\int_{\sin(\theta_{\min})}^{\sin(\pi/3)}  e^{(l\|\wb\|_2z-r)/\sigma}\dd \phi = \frac{2\sigma e^{-r/\sigma}}{l\|\wb\|_2}\cdot\big(e^{l\|\wb\|_2\sin(\pi/3)/\sigma}-e^{l\|\wb\|_2\sin(\theta_{\min})/\sigma}\big)\le \frac{2\sigma }{l\|\wb\|_2}.
\end{align*}
\item Regarding the second case $r/(\sin(\pi/3)\|\wb\|_2)\le l\le r/(\sin(\theta_{\min})\|\wb\|_2)$, we have
\begin{align*}
I_7&\le 2\int_{\sin(\theta_{\min})}^{r/(l\|\wb\|_2)}  e^{(l\|\wb\|_2z-r)/\sigma}\dd \phi + 2\int_{r/(l\|\wb\|_2)}^{1} e^{(r-l\|\wb\|_2z)/\sigma}\dd \phi \notag\\
& = \frac{2\sigma }{l\|\wb\|_2}\cdot\big(2-e^{(l\|\wb\|_2\sin(\btheta_{\min})-r)/\sigma}-e^{(r-l\|\wb\|_2)\sigma}\big)\notag\\
&\le \frac{4\sigma }{l\|\wb\|_2}.
\end{align*}
\item Regarding the third case $l\ge r/(\sin(\theta_{\min})\|\wb\|_2)$, we have
\begin{align*}
I_7&\le  2\int_{\sin(\theta_{\min})}^{1} e^{(r-l\|\wb\|_2z)/\sigma}\dd \phi \le  \frac{2\sigma e^{r/\sigma} }{l\|\wb\|_2}\cdot e^{-l\sin(\theta_{\min})\|\wb\|_2/\sigma}
\end{align*}
\end{enumerate}

\noindent For $I_8$, it is easy to see that
\begin{align*}
e^{-|l\|\wb\|_2\sin\phi-r|/\sigma}&\le \left\{
\begin{array}{ll}
   1  &  l\le r/(\sin(\pi/3)\|\wb\|_2)\\
    e^{-(l\|\wb\|_2\sin\phi-r)/\sigma}     &  l> r/(\sin(\pi/3)\|\wb\|_2),
\end{array}
\right.\notag\\
&\le \left\{
\begin{array}{ll}
   1  &  l\le r/(\sin(\pi/3)\|\wb\|_2)\\
    e^{-(l\|\wb\|_2\sin(\phi/2)-r)/\sigma}     &  l> r/(\sin(\pi/3)\|\wb\|_2),
\end{array}
\right.
\end{align*}
which implies that if $l\le r/(\sin(\pi/3)\|\wb\|_2)$,
\begin{align*}
I_8= \int_{\pi/3}^{\pi/2}  e^{-|l\|\wb\|_2\sin\phi-r|/\sigma}\dd \phi\le \pi/6
\end{align*}
and if $l\ge r/(\sin(\pi/3)\|\wb\|_2)$
\begin{align*}
 I_8&\le  \int_{\pi/3}^{\pi/2}  e^{-(l\|\wb\|_2\sin(\phi/2)-r)/\sigma}   \dd \phi \notag\\
 &\le 2\int_{\pi/3}^{\pi/2} \cos(\phi/2) e^{-(l\|\wb\|_2\sin(\phi/2)-r)/\sigma}   \dd \phi \notag\\
 &\le 4\int_{\sin(\pi/6)}^{\sin(\pi/4)} e^{-(l\|\wb\|_2z-r)/\sigma}\dd z\notag\\
&\le \frac{4\sigma e^{r/\sigma}}{l\|\wb\|_2 }\cdot e^{-l\|\wb\|_2\sin(\pi/6)/\sigma}   
\end{align*}

Combining the above results, define $\bar\theta_{\min}=\min\{\theta_{\min},\pi/6\}=\min\{\theta, \theta', \pi/6\}$, we have
the following bounds on $I_6$,
\begin{align*}
I_6\le \left\{
\begin{array}{ll}
  \frac{2\sigma}{l\|\wb\|_2}+\frac{\pi}{6}   &  l\le r/(\sin(\pi/3)\|\wb\|_2)\\
   \frac{8\sigma e^{r/\sigma}}{l\|\wb\|_2}  &  r/(\sin(\pi/3)\|\wb\|_2)\le l\le r/(\sin(\theta_{\min})\|\wb\|_2)\\
   \frac{6\sigma e^{r/\sigma} }{l\|\wb\|_2}\cdot e^{-l\sin(\bar\theta_{\min})\|\wb\|_2/\sigma}   & l\ge r/(\sin(\theta_{\min})\|\wb\|_2)
\end{array}
\right.
\end{align*}
Plugging this bound into \eqref{eq:lowerbound_I4}, we have
\begin{align*}
|I_4|
&\le \|\tilde \wb\|_2\cdot\bigg(\int_{0}^{r/(\sin(\pi/3)\|\wb\|_2)}\bigg(\frac{4l}{\pi\|\wb\|_2}+\frac{l^2}{3\sigma}\bigg)p(l)\dd l + \int_{r/(\sin(\pi/3)\|\wb\|_2)}^{r/(\sin(\theta_{\min})\|\wb\|_2)}\frac{16l e^{r/\sigma}}{\pi\|\wb\|_2}p(l)\dd l\notag\\
&\qquad + \int_{r/(\sin(\theta_{\min})\|\wb\|_2)}^{\infty}\frac{12l e^{r/\sigma}}{\pi\|\wb\|_2}e^{-l\sin(\bar\theta_{\min})\|\wb\|_2/\sigma}p(l)\dd l\bigg)\notag\\
&\le \|\tilde \wb\|_2\cdot\bigg(\int_{0}^{r/(\sin(\pi/3)\|\wb\|_2)}\bigg(\frac{4l}{\pi\|\wb\|_2}+\frac{l^2}{3\sigma}\bigg)p(l)\dd l + \int_{0}^{r/(\sin(\theta_{\min})\|\wb\|_2)}\frac{16l e^{r/\sigma}}{\pi\|\wb\|_2}p(l)\dd l\notag\\
&\qquad + \int_{0}^{\infty}\frac{12l e^{r/\sigma}}{\pi\|\wb\|_2}e^{-l\sin(\bar\theta_{\min})\|\wb\|_2/\sigma}p(l)\dd l\bigg).
\end{align*}
Note that $\int_{0}^\infty xe^{-a x}\dd x = 1/a^2$ for any $a\ge 0$. Additionally, by Assumption \ref{assump:isotropic} we have $p(l)\le 2\pi U$ since $\cD_x$ is $U$-anti-concentration. Then we can get that
\begin{align*}
|I_4|&\le U\|\tilde \wb\|_2\cdot\bigg(\frac{8r^2}{\sin^2(\pi/3)\|\wb\|_2^3}+\frac{2\pi r^3}{3\sin^3(\pi/3)\sigma\|\wb\|_2^3} +\frac{32r^2e^{r/\sigma}}{\sin^2(\theta_{\min})\|\wb\|_2^3}+\frac{24\sigma^2e^{r/\sigma}}{\sin^2(\bar\theta_{\min})\|\wb\|_2^3}\bigg)\notag\\
&\le U\|\tilde \wb\|_2e^{r/\sigma}\cdot\frac{4r^3/\sigma+ 40r^2+24\sigma^2}{\sin^2(\bar\theta_{\min})\|\wb\|_2^3}.
\end{align*}
where in the last inequality we use the fact that $\theta_{\min}\le \bar\theta_{\min}$.
Plugging the above inequality and \eqref{eq:upperbound_I3} into \eqref{eq:formula_I1_lp}, we obtain
\begin{align*}
I_1 = I_3-I_4\le -\frac{2U'R^2\|\tilde \wb\|_2\cdot\sin\theta'\cdot e^{-r/\sigma}}{3\|\wb\|_2}\cdot  (1-e^{-R\|\wb\|_2\sin\theta/\sigma}) + U\|\tilde \wb\|_2e^{r/\sigma}\cdot\frac{4r^3/\sigma+ 40r^2+24\sigma^2}{\sin^2(\bar\theta_{\min})\|\wb\|_2^3}.
\end{align*}
Then set $\sigma = r$, we have
\begin{align*}
I_1&\le  -\frac{2U'R^2\|\tilde \wb\|_2\cdot\sin\theta'\cdot e^{-1}}{3\|\wb\|_2 }\cdot  (1-e^{-R\|\wb\|_2\sin\theta/r}) + \frac{68er^2U\|\tilde \wb\|_2}{\sin^2(\bar\theta_{\min})\|\wb\|_2^3}.
% &\le  -\frac{R^2\|\tilde \wb\|_2\cdot\sin\theta'\cdot e^{-1}}{6\pi\|\wb\|_2 U} + \frac{34er^2U\|\tilde \wb\|_2}{\pi\sin^2(\bar\theta_{\min})\|\wb\|_2^3},
\end{align*}
% where the second inequality follows from the fact that $\sin\theta\ge r/(R\|\wb\|_2)$. 
Then we have if
\begin{align}\label{eq:condition_theta_min}
\sin(\bar\theta_{\min})\ge \max\bigg\{\frac{4r}{R\|\wb\|_2},\frac{100 r \sqrt{U/U'} }{R\|\wb\|_2\sin^{1/2}(\theta') }\bigg\},
\end{align}
it holds that 
\begin{align*}
I_1\le -\frac{3U'R^2\|\tilde \wb\|_2\cdot\sin\theta'\cdot e^{-1}}{5\|\wb\|_2}.
\end{align*}
Moreover, note that $\bar\theta_{\min} = \min\{\theta,\theta',\pi/6\}$, thus if the perturbation level satisfies
\begin{align*}
r\le \min\bigg\{\frac{R\|\wb\|_2}{8}, \frac{R\|\wb\|_2\sin^{1/2}(\theta')}{200 U},\frac{R\|\wb\|_2\sin(\theta')}{4}, \frac{R\|\wb\|_2\sin^{3/2}(\theta')}{100 U}\bigg\} = O\big(R\|\wb\|_2\sin^{3/2}(\theta')\big),
\end{align*}
the condition \eqref{eq:condition_theta_min} is equivalent to 
\begin{align*}
\sin(\theta)\ge \max\bigg\{\frac{4r}{R\|\wb\|_2},\frac{100  r\sqrt{U/U'}}{R\|\wb\|_2\sin^{1/2}(\theta') }\bigg\},
\end{align*}
\paragraph{Upper bounding $|I_2|$.}
In the sequel we will focus on bounding the term $I_2$. By Cauchy-Sharwtz inequality, we have
\begin{align*}
|I_2| \le \sqrt{\underbrace{\EE_{\xb}\big[\big(g_{S}(\wb^*,\wb;\xb)+g_{S^c}(\wb^*,\wb;\xb)\big)^2\cdot (\tilde \wb^\top\xb)^2\big]}_{I_9}}\cdot\sqrt{\EE_{\xb}[\ind(S)]}.
\end{align*}
Similarly, let $\bar\xb=(l\cos\phi,l\sin\phi)$ be the projection of $\xb$ onto the 2-dimensional space spanned by $\wb^*$ and $\wb$, we have $\tilde \wb^\top\xb\le l\|\tilde \wb\|_2$. This implies that 
\begin{align*}
I_9&\le \frac{\|\tilde \wb\|_2^2}{2\pi\sigma^2}\int_{0}^\infty p(l)l^3 \dd l\int_{-\pi}^{\pi} \Big(e^{-|l\|\wb\|_2\sin\phi +r|/\sigma}+e^{-|l\|\wb\|_2\sin\phi -r|/\sigma}\Big)^2\dd \phi\notag\\
&\le \frac{\|\tilde \wb\|_2^2}{\pi\sigma^2}\int_{0}^\infty p(l)l^3 \dd l\int_{-\pi}^{\pi} e^{-2|l\|\wb\|_2\sin\phi +r|/\sigma}+e^{-2|l\|\wb\|_2\sin\phi -r|/\sigma}\dd \phi\notag\\
& = \frac{4\|\tilde \wb\|_2^2}{\pi\sigma^2}\int_{0}^\infty p(l)l^3 \dd l\underbrace{\int_{-\pi/2}^{\pi/2} e^{-2|l\|\wb\|_2\sin\phi +r|/\sigma}\dd \phi}_{I_{10}}
\end{align*}
where the second inequality is based on the Young's inequality.
Then we have
\begin{align}\label{eq:upperbound_I10_1}
I_{10} &= \int_{-\pi/2}^{-\pi/3} e^{-2|l\|\wb\|_2\sin\phi +r|/\sigma}\dd \phi+\int_{-\pi/3}^{\pi/3} e^{-2|l\|\wb\|_2\sin\phi +r|/\sigma}\dd \phi+\int_{\pi/3}^{\pi/2} e^{-2|l\|\wb\|_2\sin\phi +r|/\sigma}\dd \phi\notag\\
&\le 2\int_{\pi/3}^{\pi/2} e^{-2|l\|\wb\|_2\sin\phi -r|/\sigma}\dd \phi+\int_{-\pi/3}^{\pi/3} e^{-2|l\|\wb\|_2\sin\phi +r|/\sigma}\dd \phi\notag\\
&\le 2\int_{\pi/3}^{\pi/2} e^{-2|l\|\wb\|_2\sin\phi -r|/\sigma}\dd \phi+2\int_{-\pi/3}^{\pi/3} e^{-2|l\|\wb\|_2\sin\phi +r|/\sigma}\cos\phi\dd \phi,
\end{align}
where the inequality follows from the fact that $|l\|\wb\|_2\sin(-\phi) +r|\le|l\|\wb\|_2\sin\phi +r| $ holds for any $\phi\in[0,\pi/2]$, and the second inequality holds since $2\cos\phi\ge 1$ for any $\phi\in[-\pi/3,\pi/3]$.
Note that the first term on the R.H.S. of \eqref{eq:upperbound_I10_1} is similar to $I_8$, thus we have
\begin{align}\label{eq:bound_firstterm_I10}
\int_{\pi/3}^{\pi/2} e^{-2|l\|\wb\|_2\sin\phi -r|/\sigma}\dd \phi \le \left\{
\begin{array}{ll}
    \pi/6& l\le r/(\sin(\pi/3)\|\wb\|_2) \\
    \frac{2\sigma e^{2r/\sigma}}{l\|\wb\|_2}&l\ge r/(\sin(\pi/3)\|\wb\|_2) 
\end{array}
\right.
\end{align}
Regarding the second term on the R.H.S. of \eqref{eq:upperbound_I10_1}, we have
\begin{align*}
\int_{-\pi/3}^{\pi/3} e^{-2|l\|\wb\|_2\sin\phi +r|/\sigma}\cos\phi\dd \phi&\le \int_{-1}^{1} e^{-2|l\|\wb\|_2z +r|/\sigma}\dd z.
\end{align*}
If $l\|\wb\|_2\le r$, it holds that
\begin{align}\label{eq:bound_I4_l2_case1}
\int_{-1}^1 e^{-2|l\|\wb\|_2z+r|/\sigma}\dd z = \int_{-1}^1 e^{-2(l\|\wb\|_2z+r)/\sigma} \dd z = \frac{\sigma}{2l\|\wb\|_2}\Big(e^{-2(r-l\|\wb\|_2)/\sigma} - e^{-2(r+l\|\wb\|_2)/\sigma}\Big)\le \frac{\sigma}{2l\|\wb\|_2}. 
\end{align}
If $l\|\wb\|_2>r$ we have
\begin{align}\label{eq:bound_I4_l2_case2}
\int_{-1}^1 e^{-2|l\|\wb\|_2z+r|/\sigma}\dd z&=\int_{-1}^{-r/(l\|\wb\|_2)} e^{2(l\|\wb\|_2z+r)/\sigma}\dd z + \int_{-r/(l\|\wb\|_2)}^1 e^{-2(l\|\wb\|_2z+r)/\sigma}\dd z \notag\\
& = \frac{\sigma}{2l\|\wb\|_2}\Big(2-e^{-2(l\|\wb\|-r)/\sigma}-e^{-2(l\|\wb\|+r)/\sigma}\Big)\notag\\
&\le \frac{\sigma}{l\|\wb\|_2}.
\end{align}
Combining the above results we can immediately get $\int_{-\pi/3}^{\pi/3} e^{-2|l\|\wb\|_2\sin\phi +r|/\sigma}\cos\phi\dd \phi\le \sigma/(l\|\wb\|_2)$, which yields the following upper bound on $I_{10}$ by combining with \eqref{eq:bound_firstterm_I10},
\begin{align*}
I_{10}\le \left\{
\begin{array}{ll}
    \pi/3+\frac{2\sigma}{l\|\wb\|_2}& l\le r/(\sin(\pi/3)\|\wb\|_2) \\
    \frac{6\sigma e^{2r/\sigma}}{l\|\wb\|_2}&l\ge r/(\sin(\pi/3)\|\wb\|_2) 
\end{array}
\right.
\end{align*}
Set $\sigma = r$, we have
\begin{align*}
I_{10}\le \frac{\sigma}{l\|\wb\|_2}\cdot\bigg(\frac{\pi}{3\sin(\pi/3)}+2\bigg)\le \frac{6\sigma e^{2r/\sigma}}{l\|\wb\|_2}.
\end{align*}
This further implies the following upper bound on $I_9$,
\begin{align*}
I_9\le \frac{4\|\tilde\wb\|_2^2}{\pi\sigma^2} \int_{0}^\infty \frac{6\sigma e^{2r/\sigma} p(l)l^2}{\|\wb\|_2}\dd l= \frac{48e^2\|\tilde\wb\|_2^2}{\pi\sigma\|\wb\|_2}.
\end{align*}
where the equality is due to the fact that the covariance matrix of $\xb$ is identity.
Note that $\EE_{\xb}[\ind(S)] = \err_{\cD}(\wb^*)$. Then it holds that
\begin{align*}
I_2\le \frac{7e\|\tilde\wb\|_2}{\sqrt{\pi}\|\wb\|_2^{1/2}}\cdot\sqrt{\frac{\err_{\cD}(\wb^*)}{\sigma}}.
\end{align*}

\paragraph{Combining the upper bound of $I_1$ and lower bound of $I_2$.}
Consequently, we have if the angle $\theta$ satisfies
\begin{align*}
\sin\theta\ge \max\bigg\{\frac{4r}{R\|\wb\|_2},\frac{100  U r}{R\|\wb\|_2\sin^{1/2}(\theta') }\bigg\},
\end{align*}
$\wb^{*\top}\nabla L_{\cD}^{p,r}(\wb)$ can be lower bounded by
\begin{align*}
\wb^{*\top}\nabla L_{\cD}^{p,r}(\wb) = I_1 + I_2 \le I_1 + |I_2| = -\frac{3U'R^2\|\tilde\wb\|_2\cdot\sin\theta'\cdot e^{-1}}{5\|\wb\|_2 } + \frac{7 e\|\tilde \wb\|_2}{\sqrt{\pi}\|\wb\|_2^{1/2}}\cdot\sqrt{\frac{\err_\cD(\wb^*)}{\sigma}},
\end{align*}
which further leads  to
\begin{align*}
\wb^{*\top}\nabla L_{\cD}^{p,r}(\wb)\le -\frac{U'R^2\|\tilde\wb\|_2\cdot\sin\theta'\cdot e^{-1}}{2\|\wb\|_2 }
\end{align*}
if we have
\begin{align*}
\err_{\cD}(\wb^*)\le \frac{U'^2\sigma\sin^2\theta'}{2^{14}\|\wb\|_2R^4} = \frac{U'^2r\sin^2\theta'}{2^{14}\|\wb\|_2R^4}.
\end{align*}
This completes the proof
\end{proof}

\subsection{Proof for Lemma \ref{lemma:convergence_guarantee_psat}}
\label{sec:convergence_guarantee_psat}

We will decompose the entire into two parts: (1) proving a lower bound of the angle $\theta'(\wb)$; and (2) establishing a general convergence guarantee for Algorithm \ref{alg:projected_advtraining}.

In terms of the first part, we summarize the lower bound of $\theta'(\wb)$ in the following lemma.
\begin{figure}
\centering
\includegraphics[width=0.3\textwidth]{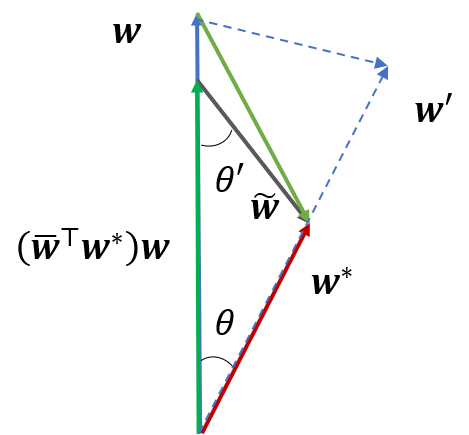}
\caption{Illustration of $\theta$ and $\theta'$. \label{fig:diagram_angle}}
\end{figure}
\begin{lemma}\label{lemma:lowerbound_theta'}
Let $\theta = \angle(\wb,\wb^*)$ and $\theta' = \angle(-\wb, \tilde\wb)$, we have $\sin\theta'\ge \sqrt{2}/2$ if $\|\wb\|_2< \|\wb^*\|_2$ and $\sin\theta'\ge\frac{1}{10\|\wb\|_2 d^{1/2-1/p}}$ if $\|\wb\|_2\ge \|\wb^*\|_2$.
% Moreover, we have $\|\wb\|_2\ge \frac{\|\wb^*\|_2}{1+\|\wb^*\|_2\theta d^{1/2-1/p}}$.
\end{lemma}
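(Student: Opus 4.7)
The plan is to reduce the bound to a 2D geometric calculation in the plane spanned by $\wb$ and $\wb^*$ and then use the constraint $\|\wb\|_q = 1$ to control $\|\tilde\wb\|_2$. Since $\tilde\wb = \wb^* - c\wb$ with $c = \bar\wb^\top\wb^*$, both $-\wb$ and $\tilde\wb$ lie in $\mathrm{span}(\wb,\wb^*)$, and the basis change from $\{\wb,\wb^*\}$ to $\{-\wb,\tilde\wb\}$ has determinant $\pm 1$. Lagrange's identity for the Gram determinant therefore gives
\[
\|\wb\|_2\,\|\tilde\wb\|_2\,\sin\theta' \;=\; \|\wb\|_2\,\|\wb^*\|_2\,\sin\theta,
\]
so $\sin\theta' = \|\wb^*\|_2\sin\theta/\|\tilde\wb\|_2$, and the task reduces to an upper bound on $\|\tilde\wb\|_2$.

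The key algebraic fact is that $\tilde\wb$ is orthogonal to $\bar\wb$: using $\|\wb\|_q = 1$ we have $\bar\wb^\top\wb = \sum_j |w_j|^q = 1$, and so $\bar\wb^\top\tilde\wb = c - c = 0$. To exploit this, decompose $\wb^* = \alpha\wb + \beta\wb_\perp$ with $\wb_\perp \perp \wb$ and $\|\wb_\perp\|_2 = \|\wb\|_2$, so that $|\beta|\|\wb\|_2 = \|\wb^*\|_2\sin\theta$. Introducing $\mathbf{u} := \|\wb\|_2^2\bar\wb - \wb$, which satisfies $\mathbf{u}^\top\wb = 0$ (again by $\bar\wb^\top\wb = 1$), a direct manipulation of $\|\tilde\wb\|_2^2 = \|\wb^*\|_2^2\sin^2\theta + (c\|\wb\|_2 - \|\wb^*\|_2\cos\theta)^2$ together with the identity $(c\|\wb\|_2 - \|\wb^*\|_2\cos\theta)\|\wb\|_2 = \mathbf{u}^\top\wb^* = \beta\,\mathbf{u}^\top\wb_\perp$ yields
\[
\sin^2\theta' \;=\; \frac{\|\wb\|_2^4}{\|\wb\|_2^4 + (\mathbf{u}^\top\wb_\perp)^2}.
\]

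For the case $\|\wb\|_2 \ge \|\wb^*\|_2$, I apply Cauchy--Schwarz to get $(\mathbf{u}^\top\wb_\perp)^2 \le \|\mathbf{u}\|_2^2\|\wb\|_2^2$ and expand $\|\mathbf{u}\|_2^2 = \|\wb\|_2^2(\|\wb\|_2^2\|\bar\wb\|_2^2 - 1)$, which gives $\sin^2\theta' \ge 1/(\|\wb\|_2^2\|\bar\wb\|_2^2)$. Using H\"older's inequality on $\sum_j |w_j|^{2(q-1)}$ with $q \le 2$ and $\|\wb\|_q = 1$ yields $\|\bar\wb\|_2 \le d^{(2-q)/(2q)} = d^{1/2-1/p}$, and hence $\sin\theta' \ge 1/(\|\wb\|_2\,d^{1/2-1/p})$, which comfortably implies the stated bound in the second case (with plenty of slack to absorb the factor $10$).

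The main obstacle is the regime $\|\wb\|_2 < \|\wb^*\|_2$, where the Cauchy--Schwarz step above is too loose (for $p=\infty$ it can degrade to an essentially $d^{-1/2}$-type estimate). Here I would instead exploit $|c| \le \|\bar\wb\|_p\|\wb^*\|_q = 1$ (noting that $\|\bar\wb\|_p = \|\wb\|_q^{q-1} = 1$) to bound $\|\tilde\wb\|_2 \le \|\wb^*\|_2 + \|\wb\|_2 < 2\|\wb^*\|_2$; the delicate point is that this direct bound only gives $\sin\theta' \ge \sin\theta/2$ and must be sharpened using the orthogonality $\bar\wb^\top\tilde\wb = 0$ together with the size hypothesis $\|\wb\|_2 < \|\wb^*\|_2$ to show that $\|\tilde\wb\|_2$ and $\|\wb^*\|_2\sin\theta$ are of comparable order, which is what ultimately allows the $\theta$-free bound $\sin\theta' \ge \sqrt 2/2$.
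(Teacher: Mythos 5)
Your algebraic setup is correct and cleaner than the paper's: the Gram-determinant reduction $\sin\theta' = \|\wb^*\|_2\sin\theta/\|\tilde\wb\|_2$ is exactly the Sine-rule identity the paper invokes, and by noting $\mathbf{u}^\top\wb_\perp = \|\wb\|_2^2\,\bar\wb^\top\wb_\perp$ (since $\mathbf{u}\perp\wb$ and $\wb_\perp\perp\wb$) your key identity simplifies to $\sin^2\theta' = 1/\big(1+(\bar\wb^\top\wb_\perp)^2\big)$. Applying Cauchy--Schwarz and H\"older from there is a different, more economical route to the second case than the paper's, which instead introduces the auxiliary vector $\wb' = \|\wb\|_2\wb^*/\|\wb^*\|_2$ and chases a chain of $\ell_q$ triangle inequalities; your route also sharpens the paper's constant from $1/\big(10\|\wb\|_2 d^{1/2-1/p}\big)$ to $1/\big(\|\wb\|_2 d^{1/2-1/p}\big)$.

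The first case, $\|\wb\|_2 < \|\wb^*\|_2$, is the genuine gap, and the route you sketch for closing it cannot succeed. In your identity $\sin^2\theta' = 1/\big(1+(\bar\wb^\top\wb_\perp)^2\big)$ the orthogonality $\bar\wb^\top\tilde\wb = 0$ has already been fully exploited, and the Cauchy--Schwarz extremum $|\bar\wb^\top\wb_\perp| = \sqrt{\|\bar\wb\|_2^2\|\wb\|_2^2-1}$ is attainable on the side $\|\wb^*\|_2 > \|\wb\|_2$ of the $\ell_q$ sphere, so the size hypothesis buys nothing extra. Concretely, take $q=1$, $d=3$, $\wb = (1-2\eps,\eps,\eps)$ and $\wb^* = (1-2\eps',\eps',\eps')$ with $0<\eps'<\eps$ small: then $\|\wb\|_2 < \|\wb^*\|_2$, $\bar\wb^\top\wb^* = 1$, $\tilde\wb = \wb^*-\wb = (\eps-\eps')(2,-1,-1)$, and a direct computation gives $\sin^2\theta' = 1 - (\wb^\top\tilde\wb)^2/\big(\|\wb\|_2^2\|\tilde\wb\|_2^2\big) = 1/\big(3\|\wb\|_2^2\big)$, which tends to $1/3 < 1/2$ as $\eps\to 0$. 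So the $\sqrt 2/2$ target is out of reach in that regime, and your universal H\"older bound $\sin\theta'\ge 1/(\|\wb\|_2\|\bar\wb\|_2)$ is tight. For comparison, the paper's argument for the first case lower-bounds $\theta'$ by the triangle angle at vertex $\wb$ (the larger of the two non-$\theta$ angles, hence at least $(\pi-\theta)/2\ge\pi/4$) but never establishes the complementary bound $\theta'\le 3\pi/4$, which is precisely what fails in the example above.
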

\begin{proof}[Proof of Lemma \ref{lemma:lowerbound_theta'}]

Without loss of generality we consider the case that $\theta\in(0,\pi/2)$, if $\theta\in(\pi/2,\pi)$  we can simply apply $\wb\rightarrow -\wb$. In particular, let $\vb = -\wb$ and $\tilde\vb=\wb^*-(\bar\vb\wb^*)\vb$, it is easy to see that
\begin{align*}
-\la\wb, \wb^*-(\bar\wb^\top\wb^*)\wb\ra = \la\vb, \wb^*-(\bar\vb^\top\wb^*)\vb\ra,
\end{align*}
implying that $\sin(\angle(-\vb,\tilde\vb))=\sin(\angle(-\wb,\wb^*))$.
Recall that $\tilde \wb = \wb^* - (\bar\wb^\top\wb^*)\wb$. First note that if we have $\bar\wb^\top\wb^*\le 0$, it is easy to see that $-\wb^\top\tilde\wb = -\wb^{*\top}\wb +  (\bar\wb^\top\wb^*)\|\wb\|_2^2\le 0$ since we have $\theta<\pi/2$, which implies that $\theta'\ge \pi/2$. In the sequel we will focus on the case of $\bar\wb^\top\wb^*\ge 0$. Note that we have $\|\wb\|_q = \|\wb^*\|_q = 1$, which yields $\bar\wb^\top\wb^*\le 1$. 
Therefore, as shown in Figure \ref{fig:diagram_angle}, it is clear that
\begin{align*}
\theta' = \angle(-\wb, \tilde\wb) = \angle (-\wb,\wb^*-\wb) + \angle(\wb^*-\wb, \tilde\wb) \ge \angle (-\wb,\wb^*-\wb):=\tilde\theta'.
\end{align*}
By Sine formula, it is easy to see if $\|\wb\|_2\le \|\wb^*\|_2$, we have $\angle (-\wb,\wb^*-\wb)\ge \angle (-\wb^*,\wb^*-\wb)$ and thus $\angle (-\wb,\wb^*-\wb)\ge (\pi-\theta)/2\ge \pi/4$, which completes the proof of the first case.

If $\|\wb\|_2\le \|\wb^*\|_2$, we define $\wb' = \|\wb\|_2\wb^*/\|\wb^*\|_2$, as displayed in Figure \ref{fig:diagram_angle}, which clearly satisfies $\|\wb'\|_2=\|\wb\|_2\ge \|\wb^*\|_2$. Then we will upper bound the length of $\tilde\wb' = \wb^*-\wb$. 
By triangle inequality, it holds that \begin{align}\label{eq:bound_difference}
\|\tilde\wb'\|_2  = \|\wb - \wb^*\|_2 = \|\wb-\wb'+\wb'-\wb^*\|_2\le \|\wb-\wb'\|_2 + \|\wb'-\wb^*\|_2.
\end{align}
Note that $\wb$ and $\wb'$ have the same length and $\angle (\wb, \wb')=\theta$, we have
\begin{align}\label{eq:bound_difference1}
\|\wb-\wb'\|_2\le \|\wb\|_2\theta.
\end{align}
Additionally, note that $\wb'$ and $\wb^*$ are parallel and $\|\wb^*\|_q=1$, we have
\begin{align*}
\wb^* = \wb'/\|\wb'\|_q,
\end{align*}
and
\begin{align}\label{eq:bound_difference2}
\frac{\|\wb'-\wb^*\|_2}{\|\wb'\|_2} = \frac{\|\wb'-\wb^*\|_q}{\|\wb'\|_q} =(1 - 1/\|\wb'\|_q).
\end{align}
% By triangle inequality and the fact that $\|\wb\|_q = 1$, it holds that
% \begin{align*}
% \|\wb+\wb'-\wb\|_q\le \|\wb\|_q+\|\wb'-\wb\|_q\le 1+\|\wb'-\wb\|_2d^{1/q-1/2}\le 1+\|\wb\|_2\theta d^{1/q-1/2}.
% \end{align*}
Plugging \eqref{eq:bound_difference1} and \eqref{eq:bound_difference2} into \eqref{eq:bound_difference} yields
\begin{align*}
\|\tilde\wb'\|_2&\le \|\wb-\wb'\|_2+\|\wb'-\wb^*\|_2\notag\\
&\|\wb\|_2 \theta+ \|\wb'\|_2\cdot\bigg(1-\frac{1}{\|\wb'\|_q}\bigg)\notag\\
&\le \|\wb\|_2\cdot\bigg(\theta + 1 - \frac{1}{\|\wb'\|_q}\bigg),
% & = \|\wb\|_2\cdot\bigg(\theta + \frac{\|\wb\|_2\theta d^{1/q-1/2}}{1+\|\wb\|_2\theta d^{1/q-1/2}}\bigg).
\end{align*}
where in the last inequality we use the fact that $\|\wb'\|_2 = \|\wb\|_2$. 
Then note that $\|\wb^*\|_2 = \|\wb\|_2/\|\wb'\|_q$, by Sine formula, we have
\begin{align*}
\sin\theta' \ge \tilde\theta' = \frac{\sin\theta\|\wb^*\|_2}{\|\tilde\wb'\|_2} \ge \frac{\sin\theta /\|\wb\|_q}{\theta+1-1/\|\wb'\|_q} = \frac{\sin\theta}{(\theta+1)\|\wb'\|_q-1}.
\end{align*}
Note that $\|\wb\|_q = 1$, by triangle inequality we have
\begin{align*}
\|\wb'\|_q \le \|\wb\|_q +\|\wb - \wb'\|_q \le 1 + \|\wb-\wb'\|_2d^{1/q-1/2} \le 1 + \|\wb\|_2\theta d^{1/q-1/2}.
\end{align*}
This further implies that 
\begin{align*}
(\theta+1)\|\wb'\|_q-1\le \theta (1 + \|\wb\|_2 d^{1/q-1/2}) + \|\wb\|_2 d^{1/q-1/2}\theta^2\le (1+\pi/2)\theta (1 + \|\wb\|_2 d^{1/q-1/2}),
\end{align*}
where the last inequality holds since $\theta\le \pi/2\le 2$. Then note that $\sin\theta/\theta\ge 2/\pi$ for any $\theta\in(0,\pi/2)$, we have
\begin{align*}
\sin\theta'\ge \frac{\sin\theta}{(\theta+1)\|\wb'\|_q-1}\ge \frac{1}{5(1+\|\wb\|_2 d^{1/q-1/2})}\ge \frac{1}{10\|\wb\|_2 d^{1/q-1/2}},
\end{align*}
where the last inequality holds since $\|\wb\|_2\ge d^{1/2-1/q}$. Note that this bound also holds for other cases, we are able to complete the proof.
\end{proof}

Then we provide the following lemma that gives the convergence guarantee of Algorithm \ref{alg:projected_advtraining} if for an arbitrary set $\cG\in\cS_q^{d-1}$ we have sufficiently negative $\wb^{*\top}\nabla L_\cD^{p,r}(\wb)$ for any $\wb\in\cG$.
\begin{lemma}\label{lemma:convergence_pssat}
Let $\cG$ be a non-empty subset of $\cS_q^{d-1}$. Assume $r\le 1$ and $\wb^{*\top} \nabla L_\cD^{p,r}(\wb)\le -\epsilon$ for any $\wb\in\cG$, then set $\eta = \epsilon \delta \sigma^2d^{-1}/32$ and $K = 64d\|\wb_1 - \wb^*\|_2^2\delta^{-2}\sigma^{-2}\epsilon^{-2}$, with probability at least $1-\delta$, running Algorithm $\mathsf{PSAT}(p, r)$ for $K$ iterations can find a model $\wb_{k^*}$ with $k^*\le K$ such that $\wb_{k^*}\in\cG^c$.
\end{lemma}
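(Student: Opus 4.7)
The strategy is a standard stochastic-gradient-descent-with-projection analysis on the distance to $\wb^*$, combined with an exit-time argument. Let $\tau := \inf\{k\ge 1 : \wb_k \notin \cG\}$ denote the first iteration at which $\mathsf{PSAT}$ leaves $\cG$. It suffices to show that $\PP(\tau > K) \le \delta$ for the stated choices of $\eta$ and $K$.

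\textbf{Step 1 (one-step descent).} For each $k<\tau$, let $\bg_k = \ell'(y_k h(\wb_k, \xb_k+\deb_k))\, y_k \,\nabla_\wb h(\wb_k, \xb_k+\deb_k)$, so that $\EE[\bg_k\mid \cF_{k-1}] = \nabla L_\cD^{p,r}(\wb_k)$. I would first argue that the projection onto $\cS_q^{d-1}$ is non-expansive toward the fixed sphere point $\wb^*$, giving
\begin{align*}
\|\wb_{k+1}-\wb^*\|_2^2 \;\le\; \|\hat\wb_{k+1}-\wb^*\|_2^2 \;=\; \|\wb_k-\wb^*\|_2^2 - 2\eta\sip{\bg_k}{\wb_k-\wb^*} + \eta^2\|\bg_k\|_2^2.
\end{align*}
For $q=2$ this is immediate: the identity $\wb^\top\nabla_\wb h(\wb,\cdot)=0$ established before Lemma~\ref{lemma:upperbound_opt_grad_product_main} gives $\wb_k^\top\bg_k=0$, hence $\|\hat\wb_{k+1}\|_2^2 = 1 + \eta^2\|\bg_k\|_2^2 \ge 1$, and Euclidean radial normalization onto the unit sphere contracts distances to any sphere point. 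For general $q\in[1,2]$ the analogous statement requires a short geometric verification.

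\textbf{Step 2 (expected drift).} Taking conditional expectations and applying the two structural facts $\wb_k^\top \nabla L_\cD^{p,r}(\wb_k)=0$ and, for $\wb_k\in\cG$, $\wb^{*\top}\nabla L_\cD^{p,r}(\wb_k)\le -\epsilon$ gives
\begin{align*}
\EE\bigl[\|\wb_{k+1}-\wb^*\|_2^2 \,\big|\, \cF_{k-1}\bigr] \;\le\; \|\wb_k-\wb^*\|_2^2 - 2\eta\epsilon + \eta^2\,\EE\bigl[\|\bg_k\|_2^2\,\big|\,\cF_{k-1}\bigr].
\end{align*}

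\textbf{Step 3 (second moment of the stochastic gradient).} Using $|\ell'|\le 1/\sigma$ uniformly, the explicit formula \eqref{eq:grad.h.formula.main} together with $\|\wb_k\|_q = 1$ and $r\le 1$, plus the isotropy of $\cD_x$ (so $\EE\|\xb\|_2^2 = d$), I would show $\EE\|\bg_k\|_2^2 \le C\,d/\sigma^2$ for a universal constant $C$.

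\textbf{Step 4 (optional stopping).} Combining Steps 2 and 3, the choice $\eta = \epsilon\delta\sigma^2/(32d)$ makes the per-step drift on $\{k<\tau\}$ at most $-2\eta\epsilon + C\eta^2 d/\sigma^2 \le -\eta\epsilon$ (since $\delta\le 1$ and $C$ is a moderate constant). Hence the stopped process $Y_k := \|\wb_{k\wedge\tau}-\wb^*\|_2^2 + \eta\epsilon\,(k\wedge\tau)$ is a supermartingale. Taking expectations at time $K+1$ and using $\|\wb_{(K+1)\wedge\tau}-\wb^*\|_2^2\ge 0$ yields $\eta\epsilon\,\EE[(K+1)\wedge\tau] \le \|\wb_1-\wb^*\|_2^2$, so by Markov's inequality
\begin{align*}
\PP(\tau > K) \;=\; \PP\bigl((K+1)\wedge\tau = K+1\bigr) \;\le\; \frac{\|\wb_1-\wb^*\|_2^2}{(K+1)\eta\epsilon}.
\end{align*}
Plugging in $K = 64d\|\wb_1-\wb^*\|_2^2\delta^{-2}\sigma^{-2}\epsilon^{-2}$ gives $K\eta\epsilon = 2\|\wb_1-\wb^*\|_2^2/\delta$, so the right-hand side is at most $\delta/2 \le \delta$, completing the proof.

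\textbf{Main obstacle.} The principal technical point is the non-expansiveness of projection onto the non-convex $\ell_q$ unit sphere in Step~1. For $q=2$ this is automatic because $\wb_k^\top\bg_k=0$ forces $\|\hat\wb_{k+1}\|_2\ge 1$, and radial projection then contracts distances to sphere points. For $q\ne 2$ the analogous property is not as immediate, since orthogonality in the $\ell_2$ sense does not directly control $\|\hat\wb_{k+1}\|_q$, and one needs a separate geometric argument tailored to the $\ell_q$ norm. A secondary (more routine) obstacle is obtaining a clean $O(d/\sigma^2)$ bound on $\EE\|\bg_k\|_2^2$, which requires exploiting both the bounded derivative of the sigmoidal loss and the isotropy of $\cD_x$ uniformly across $p\in[2,\infty]$.
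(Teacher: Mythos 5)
Your proposal follows essentially the same strategy as the paper's proof: a one-step descent inequality on $\|\wb_k-\wb^*\|_2^2$ that exploits $\wb_k^\top\bg_k=0$ and the non-expansiveness of the projection, an $O(d/\sigma^2)$ bound on the conditional second moment of the stochastic gradient, and a hitting-time argument concluded with Markov's inequality; your supermartingale / optional-stopping bookkeeping is just the martingale-theoretic phrasing of the paper's telescoping computation with the indicators $\ind(\mathfrak{E}_k)$. The one point worth flagging: the projection non-expansiveness concern you raise for $q\neq 2$ is not merely a formality you forgot to check --- the paper itself asserts that $\hat\wb_{k+1}$ leaves the unit $\ell_q$ ball ``since the gradient is orthogonal to $\wb_k$,'' but $\ell_2$-orthogonality to $\wb_k$ only guarantees $\|\hat\wb_{k+1}\|_2\ge 1$; the relevant tangent direction of the $\ell_q$ ball at $\wb_k$ is determined by $\bar\wb_k$ (with $\bar w_j=|w_j|^{q-1}\mathrm{sgn}(w_j)$), and $\bar\wb_k^\top\bg_k$ is not zero in general, so the asserted fact does not follow for $q\in[1,2)$. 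Both your plan and the paper's proof would need an additional argument here (e.g., directly verifying $\|\hat\wb_{k+1}\|_q\ge 1$, or bounding the extra expansion from projecting a point inside the ball onto the $\ell_q$ sphere).
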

\begin{proof}
[Proof of Lemma \ref{lemma:convergence_pssat}]
We focus on the quantity $\|\wb_k-\wb^*\|_2^2$. In particular, note that the $\ell_q$ ball is a convex set and the gradient $\nabla \ell\big(y_k\wb_k^\top (\xb_i+\bdelta_i^{(k)})/\|\wb_k\|_q)$ is orthogonal to $\wb_k$, we must have $\hat\wb_{k+1}$ stays outside the unit $\ell_q$ ball since  $\|\wb_k\|_q=1$. This further implies that $\wb_{k+1}$ is also the projection of $\hat\wb_{k+1}$ onto the unit $\ell_q$ ball. Then we have
\begin{align*}
\|\wb_{k+1}-\wb^*\|_2^2&\le \|\hat\wb_{k+1}-\wb^*\|_2^2\notag\\
& = \|\wb_k-\wb^*\|_2^2 - 2\eta\bigg\la\wb_k - \wb^*, \nabla \ell\bigg(\frac{y_k\wb_k^\top (\xb_k+\deb_k)}{\|\wb_k\|_q}\bigg)\bigg\ra + \eta^2\bigg\|\nabla \ell\bigg(\frac{y_k\wb_k^\top (\xb_k+\deb_k)}{\|\wb_k\|_q}\bigg)\bigg\|_2^2\notag\\
& = \|\wb_{t}-\wb^*\|_2^2 + 2\eta \wb^{*\top} \nabla \ell\bigg(\frac{y_k\wb_k^\top (\xb_k+\deb_k)}{\|\wb_k\|_q}\bigg) + \eta^2\bigg\|\nabla \ell\bigg(\frac{y_k\wb_k^\top (\xb_k+\deb_k)}{\|\wb_k\|_q}\bigg)\bigg\|_2^2\notag,
\end{align*}
where in the second equality we use the fact that the gradient $\nabla \ell\big(y_k\wb_k^\top (\xb_k+\deb_k)/\|\wb_k\|_q\bigg)$ is orthogonal to $\wb_k$. Then taking expectation over $(\xb_k,y_k)$ conditioned on $\wb_k$, we have \begin{align}\label{eq:recursion_dist}
\EE\big[\|\wb_{k+1}-\wb^*\|_2^2|\wb_k] =  \|\wb_k-\wb^*\|_2^2 + 2\eta \wb^{*\top} \nabla L_{\cD}^{p,r}(\wb_k) + \eta^2\EE\bigg[\bigg\|\nabla \ell\bigg(\frac{y_k\wb_k^\top (\xb_k+\deb_k)}{\|\wb_k\|_q}\bigg)\bigg\|_2^2\bigg|\wb_k\bigg].
\end{align}
Recall that 
\begin{align*}
\nabla \ell\bigg(\frac{y\wb^\top (\xb+\deb)}{\|\wb\|_q}\bigg) &= \ell'(yh(\wb,\xb+\deb))\cdot y\cdot \nabla_{\wb}h(\wb,\xb+\deb)\notag\\
& = \frac{e^{-|y\wb^\top\xb/\|\wb\|_q-r|/\sigma}}{\sigma}\cdot y\cdot \bigg(\Ib-\frac{\bar\wb\wb^\top}{\|\wb\|_q^q}\bigg)\frac{\xb+\deb}{\|\wb\|_q}.
\end{align*}
If $\|\wb\|_q = 1$, we have
\begin{align*}
\bigg\|\nabla \ell\bigg(\frac{y\wb^\top (\xb+\deb)}{\|\wb\|_q}\bigg)\bigg\|_2\le \frac{\big\|(\Ib-\bar\wb\wb^\top)(\xb+\deb)\big\|_2}{\sigma}\le \frac{2\|\xb+\deb\|_2}{\sigma}\le \frac{2(\|\xb\|_2 + \|\bdelta\|_2)}{\sigma},
\end{align*}
where the first inequality is due to $e^{-|y\wb^\top\xb/\|\wb\|_q-r|}/\le 1$ and the second inequality is due to $\|\bar\wb\wb^\top\|_2=|\bar\wb^\top\wb|=1$. Since $p\ge 2$, we have $\|\bdelta\|_2\le d^{1/2-1/p} r$. Thus it holds that
\begin{align*}
\bigg\|\nabla \ell\bigg(\frac{y\wb^\top (\xb+\deb)}{\|\wb\|_q}\bigg)\bigg\|_2\le \frac{2(\|\xb\|_2 + d^{1/2-1/p} r)}{\sigma}.
\end{align*}
Further note that the covariance matrix of $\xb$ is the identity matrix, based on Young's inequality and the assumption that $r\le 1$, we have
\begin{align*}
\EE\bigg[\bigg\|\nabla \ell\bigg(\frac{y_k\wb_k^\top (\xb_k+\deb_k)}{\|\wb_k\|_q}\bigg)\bigg\|_2^2\bigg|\wb_k\bigg]\le \frac{8}{\sigma^2}\EE_{\xb\in\cD}[\|\xb\|_2^2+d^{1-2/p} r^2] = \frac{8}{\sigma^2}(d+d^{1-2/p}r^2)\le \frac{16d}{\sigma^2}.
\end{align*}
Plugging the above inequality into \eqref{eq:recursion_dist} gives
\begin{align}\label{eq:recursion_dist_use_gradbound}
\EE\big[\|\wb_{k+1}-\wb^*\|_2^2|\wb_k] \le  \|\wb_k-\wb^*\|_2^2 + 2\eta \wb^{*\top} \nabla L_{\cD}^{p,r}(\wb_k) + \frac{16\eta^2 d}{\sigma^2}. 
\end{align}
Note that if $\wb_k\in\cG$ we have $\wb^{*\top} \nabla L_{\cD}^{p,r}(\wb_k)\le -\epsilon$. 
% \begin{align*}
% \wb^{*\top} \nabla L_{\cD}^{p,r}(\wb_k)\le -\frac{R^2\sin\theta}{4\pi U}\le -\frac{R^2\sin\theta^*}{4\pi U}.
% \end{align*}
Then we can denote $\mathfrak{E}_k$ as the event that  $\wb_s\in\cG$ for all $s\le k$, which further leads to $L_{\cD}^{p,r}(\wb_k)\cdot\ind(\mathfrak{E}_k)\le -\epsilon \ind(\mathfrak{E}_k)$. Thus multiply by $\ind(\mathfrak{E}_k)$ on both sides of \eqref{eq:recursion_dist_use_gradbound} gives us
% Then define the filtration $\cF_k = \{\wb_1, \dots,\wb_1,\mathfrak{E}_k\}$, we have
\begin{align*}
\EE\big[\|\wb_{k+1}-\wb^*\|_2^2\cdot\ind(\mathfrak{E}_k)|\wb_k] \le  \|\wb_k-\wb^*\|_2^2\cdot\ind(\mathfrak{E}_k) - \eta\epsilon\cdot\ind(\mathfrak{E}_k) + \frac{16\eta^2 d}{\sigma^2}.
\end{align*}
Note that $\mathfrak{E}_{k+1}\subset\mathfrak{E}_k$, we have $\ind(\mathfrak{E}_{k+1})\le \ind(\mathfrak{E}_k)$, which implies that
\begin{align*}
\EE\big[\|\wb_{k+1}-\wb^*\|_2^2\cdot\ind(\mathfrak{E}_{k+1})|\wb_k] \le  \|\wb_k-\wb^*\|_2^2\cdot\ind(\mathfrak{E}_k) - \eta\epsilon\cdot\ind(\mathfrak{E}_k) + \frac{16\eta^2 d}{\sigma^2}.
\end{align*}
Therefore, taking a total expectation and applying summation from $k=0$ to $k=K-1$, we can get
\begin{align*}
\EE\big[\|\wb_{K}-\wb^*\|_2^2\cdot\ind(\mathfrak{E}_{K})] \le \|\wb_1-\wb^*\|_2^2\cdot\ind(\mathfrak{E}_{K}) - \eta\epsilon\cdot \sum_{s=1}^{K}\EE[\ind(\mathfrak{E}_K)] + \frac{16K\eta^2 d}{\sigma^2}.
\end{align*}
Dividing by $K$ on both sides and rearranging terms, we obtain
\begin{align*}
\frac{1}{K}\sum_{s=1}^{K}\EE[\ind(\mathfrak{E}_k)]\le \frac{1}{\epsilon}\cdot\bigg(\frac{\|\wb_1-\wb^*\|_2^2}{K\eta} + \frac{16\eta d}{\sigma^2}\bigg).
\end{align*}
Then we can set 
\begin{align*}
\eta = \frac{\epsilon\delta \sigma^2}{32d},\quad \mbox{and}\quad K = \frac{64 d\|\wb_1-\wb^*\|_2^2}{\sigma^2\delta^2\epsilon^2 }
\end{align*}
such that
\begin{align*}
\frac{1}{K}\sum_{s=1}^K\EE[\ind(\mathfrak{E}_K)]\le \delta.
\end{align*}
Then by Markov inequality we have with probability at least $1-\delta$,
\begin{align*}
\frac{1}{K}\sum_{s=1}^K\ind(\mathfrak{E}_K)< 1.
\end{align*}
Therefore we immediately have $\ind(\mathfrak{E}_K)< 1$ since $\{\ind(\mathfrak{E}_k)\}_{k=1,\dots,K}$ is non-increasing, which implying that there exists a $k^*\le K$ such that  $\wb_{k^*}\in\cG^c$.

\end{proof}

\begin{proof}[Proof of Lemma \ref{lemma:convergence_guarantee_psat}]
By Lemma \ref{lemma:lowerbound_theta'} we can get
\begin{align*}
\sin(\theta'(\wb))\ge\left\{
\begin{array}{ll}
    \frac{1}{10\|\wb\|_2d^{1/p-1/2}} & \|\wb\|_2\ge \|\wb^*\|_2 \\
    \frac{\sqrt{2}}{2} & \|\wb\|_2<\|\wb^*\|_2 
\end{array}
\right.
\end{align*}
Note that we have
\begin{align*}
r= O(d^{\frac{3}{2p}-\frac{3}{4}})\le O\big(\|\wb\|_2\sin^{3/2}(\theta'(\wb))\big)
\end{align*}
and
\begin{align*}
\err_{\cD}(\wb^*) = O(r d^{2/p-1}) \le O\big(r\|\wb\|_2^{-1}\sin^{2}(\theta'(\wb))\big)
\end{align*}
since $\|\wb\|_2\le 1$. Therefore all conditions in Lemma \ref{lemma:upperbound_opt_grad_product_main} can be satisfied and its argument can be applied. In particular, we have if
\begin{align*}
\sin(\theta(\wb)) \ge \left\{
\begin{array}{ll}
   \max\Big\{\frac{4r}{R\|\wb\|_2}, \frac{400Urd^{1/(2p)-1/4}}{R\|\wb\|_2^{1/2} }\Big\},  & \|\wb\|_2\ge \|\wb^*\|_2  \\
    \max\Big\{\frac{4r}{R\|\wb\|_2}, \frac{200\sqrt{2}Ur}{R\|\wb\|_2 }\Big\} & \|\wb\|_2<\|\wb^*\|_2.
\end{array}
\right.
\end{align*}
we have the following upper bound on the inner product $\wb^{*\top} \nabla L_{\cD}^{p,r}(\wb)$
\begin{align*}
\wb^{*\top} \nabla L_{\cD}^{p,r}(\wb)&\le -\frac{U'R^2\|\tilde\wb\|_2\sin(\theta(\wb)') e^{-1}}{2\|\wb\|_2} \notag\\
&= -\frac{U'R^2 e^{-1}\|\wb^*\|_2\sin(\theta(\wb))}{2\|\wb\|_2},  
\end{align*}
where the equality is by Sine rule. Then by Lemma \ref{lemma:convergence_pssat} we can set the step size as $\eta = O\big(\delta r^3 d^{\frac{1}{2p}-\frac{1}{4}}\big)$, then with probability at least $1-\delta$, the algorithm $\mathsf{PSAT}(p, r)$ can find a model $\wb_{k^*}$ such that 
\begin{align}\label{eq:condition_wk*}
\sin(\theta(\wb_{k^*})) \le \left\{
\begin{array}{ll}
   \max\Big\{\frac{4r}{R\|\wb\|_2}, \frac{400Urd^{1/(2p)-1/4}}{R\|\wb\|_2^{1/2} }\Big\},  & \|\wb\|_2\ge \|\wb^*\|_2  \\
    \max\Big\{\frac{4r}{R\|\wb\|_2}, \frac{200\sqrt{2}Ur}{R\|\wb\|_2 }\Big\} & \|\wb\|_2<\|\wb^*\|_2.
\end{array}
\right.
\end{align}
within $K=O\big(d\|\wb_1-\wb^*\|_2^2\delta^{-2}r^{-4}d^{\frac{1}{2}-\frac{1}{p}}\big)$ iterations.

\end{proof}

\begin{figure}
\centering
\includegraphics[width=0.3\textwidth]{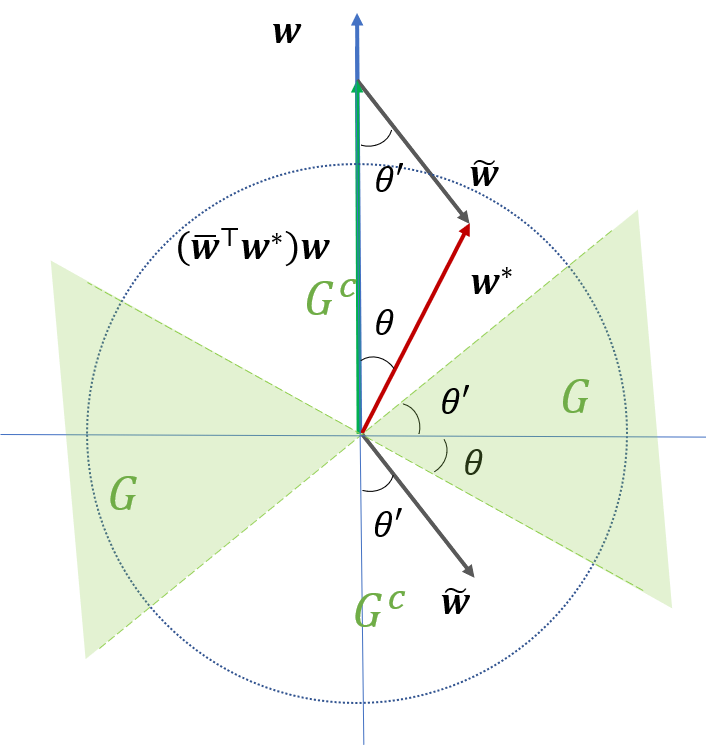}
\caption{Illustration of the set $G$ and $G^c$. \label{fig:diagram}}
\end{figure}
\subsection{Proof of Theorem \ref{thm:guarantee_normalized_classifier}}\label{sec:proof_theorem_normalized}
\begin{proof}[Proof of Theorem \ref{thm:guarantee_normalized_classifier}]

We are going to show that the condition \eqref{eq:condition_wk*} can imply the desired bound on $\err_{\cD}^{p,r}(\wb_{k^*})$.
Note that the optimal robust error can be written as
\begin{align*}
\OPT_{p,r} = \EE_{(\xb,y)\sim\cD}\big[\ind(y\wb^{*\top}\xb\le r)\big]. 
\end{align*}
Note that the robust error consists of two classes of data: (1) the data satisfies  $|\wb^{*\top}\xb|\le r$; and (2) the data satisfies  $|\wb^{*\top}\xb|> r$ and $y\wb^{*\top}\xb<0$. Therefore, we can get lower and upper bounds on the optimal robust error $\OPT_{p,r}$ as follows,
\begin{align}\label{eq:bounds_err_w*}
\opt_{p,r}&\ge\EE_{\xb\sim\cD_x}\big[\ind(|\wb^{*\top}\xb|\le r)\big]\notag\\
\opt_{p,r}&\le \EE_{\xb\sim\cD_x}\big[\ind(|\wb^{*\top}\xb|\le r)\big] + \err_{\cD}(\wb^*).
\end{align}
By Assumption \ref{assump:isotropic}, we have the data distribution $\cD_x$ satisfies $U$-anti-concentration and $(U',R)$ anti-anti-concentration with $U,R$ being constants. Therefore, it follows that $\EE_{\xb\sim\cD_x}\big[\ind(|\wb^{*\top}\xb|\le r)\big]=\Theta(r\|\wb^*\|_2^{-1})$ since we have $r=O(d^{\frac{3}{2p}-\frac{3}{4}})\le Rd^{1/p-1/2}$.
Besides, note that we also have $\err_\cD(\wb^*)=O(rd^{2/p-1})\le O(r\|\wb^*\|_2^{-1})$ due to our assumption. Therefore, it is clear that $\OPT_{p,r} = \Theta(r\|\wb^*\|_2^{-1})$. Moreover, regarding $\wb_{k^*}$ we can similarly get that
\begin{align}\label{eq:bounds_err_wk}
\err_\cD^{p,r}(\wb_{k^*})\le \EE_{\xb\sim\cD_x}\big[\ind(|\wb_{k^*}^{\top}\xb|\le r)\big] + \err_{\cD}(\wb_{k^*}).
\end{align}
Clearly due to Assumption \ref{assump:isotropic} we have $\EE_{\xb\sim\cD_x}\big[\ind(|\wb_{k^*}^{\top}\xb|\le r)\big] = \Theta(r\|\wb_{k^*}\|_2^{-1})$. Additionally, we have the following regarding $|\err_\cD(\wb_{k^*}) - \err_\cD(\wb^*)|$,
\begin{align}\label{eq:upperbound_error_diff_w_w*}
|\err_\cD(\wb_{k^*}) - \err_\cD(\wb^*)| &= \EE[|\ind(y\wb_{k^*}^\top\xb\le 0) - \ind(y\wb^{*\top}\xb\le0)|]\notag\\
& = \EE[|[\ind(y\wb_{k^*}^\top\xb\le 0) - \ind(y\wb^{*\top}\xb\le0)]\cdot[\ind(y\wb^{*\top}\xb\le0) + \ind(y\wb^{*\top}\xb\ge0)]|]\notag\\
&\le \EE[|[\ind(y\wb_{k^*}^\top\xb\le 0) - \ind(y\wb^{*\top}\xb\le0)]\cdot\ind(y\wb^{*\top}\xb\le0)|]\notag\\
&\qquad + \EE[|[\ind(y\wb_{k^*}^\top\xb\le 0) - \ind(y\wb^{*\top}\xb\le0)]\cdot \ind(y\wb^{*\top}\xb\ge0)|].
\end{align}
where the we use the triangle inequality in the last line. Moreover, note that
\begin{align}\label{eq:upperbound_error_diff_w_w*_term1}
\EE[|[\ind(y\wb_{k^*}^\top\xb\le 0) - \ind(y\wb^{*\top}\xb\le0)]\cdot\ind(y\wb^{*\top}\xb\le0)|]\le \EE[\ind(y\wb^{*\top}\xb\le0)] = \err_{\cD}(\wb^*),
\end{align}
and
\begin{align}\label{eq:upperbound_error_diff_w_w*_term2}
\EE[|[\ind(y\wb_{k^*}^\top\xb\le 0) - \ind(y\wb^{*\top}\xb\le0)]\cdot \ind(y\wb^{*\top}\xb\ge0)|] &= \EE[\ind(y\wb_{k^*}^\top\xb\le 0) \cdot \ind(y\wb^{*\top}\xb\ge0)]\notag\\
& = \EE[\ind(\text{sgn}(\wb_{k^*}^\top\xb)\neq\text{sgn}(\wb_{k^*}^\top\xb))].
\end{align}
By Claim 3.4 in \citet{diakonikolas2020nonconvex} we have
\begin{align}\label{eq:upperbound_difference_w_w*}
\EE[\ind(\text{sgn}(\wb_{k^*}^\top\xb)\neq\text{sgn}(\wb^{*\top}\xb))]=\Theta(\angle(\wb_{k^*}, \wb^*)) = O(\theta^*) =  O(rd^{1/(2p)-1/4}\|\wb_{k^*}\|_2^{-1/2}).
\end{align}
Using \eqref{eq:upperbound_difference_w_w*} and combining the bounds in \eqref{eq:upperbound_error_diff_w_w*_term1} and \eqref{eq:upperbound_error_diff_w_w*_term2}, the following holds by \eqref{eq:upperbound_error_diff_w_w*}
\begin{align}\label{eq:bound_clean_err_w}
\err_\cD(\wb_{k^*})\le \err_\cD(\wb^*)+|\err_\cD(\wb_{k^*}) - \err_\cD(\wb^*)|= O(rd^{1/(2p)-1/4}\|\wb_{k^*}\|_2^{-1/2}),
\end{align}
where the first inequality is due to triangle inequality and in the equality we use the fact that $\err_\cD(\wb^*) = O(rd^{2/p-1})$ and $\|\wb_{k^*}\|_2\ge d^{1/p-1/2}$. 

Therefore, it remains to lower bound the norm of $\|\wb_{k^*}\|_2$. Note that we only need to consider the case that $\|\wb_{k^*}\|_2< \|\wb^*\|_2$ since otherwise we can directly use $\|\wb^*\|_2$ as a lower bound of $\|\wb_{k^*}\|_2$.

When $\|\wb_{k^*}\|_2< \|\wb^*\|_2$, by \eqref{eq:condition_wk*} we can get that $\theta=\Theta(r\|\wb_{k^*}\|_2^{-1})$. Then we define $\wb'=\wb_{k^*} \|\wb^*\|_2/\|\wb_{k^*}\|_2$. 
% Then we can see that lower bounding the norm $\|\wb_{k^*}\|_2$ is equivalent to upper bound the norm $\|\wb'-\wb_{k^*}\|_2$ since $\wb'$ and $\wb_{k^*}$ are parallel and $\|\wb'\|_2 =\|\wb^*\|_2\ge \|\wb_{k^*}\|_2$. Besides, 
Then we can see that $\wb'$ and $\wb_{k^*}$ are parallel, which implies that
\begin{align*}
\frac{\|\wb'\|_2}{\|\wb_{k^*}\|_2} = \frac{\|\wb'\|_q}{\|\wb_{k^*}\|_q} = \|\wb'\|_q,
\end{align*}
where the second equality is due to $\|\wb_{k^*}\|_q=1$. Note that $\|\wb^*\|_q=1$, by triangle inequality we have
\begin{align*}
\|\wb'\|_q\le \|\wb^*\|_q + \|\wb^*-\wb'\|_q \le 1 + \|\wb^*-\wb'\|_2 d^{1/p-1/2}\le 1 + \|\wb^*\|_2\theta d^{1/p-1/2},
\end{align*}
where the second inequality holds since $\|\zb\|_q\le \|\zb\|_2 d^{1/2-1/p}$ (where we use the fact that $1/p+1/q=1$) and the last inequality is due to $\|\wb^*\|_2 = \|\wb'\|_2$. Consequently, we can get
\begin{align}\label{eq:lowerbound_norm_wk1}
\|\wb_{k^*}\|_2 = \frac{\|\wb'\|_2}{\|\wb'\|_q}\ge \frac{\|\wb^*\|_2}{1 + \|\wb^*\|_2\theta d^{1/2-1/p}}.
\end{align}
Note that we have $\theta^* = \Theta(r\|\wb_{k^*}\|_2^{-1})$, we immediately have $\theta = O(r\|\wb_{k^*}\|_2^{-1})$. Note that the perturbation level satisfies $r=O(d^{\frac{3}{2p}-\frac{3}{4}}) \le c\cdot d^{1/2-1/p}$ for some sufficiently small constant $c$, we can get $\theta d^{1/p-1/2}\le 0.5 \|\wb_{k^*}\|_2^{-1}$.  Plugging this into \eqref{eq:lowerbound_norm_wk1} and use the fact that gives
\begin{align*}
\|\wb_{k^*}\|_2\ge \frac{\|\wb^*\|_2}{1+0.5\|\wb^*\|_2/\|\wb_{k^*}\|_2 },
\end{align*}
which implies that $\|\wb_{k^*}\|_2\ge 0.5\|\wb^*\|_2$.
Plugging this into \eqref{eq:bound_clean_err_w} further gives
\begin{align*}
\err_\cD(\wb_{k^*}) = O(rd^{1/(2p)-1/4}\|\wb^*\|_2^{-1/2}).
\end{align*}
Note that we also have $\EE_{\xb\sim\cD_x}\big[\ind(|\wb_{k^*}^{\top}\xb|\le r)\big] =\Theta(r\|\wb_{k^*}\|_2^{-1}) = \Theta(r\|\wb_*\|_2^{-1})$. Combining these two bounds into \eqref{eq:bounds_err_wk} we can get the robust error for $\wb_{k^*}$ as follows,
\begin{align*}
\err_\cD^{p,r}(\wb_{k^*})\le \EE_{\xb\sim\cD_x}\big[\ind(|\wb_{k^*}^{\top}\xb|\le r)\big] + \err_{\cD}(\wb_{k^*}) = O(rd^{1/(2p)-1/4}\|\wb^*\|_2^{-1/2})
\end{align*}
where we use the fact that $\|\wb^*\|_2\ge d^{1/p-1/2}$.
Applying the fact that $\OPT_{p,r} = \Theta(r\|\wb^*\|_2^{-1})$ further gives
\begin{align*}
\err_\cD^{p,r}(\wb_{k^*}) = O\big(d^{1/(2p)-1/4}\|\wb_{k^*}\|_2^{1/2}\OPT_{p,r}\big)
\end{align*}
which completes the proof.

% Note that $\angle(\wb_{k^*},\wb^*)=\arcsin(4r/R)=\Theta(r)$, we further have 
% \begin{align*}
% |\err_\cD(\wb_{k^*}) - \err_\cD(\wb^*)| = \Theta(r).
% \end{align*}
% Consequently we obtain that
% \begin{align*}
% \err_\cD^{2,r}(\wb_{k^*})\le \OPT_{2,r}+\err_\cD(\wb_{k^*}) \le \OPT_{2,r} + \err_\cD(\wb^*) + |\err_\cD(\wb_{k^*}) - \err_\cD(\wb^*)| =\Theta(r).
% \end{align*}
% Note that we have already shown that $\OPT_{2,r}$, the above inequality further implies that $\err_\cD^{2,r}(\wb_{k^*}) = O(\OPT_{2,r})$, which completes the proof.
\end{proof}

\bibliography{deeplearningreference}

\end{document}